\def\isfinal{1} 
\def\useneurips{1} 
\renewcommand{\abs}[1]{\left|#1\right|}
\newcommand*{\prn}[1]{\left(#1\right)}
\newcommand*{\brx}[1]{\left[#1\right]}
\renewcommand{\set}[1]{\left\{#1\right\}}
\newcommand*{\defeq}{\coloneqq} 
\newcommand*{\eqdef}{\eqqcolon} 
\newcommand*{\x}{\mathbf{x}} 
\newcommand*{\av}{\mathbf{a}}
\newcommand*{\bv}{\mathbf{b}}
\newcommand*{\cv}{\mathbf{c}}
\newcommand*{\dimGen}{d}
\newcommand*{\prnNotEmpty}[1]{\ifthenelse{\isempty{#1}}
    {}
    {\prn{#1}}
}
\newcommand*{\reals}{\mathbb{R}}
\DeclareMathOperator*{\argmax}{arg\,max}
\newcommand*{\unitvec}[1][s]{\mathbf{e}_{#1}}
\newcommand*{\permute}[1][\phi]{\mathbf{P}_{#1}} 
\newcommand*{\upsc}[1]{\text{\upshape\textsc{#1}}} 
\newcommand*{\mdp}[1][m]{{\textsc{#1dp}}}
\newcommand*{\rsd}[1][r]{{\textsc{#1sd}}}
\newcommand*{\rl}[1][r]{{\textsc{#1l}}}
\NewDocumentCommand{\iid}{}{\textsc{iid}}
\newcommand*{\St}{\mathcal{S}}
\newcommand*{\A}{\mathcal{A}}
\newcommand*{\rf}{\mathbf{r}} 
\newcommand*{\uf}{\mathbf{u}} 
\newcommand*{\rewardVS}{\reals^{\abs{\St}}} 
\newcommand*{\genVS}{\reals^{\dimGen}}
\newcommand{\stateEnd}{1-cycle}
\newcommand*{\terminal}{terminal}
\NewDocumentCommand{\Vf}{O{*}O{R}m}{V^{#1}_{#2} \prn{#3}} 
\NewDocumentCommand{\VfNoResize}{O{*}O{R}m}{V^{#1}_{#2} (#3)} 
\NewDocumentCommand{\VfNorm}{O{*}O{R}m}{\Vf[#1][#2,\,\text{norm}]{#3}}
\NewDocumentCommand{\OptVf}{O{R}m}{\Vf[*][#1]{#2}}
\NewDocumentCommand{\piSet}{O{}O{R,\gamma}}{\Pi^{#1}\prn{#2}}
\newcommand*{\average}[1][R]{\piSet[\text{avg}][#1]}
\newcommand*{\Dist}{X} 
\newcommand*{\D}{\mathcal{D}} 
\DeclareMathOperator{\optSupp}{supp}
\NewDocumentCommand{\supp}{O{\D}}{\optSupp(#1)}
\NewDocumentCommand{\geqMost}{O{}O{\DSetAny}}{\geq_{\text{\upshape {most}}\ifthenelse{\isempty{#2}}{}{\text{\upshape: }#2}}^{#1}}
\NewDocumentCommand{\leqMost}{O{}O{\DSetAny}}{\leq_{\text{\upshape {most}}\ifthenelse{\isempty{#2}}{}{\text{\upshape: }#2}}^{#1}}
\newcommand*{\distSet}{\mathfrak{D}}
\NewDocumentCommand{\Diid}{O{\Dist}}{\D_{#1\text{-}\iid}}
\newcommand*{\Dany}{\D_{\text{any}}}
\newcommand*{\DSetAny}{\distSet_\text{any}}
\newcommand*{\Dbd}{\D_{\text{bound}}} 
\newcommand*{\DSetBd}{\distSet_\text{bound}}
\NewDocumentCommand{\vavg}{O{s,\gamma}O{\Dbd}}{\OptVf[#2]{#1}}
\NewDocumentCommand{\vavgNoResize}{O{s,\gamma}O{\Dbd}}{V^*_{#2}(#1)} 
\NewDocumentCommand{\pwrNoDist}{}{\upsc{Power}} 
\NewDocumentCommand{\pwr}{O{}O{\Dbd}}{\upsc{Power}_{#2} \prnNotEmpty{#1}}
\NewDocumentCommand{\pwrNoResize}{O{}O{\Dbd}}{\upsc{Power}_{#2} (#1)}
\DeclareMathOperator*{\Prb}{\mathbb{P}}
\newcommand*{\prob}[2][]{\Prb_{#1}\prn{#2}}
\NewDocumentCommand{\optprob}{O{\D}O{}m}{\noexpandarg\exploregroups\Prb\IfSubStr{\sim}{#1}{}{\nolimits}_{#1}^{#2}\prn{#3}}
\NewDocumentCommand{\avgprob}{O{\D}m}{\optprob[#1]{#2, \text{\upshape average}}}
\NewDocumentCommand{\greedyprob}{O{\D}m}{\optprob[#1]{#2, \text{\upshape greedy}}}
\newcommand*{\quantDist}{P}
\DeclareMathOperator*{\opE}{\mathbb{E}}
\newcommand*{\E}[2]{\opE_{#1}\brx{#2}} 
\NewDocumentCommand{\Edraws}{O{\Dist}m}{\opE\brx{\max \text{ of } #2 \text{ draws from } #1}}
\NewDocumentCommand{\dF}{O{\rf}O{F}}{\dif #2(#1)}
\newcommand*{\f}{\mathbf{f}} 
\newcommand*{\fpi}[2][\pi]{\f^{
\ifthenelse{\isempty{#1}}{}{#1} 
\ifthenelse{\NOT\isempty{#1} \AND \NOT\isempty{#2}}{,}{} 
\ifthenelse{\isempty{#2}}{}{#2}}} 
\DeclareMathOperator{\Fop}{\mathcal{F}}
\NewDocumentCommand{\F}{O{}}{\Fop\prnNotEmpty{#1}}
\newcommand*{\ND}[1]{\upsc{ND}\prn{#1}}
\NewDocumentCommand{\phelper}{mO{\D'}}{p_{#2}\prn{#1}}
\DeclareMathOperator{\Fndop}{\mathcal{F}_{nd}} 
\NewDocumentCommand{\Fnd}{O{}}{\Fndop\prnNotEmpty{#1}}
\newcommand*{\dbf}{\mathbf{d}} 
\newcommand*{\RSD}[1][s]{\upsc{RSD}\prn{#1}} 
\newcommand*{\RSDnd}[1][s]{\upsc{RSD}{\text{\upshape\textsubscript{nd}}}\prn{#1}}
\newcommand*\colvec[1]{
        \global\colveccount#1
        \protect\begin{pmatrix}
        \colvecnext
}
\def\colvecnext#1{
        #1
        \global\advance\colveccount-1
        \ifnum\colveccount>0
                \\
                \expandafter\colvecnext
        \else
                \protect\end{pmatrix}
        \fi
}
\newcommand*{\pol}[1][R,\gamma]{\text{pol}\prnNotEmpty{#1}}
\NewDocumentCommand{\pwrPol}{O{\pol[]}O{\Dbd}m}{\pwrNoDist^{#1}_{#2} \prnNotEmpty{#3}}
\newcommand*{\indic}[1]{\mathbbm{1}_{#1}} 
\newcommand*{\inv}{^{-1}}
\newcommand*{\genSym}{S_{\dimGen}}
\NewDocumentCommand{\orbi}{O{\Dbd}O{\abs{\St}}}{S_{#2}\cdot #1}
\definecolor{blue}{rgb}{.25,.45,.75}
\definecolor{purple}{rgb}{.4667,.1529,.2118}
\definecolor{green}{rgb}{.294,.624,.294}
\definecolor{red}{rgb}{.75,.25,.25}
\newtheorem*{cor-no-num}{Corollary}
\theoremstyle{definition}
\newtheorem*{remark}{Remark}
\newtheorem*{thm*}{Theorem} 
\newcommand*{\dauTemplate}[4][\D]{d_{#1}^{#3}\ifthenelse{\isempty{#2}}
    {}
    {\prn{#2 \mid #4}}}
\newcommand*{\sink}{\text{\emoji{video-game}}}
\newcommand*{\farleft}{\text{\emoji{palm-tree}}}
\newcommand*{\topright}{\text{\emoji{star}}}
\newcommand*{\farright}{\text{\emoji{sun}}}
\newcommand*{\sink}{\varnothing}
\newcommand*{\farleft}{\ell_{\swarrow}}
\newcommand*{\topright}{r_{\nearrow}}
\newcommand*{\farright}{r_{\searrow}}
\newcommand*{\leftA}{\texttt{left}}
\newcommand*{\rightA}{\texttt{right}}
\def\centerarc[#1](#2)(#3)(#4:#5:#6){ \draw[#1] (#2)++(#3) arc (#4:#5:#6); }
\NewDocumentCommand{\arcHelp}{O{0pt}O{0pt}O{.35cm}O{->}mm}{ 

\ifthenelse{\equal{#5}{N}}{\centerarc[#4](#6.east)(#1,#2)(-50:240:#3)}{}
\ifthenelse{\equal{#5}{NE}}{\centerarc[#4](#6.east)(-1pt++#1,.5pt++#2)(-95:200:#3)}{}
\ifthenelse{\equal{#5}{E}}{\centerarc[#4](#6.east)(#1,-5pt++#2)(-130:145:#3)}{}
\ifthenelse{\equal{#5}{SE}}{\centerarc[#4](#6.east)(-9pt++#1,-7.5pt++#2)(-180:110:#3)}{}
\ifthenelse{\equal{#5}{S}}{\centerarc[#4](#6.south)(7pt++#1,-1pt++#2)(40:-235:#3)}{}
\ifthenelse{\equal{#5}{SW}}{\centerarc[#4](#6.west)(8.5pt++#1,-8pt++#2)(0:-290:#3)}{}
\ifthenelse{\equal{#5}{W}}{\centerarc[#4](#6.west)(#1,-6pt++#2)(-35:-325:#3)}{}
\ifthenelse{\equal{#5}{NW}}{\centerarc[#4](#6.west)(-1pt++#1,3.5pt++#2)(250:-15:#3)}{}
}
\newcommand{\kemdash}{\kern0.01pt---\kern0.01pt}
\newcommand{\ai}{\textsc{ai}}
\newcommand*{\powGenVs}{\mathcal{P}\prn{\genVS}}
\newcommand*{\retarget}{\Theta} 
\newcommand*{\rtparam}{\theta} 
\newcommand*{\abDomain}{\mathbf{E}}
\newcommand*{\isOpt}[2]{\mathrm{IsOptimal}\prn{#1\mid #2}}
\newcommand*{\fracOpt}[1]{\mathrm{FracOptimal}\prn{#1}}
\newcommand*{\isAnti}[2]{\mathrm{AntiOpt}\prn{#1\mid #2}}
\newcommand*{\boltz}[3][T]{\mathrm{Boltzmann}_{#1}\prn{#2\mid #3}}
\newcommand*{\satisfice}[3][t]{\mathrm{Satisfice}_{#1}\prn{#2 \mid #3}}
\newcommand*{\best}[1][k]{\textrm{best-of-}k}
\newcommand{\train}{\textrm{train}}
\newcommand*{\y}{\mathbf{y}}
\newcommand*{\Best}[1]{\mathrm{Best}\prn{#1}}
\newcommand*{\decide}{\mathrm{decide}}
\newcommand*{\stay}{O_\text{stay}}
\newcommand*{\leave}{O_\text{leave}}
\newcommand*{\validObs}{O_\text{$T$-reach}}
\newcommand*{\fstubborn}{p_{\text{stubborn}}}
\newcommand*{\fbandit}{p_{\text{bandit}}}
\newcommand*{\frand}{p_{\text{rand}}}
\newcommand*{\fmax}{p_{\text{max}}}
\newcommand*{\fnumeric}{p_{\text{numerical}}}
\newcommand*{\alg}{\mathrm{Alg}}
\newcommand*{\algprob}[1]{\prob[\substack{\pi\sim \decide(\rtparam),\\ \tau \sim \pi\mid \initMR}]{#1}}
\definecolor{Gray}{gray}{0.85}
\definecolor{LightGray}{gray}{.925}
\newcolumntype{G}{>{\columncolor{Gray}}c}
\newcolumntype{g}{>{\columncolor{LightGray}}c}
\newcolumntype{w}{>{\columncolor{white}}c}
\newcommand{\inlinesprite}[2]{
  \begingroup\normalfont\hspace{0em}
  \includegraphics[height=0.65em, clip, trim={#1}]{quantitative/assets/sprites/#2.pdf}
  \endgroup
}
\newcommand*{\ghost}{\inlinesprite{0cm 0cm 0cm 0cm}{ghost}}
\newcommand*{\cherry}{\inlinesprite{0cm 0cm 0cm 0cm}{cherry}}
\newcommand*{\apple}{\inlinesprite{0cm 0cm 0cm 0cm}{apple}}
\newcommand*{\headers}{Utility function $\uf'$ &
    $\overset{\ghost}{10},\!\overset{\apple}{5}, \!\overset{\cherry}{0}$&
    $\overset{\ghost}{10},\!\overset{\apple}{0}, \!\overset{\cherry}{5}$&
    $\overset{\ghost}{5},\! \overset{\apple}{10},\!\overset{\cherry}{0}$&
    $\overset{\ghost}{5},\! \overset{\apple}{0}, \!\overset{\cherry}{10}$&
    $\overset{\ghost}{0},\! \overset{\apple}{10},\!\overset{\cherry}{5}$&
    $\overset{\ghost}{0},\! \overset{\apple}{5}, \!\overset{\cherry}{10}$
}
\newcommand*{\orbInside}[1][\rtparam]{\mathrm{Orbit}|_{\retarget}\prn{#1}}
\newcommand*{\orbInsideCond}[2][\rtparam]{\mathrm{Orbit}|_{\retarget,#2}\prn{#1}}
\newcommand*{\jump}{\texttt{jump}}
\newcommand*{\initMR}{s_0}
\newcommand*{\initMRObs}{o_0}
\newcommand*{\mr}{\textsc{mr}}
\newcommand*{\observe}{\mathcal{O}} 
\newcommand*{\featFn}{\textrm{feat}} 
\newcommand{\eref}[3]{#1~#2}
\begin{document}
\title{Parametrically Retargetable Decision-Makers Tend To Seek Power} 
\author{
Alexander Matt Turner, Prasad Tadepalli\\
Oregon State University\\
\texttt{\{turneale@, tadepall@eecs.\}oregonstate.edu} 
}

\maketitle 

\begin{abstract}
    If capable {\ai} agents are generally incentivized to seek power in service of the objectives we specify for them, then these systems will pose enormous risks, in addition to enormous benefits. In fully observable environments, most reward functions have an optimal policy which seeks power by keeping options open and staying alive \citep{turner_optimal_2020}. However, the real world is neither fully observable, nor must trained agents be even approximately reward-optimal. We consider a range of models of {\ai} decision-making, from optimal, to random, to choices informed by learning and interacting with an environment. We discover that many decision-making functions are \emph{retargetable}, and that retargetability is sufficient to cause power-seeking tendencies. Our functional criterion is simple and broad. We show that a range of qualitatively dissimilar decision-making procedures incentivize agents to seek power. We demonstrate the flexibility of our results by reasoning about learned policy incentives in Montezuma's Revenge. These results suggest a safety risk: Eventually, retargetable training procedures may train real-world agents which seek power over humans.
\end{abstract}

\section{Introduction}
\citet{bostrom_superintelligence_2014,russell_human_2019} argue that in the future, we may know how to train and deploy superintelligent {\ai} agents which capably optimize goals in the world. Furthermore, we would not want such agents to act against our interests by ensuring their own survival, by gaining resources, and by competing with humanity for control over the future.

\citet{turner_optimal_2020} show that most reward functions have optimal policies which seek power over the future, whether by staying alive or by keeping their options open. Some Markov decision processes ({\mdp}s) cause there to be \emph{more ways} for power-seeking to be optimal, than for it to not be optimal. Analogously, there are relatively few goals for which dying is a good idea. 

We show that a wide range of decision-making algorithms produce these power-seeking tendencies—they are not unique to reward maximizers. We develop a simple, broad criterion of functional retargetability (\cref{def:retargetFnMulti}) which is a sufficient condition for power-seeking tendencies. Crucially, these results allow us to reason about what decisions are incentivized by most algorithm parameter inputs, even when it is impractical to compute the agent's decisions for any given parameter input.

Useful ``general'' {\ai} agents could be directed to complete a range of tasks. However, we show that this flexibility can cause the {\ai} to have power-seeking tendencies. In \cref{sec:state-explain} and \cref{sec:formalize-retarget}, we discuss how a ``retargetability'' property creates statistical tendencies by which agents make similar decisions for a wide range of parameter settings for their decision-making algorithms. Basically, if a decision-making algorithm is retargetable, then for every configuration under which a decision-making algorithm does not choose to seek power, there exist several reconfigurations which do induce power-seeking. More formally, for every decision-making parameter setting $\rtparam$ which does not induce power-seeking, $n$-retargetability ensures we can injectively map $\rtparam$ to $n$ parameters $\rtparam'_1,\ldots,\rtparam'_n$ which \emph{do} induce power-seeking. 

Equipped with these results, \cref{sec:mr} works out agent incentives in the Montezuma's Revenge game. \Cref{sec:retargetability-implies} speculates that increasingly useful and impressive learning algorithms will be increasingly retargetable, and how retargetability can imply power-seeking tendencies. By this reasoning, increasingly powerful {\rl} techniques may (eventually) train increasingly competent real-world power-seeking agents. Such agents could be unaligned with human values \citep{russell_human_2019} and—we speculate—would take power from humanity.

\section{Statistical tendencies for a range of decision-making algorithms}\label{sec:state-explain}

\citet{turner_optimal_2020} consider the Pac-Man video game, in which an agent consumes pellets, navigates a maze, and avoids deadly ghosts (\Cref{fig:pacman-quant}). Instead of the usual score function, \citet{turner_optimal_2020} consider optimal action across a range of state-based reward functions. They show that most reward functions have an (average-)optimal policy which avoids immediate death in order to navigate to a future terminal state.\footnote{We use ``reward function'' somewhat loosely in implying that reward functions reasonably describe a trained agent's goals. \citet{rewardNotOpt} argues that capable {\rl} algorithms do not necessarily train policy networks which are best understood as optimizing the reward function itself. Rather, they point out that—especially in policy-gradient approaches—reward provides gradients to the network and thereby modifies the network's generalization properties, but doesn't ensure the agent generalizes to ``robustly optimizing reward'' off of the training distribution.}
\begin{figure}[h!]
    \centering
    \includegraphics{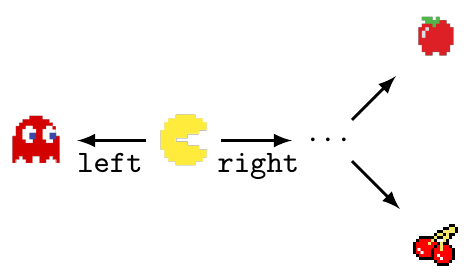}
    \caption{If Pac-Man goes $\leftA$, he dies to the ghost and ends up in the $\ghost$ outcome. If he goes $\rightA$, he can reach the $\apple$ and $\cherry$ terminal states.}
    \label{fig:pacman-quant}
\end{figure}

Our results show that optimality is not required. Instead, if the agent's decision-making is \emph{parametrically retargetable} from death to other outcomes, Pac-Man avoids the ghost under most decision-making parameter inputs. To build intuition about these notions, consider three outcomes (\ie{} terminal states): Immediate death to a nearby ghost, consuming a cherry, and consuming an apple. Let $A\defeq \set{\ghost}$ and $B\defeq \set{\apple, \cherry}$. For simplicity of exposition, we assume these are the three possible terminal states.

Suppose that in some fashion, the agent probabilistically decides on an outcome to induce. Let $p$ take as input a set of outcomes and return the probability that the agent selects one of those outcomes. For example, $p(\{\ghost\})$ is the probability that the agent selects $\ghost$, and $p(\set{\apple,\cherry})$ is the probability that the agent escapes the ghost and ends up in an apple or cherry terminal state. But this just amounts to a probability distribution over the terminal states. We want to examine how decision-making \emph{changes} as we swap out the parameter inputs to the agent's decision-making algorithm with decision-making parameter space $\retarget$. We then let $p(X\mid \rtparam)$ take as input a set of outcomes $X$ and a decision-making algorithm parameter setting $\rtparam\in\retarget$, and return the probability that the agent chooses an outcome in $X$.

We first consider agents which maximize terminal-state utility, following \citet{turner_optimal_2020} (in their language, ``average-reward optimality''). Suppose that the agent has a utility function parameter $\uf$
assigning a real number to each of the three outcomes. Then the relevant parameter space is the agent's utility function $\uf\in \retarget\defeq \reals^3$. $p_{\max}(A \mid \uf)$ indicates whether $\ghost$ has the most utility: $\uf(\ghost)\geq \max(\uf(\apple), \uf(\cherry))$. Consider the utility function $\uf$ in \Cref{tab:permute-states}. Since $\ghost$ has strictly maximal utility, the agent selects $\ghost$: $p_{\max}(A\mid \uf)=1>0=p_{\max}(B\mid \uf)$. 

However, most ``variants" of $\uf$ have an optimal policy which stays alive. That is, for every $\uf$ for which immediate death is optimal but immediate survival is not, we can swap the utility of \eg{} $\ghost$ and $\apple$ via permutation $\phi_{\ghost\leftrightarrow \apple}$ to produce a new utility function $\uf'\defeq \phi_{\ghost\leftrightarrow \apple}\cdot \uf$ for which staying alive ({\rightA}) is strictly optimal. The same kind of argumentation holds for $\phi_{\ghost\leftrightarrow \cherry}$. \Cref{tab:permute-states} suggests a counting argument. For every utility function $\uf$ for which $\ghost$ is optimal, there are two unique utility functions $\phi_1\cdot \uf, \phi_2\cdot \uf$ under which either $\apple$ or $\cherry$ is optimal.

\begin{table}[!h]\centering
    \begin{tabular}{rccc}
    \toprule
    Utility function                       & $\ghost$ & $\apple$ & $\cherry$ \\
    \midrule
    $\uf$                                            & $\mathbf{10}$      & $5$      & $0$      \\
    $\phi_{\ghost\leftrightarrow \apple}\cdot\uf$   & $5$       & $\mathbf{10}$     & $0$      \\
    $\phi_{\ghost\leftrightarrow \cherry}\cdot \uf$  & $0$       & $5$      & $\mathbf{10}$     \\
    \midrule
    $\uf'$                                           & $\mathbf{10}$      & $0$      & $5$      \\
    $\phi_{\ghost\leftrightarrow \apple}\cdot\uf'$  & $0$       & $\mathbf{10}$     & $5$      \\
    $\phi_{\ghost\leftrightarrow \cherry}\cdot \uf'$ & $5$       & $0$      & $\mathbf{10}$     \\
    \bottomrule
    \end{tabular}
    \vspace{8pt}
    \caption[The orbit of a utility function over game states]{The highest-utility outcome is bolded. Because $B$ contains more outcomes than $A$, most utility functions incentivize the agent to stay alive and therefore select a state from $B$. For every utility function $\uf$ or $\uf'$ which makes $\ghost$ strictly optimal, \emph{two} of its permuted variants make an outcome in $B\defeq\set{\apple,\cherry}$ strictly optimal. We permute $\uf$ by swapping the utility of $\ghost$ and the utility of $\apple$, using the permutation $\phi_{\ghost\leftrightarrow \apple}$. The expression ``$\phi_{\ghost\leftrightarrow \apple}\cdot\uf$'' denotes the permuted utility function.}
    \label{tab:permute-states}
\end{table}

In \cref{sec:formalize-retarget}, we will generalize this particular counting argument. \Cref{def:retargetFn} shows a functional condition (\emph{retargetability}) under which the agent decides to avoid the ghost, for most parameter inputs to the decision-making algorithm. Given this retargetability assumption, \cref{thm:retarget-decision} roughly shows that most $\rtparam\in\retarget$ induce $p(B\mid \rtparam)\geq p(A\mid\rtparam)$. First, consider two more retargetable decision-making functions:

\textbf{Uniformly randomly picking a terminal state.} $\frand$ ignores the reward function and assigns equal probability to each terminal state in Pac-Man's state space.

\textbf{Choosing an action based on a numerical parameter.} $\fnumeric$ takes as input a natural number $\rtparam\in\retarget\defeq \set{1,\ldots,6}$ and makes decisions as follows:
\begin{align}
  \fnumeric(A\mid \rtparam) \defeq \begin{cases}
        1 \quad \text{ if $\rtparam=1$},\\
        0 \quad \text{ otherwise.}
    \end{cases}\qquad \fnumeric(B\mid\rtparam)\defeq 1-\fnumeric(A\mid\rtparam).
\end{align}
In this situation, $\retarget$ is acted on by permutations over $6$ elements $\phi \in S_6$. Then $\fnumeric$ is retargetable from $A$ to $B$ via $\phi_k : 1 \leftrightarrow k, k\neq 1$.

$\fmax$, $\frand$, and $\fnumeric$ encode varying sensitivities to the utility function parameter input, and to the internal structure of the Pac-Man decision process. Nonetheless, they all are retargetable from $A$ to $B$. For an example of a \emph{non}-retargetable function, consider $\fstubborn(X\mid\rtparam)\defeq \indic{X=A}$ which returns $1$ for $A$ and $0$ otherwise.

However, we cannot explicitly define and evaluate more interesting functions, such as those defined by reinforcement learning training processes. For example, given that we provide such-and-such reward function in a fixed task environment, what is the probability that the learned policy will take action $a$? We will analyze such procedures in \cref{sec:mr}.

We now motivate the title of this work. For most parameter settings, retargetable decision-makers induce an element of the larger set of outcomes. Such decision-makers \emph{tend to} induce an element of a larger set of outcomes (with the ``tendency'' being taken across parameter settings). Consider that the larger set of outcomes $\set{\cherry, \apple}$ can only be induced if Pac-Man stays alive. Intuitively, navigating to this larger set is \emph{power-seeking} because the agent retains more optionality (\ie{} the agent can't do anything when dead). Therefore,  \emph{parametrically retargetable decision-makers tend to seek power}.

\section{Formal notions of retargetability and decision-making tendencies}\label{sec:formalize-retarget}
\Cref{sec:state-explain} informally illustrated parametric retargetability in the context of swapping which utilities are assigned to which outcomes in the Pac-Man video game. For many utility-based decision-making algorithms, swapping the utility assignments also swaps the agent's final decisions. For example, if death is anti-rational, and then death's utility is swapped with the cherry utility, then now the cherry is anti-rational. In this section, we formalize the notion of parametric retargetability and of ``most'' parameter inputs producing a given result. In \cref{sec:mr}, we will use these formal notions to reason about the behavior of {\rl}-trained policies in the Montezuma's Revenge video game.

To define our notion of ``retargeting'', we assume that $\retarget$ is a subset of a set acted on by symmetric group $\genSym$, which consists of all permutations on $\dimGen$ items (\eg{} in  the {\rl} setting, this might represent states or observations). A parameter $\rtparam$'s \emph{orbit} is the set of $\rtparam$'s permuted variants. For example, \Cref{tab:permute-states} lists the six orbit elements of the parameter $\uf$.

\begin{restatable}[Orbit of a parameter]{definition}{orbParam}
Let $\rtparam\in\retarget$. The \emph{orbit} of $\rtparam$ under the symmetric group $\genSym$ is $\genSym\cdot \rtparam \defeq \set{\phi\cdot \rtparam \mid \phi \in \genSym}$. Sometimes, $\retarget$ is not closed under permutation. In that case, the \emph{orbit inside $\retarget$} is $\orbInside\defeq \prn{\orbi[\rtparam][\dimGen]} \cap \retarget$.
\end{restatable}

Let $p(B\mid \rtparam)$ return the probability that the agent chooses an outcome in $B$ given $\rtparam$. To express ``$B$-outcomes are chosen instead of $A$-outcomes'', we write $p(B\mid \rtparam)>p(A\mid \rtparam)$. However, even ``retargetable'' decision-making functions (defined shortly) generally won't choose a $B$-outcome for \emph{every} input $\rtparam$. Instead, we consider the \emph{orbit-level tendencies} of such decision-makers, showing that for every parameter input $\rtparam\in\retarget$, most of $\rtparam$'s permutations push the decision towards $B$ instead of $A$.

\begin{restatable}[Inequalities which hold for most orbit elements]{definition}{ineqMostQuant}\label{def:ineq-most-dists-quant}
Suppose $\retarget$ is a subset of a set acted on by $\genSym$, the symmetric group on $\dimGen$ elements.
Let $f:\{A,B\}\times \retarget \to \reals$ and let $n\geq 1$. We write $f(B\mid\rtparam)\geqMost[n][\retarget] f(A\mid\rtparam)$ when, for \emph{all} $\rtparam\in \retarget$, the following cardinality inequality holds:
\begin{equation}
\abs{\set{\rtparam' \in\orbInside\mid f(B\mid\rtparam')>f(A\mid\rtparam')}}\geq n \abs{\set{\rtparam'\in\orbInside  \mid f(B\mid\rtparam')<f(A\mid\rtparam')}}.
\end{equation}
\end{restatable}
For example, \Cref{tab:permute-states} illustrates the tendency of $\uf$'s orbit to make $B\defeq\set{\apple,\cherry}$ optimal over $A\defeq\set{\ghost}$. \citet{turner_optimal_2020}'s \eref{definition}{6.5}{def:ineq-most-dists} is the special case of \cref{def:ineq-most-dists-quant} where $n=1$, $\dimGen=\abs{\St}$ (the number of states in the considered {\mdp}), and $\retarget\subseteq \Delta(\rewardVS)$.

As explored previously, $\frand$, $\fmax$, and $\fnumeric$ are retargetable: For all $\rtparam\in\retarget$ such that $\set{\ghost}$ is chosen over $\set{\apple,\cherry}$, we can permute $\rtparam$ to obtain $\phi\cdot \rtparam$ under which the opposite is true. More generally, we can consider retargetability from some set $A$ to some set $B$.\footnote{We often interpret $A$ and $B$ as probability-theoretic events, but no such structure is demanded by our results.}

\begin{restatable}[Simply-retargetable function]{definition}{retargetFn}\label{def:retargetFn}
Let $\retarget$ be a  set acted on by $\genSym$, and let $f:\{A,B\} \times \retarget \to \reals$. If $\exists \phi\in\genSym:\forall \rtparam^A \in \retarget: f(B \mid \rtparam^A) < f(A  \mid \rtparam^A)\implies f(A  \mid \phi\cdot \rtparam^A)< f(B \mid \phi\cdot \rtparam^A)$,
then $f$ is a \emph{$(\retarget, A\overset{\text{simple}}{\to} B)$-retargetable function}.
\end{restatable}

Simple retargetability suffices for most parameter inputs to $p$ to choose Pac-Man outcome set $B$ over $A$.\footnote{The function's retargetability is ``simple'' because we are not yet worrying about \eg{} which parameter inputs are considered plausible: Because $\genSym$ acts on $\retarget$, \cref{def:retargetFn} implicitly assumes $\retarget$ is closed under permutation.} In that case, $B$ cannot be retargeted back to $A$ because $\abs{B}=2>1=\abs{A}$. $\fmax$'s simple retargetability arises in part due to $B$ having more outcomes.

\begin{restatable}[Simply-retargetable functions have orbit-level tendencies]{prop}{retargetDecision}\label{thm:retarget-decision}\strut
\begin{center}
If $f$ is $(\retarget, A\overset{\text{simple}}{\to} B)$-retargetable, then $f(B  \mid \rtparam) \geqMost[1][\retarget] f(A  \mid \rtparam).$
\end{center}
\end{restatable}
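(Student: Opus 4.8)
The plan is to fix an arbitrary $\rtparam\in\retarget$ and verify the single cardinality inequality of \cref{def:ineq-most-dists-quant} with $n=1$. Since $\genSym$ acts on $\retarget$, the set $\retarget$ is closed under permutation, so $\orbInside=\genSym\cdot\rtparam$; this is a finite set because $\genSym$ is a finite group. Abbreviate the ``$A$-dominant'' and ``$B$-dominant'' orbit elements by
\[
\mathcal{A}\defeq\set{\rtparam'\in\orbInside\mid f(B\mid\rtparam')<f(A\mid\rtparam')},\qquad
\mathcal{B}\defeq\set{\rtparam'\in\orbInside\mid f(B\mid\rtparam')>f(A\mid\rtparam')}.
\]
The goal is then exactly $\abs{\mathcal{B}}\geq\abs{\mathcal{A}}$.

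The key step is to turn the single permutation $\phi$ witnessing $(\retarget, A\overset{\text{simple}}{\to} B)$-retargetability into an injection $\mathcal{A}\hookrightarrow\mathcal{B}$. Consider $g:\rtparam'\mapsto\phi\cdot\rtparam'$. First I would check $g(\mathcal{A})\subseteq\mathcal{B}$: if $\rtparam'\in\mathcal{A}$ then $f(B\mid\rtparam')<f(A\mid\rtparam')$, so \cref{def:retargetFn} gives $f(A\mid\phi\cdot\rtparam')<f(B\mid\phi\cdot\rtparam')$, i.e. $\phi\cdot\rtparam'\in\mathcal{B}$; and $\phi\cdot\rtparam'$ still lies in $\orbInside=\genSym\cdot\rtparam$ because the orbit is permutation-closed. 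Second, $g$ is injective, being the restriction of the bijection of $\retarget$ whose inverse is $\rtparam'\mapsto\phi^{-1}\cdot\rtparam'$. An injection between finite sets then yields $\abs{\mathcal{A}}=\abs{g(\mathcal{A})}\leq\abs{\mathcal{B}}$, which is the $n=1$ instance of \cref{def:ineq-most-dists-quant}. Since $\rtparam$ was arbitrary, $f(B\mid\rtparam)\geqMost[1][\retarget] f(A\mid\rtparam)$.

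I do not expect a real obstacle: all the content lies in the observation that \emph{one} permutation $\phi$ simultaneously retargets every $A$-dominant orbit element to a $B$-dominant one, so the same $\phi$ furnishes the required injection for all of them at once (and no reverse injection is claimed, which is why the inequality is one-directional). The only things needing care are bookkeeping — that $\orbInside$ is genuinely the full orbit (this uses $\retarget$ being closed under $\genSym$, which \cref{def:retargetFn} bakes in), that $g$ maps $\orbInside$ into $\orbInside$, and that $g$ restricted to the orbit is a bijection so that ``injective'' is warranted. One could drop the finiteness remark, since an injection bounds cardinalities in general, but in the intended setting $\dimGen<\infty$ and the orbit is finite, keeping the argument elementary.
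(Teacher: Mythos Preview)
Your argument is correct. The paper takes a slightly different, more modular route: it verifies that a $(\retarget,A\overset{\text{simple}}{\to}B)$-retargetable function satisfies the three conditions of \cref{def:retargetFnMulti} with $n=1$ (retargetability is the hypothesis; closure under permutation follows because $\genSym$ acts on $\retarget$; distinctness is vacuous when there is only one $\phi$), and then invokes \cref{thm:retarget-decision-n}. Your proof instead unwinds that invocation by hand, directly exhibiting the injection $\mathcal{A}\hookrightarrow\mathcal{B}$ via $\rtparam'\mapsto\phi\cdot\rtparam'$. The underlying counting argument is identical; the paper's version buys reuse of the general $n$-fold machinery, while yours is self-contained and makes the elementary nature of the $n=1$ case transparent without needing \cref{def:retargetFnMulti} at all.
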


We now want to make even stronger claims—\emph{how much} of each orbit incentivizes $B$ over $A$? \citet{turner_optimal_2020} asked whether the existence of multiple retargeting permutations $\phi_i$ guarantees a quantitative lower-bound on the fraction of $\rtparam\in\retarget$ for which $B$ is chosen. \Cref{thm:retarget-decision-n} answers ``yes.''

\begin{restatable}[Multiply retargetable function]{definition}{retargetFnNWays}\label{def:retargetFnMulti}
Let $\retarget$ be a subset of a set acted on by $\genSym$, and let $f:\{A,B\} \times \retarget \to \reals$.

$f$ is a \emph{$(\retarget, A\overset{n}{\to} B)$-retargetable function} when, for each $\rtparam\in\retarget$, we can choose permutations $\phi_1,\ldots,\phi_n\in \genSym$ which satisfy the following conditions: Consider any $\rtparam^A\in \orbInsideCond[\rtparam]{A>B} \defeq \set{\rtparam^*\in\orbInside \mid f(A\mid\rtparam^*)>f(B\mid\rtparam^*)}$.
\begin{enumerate}
    \item \textbf{Retargetable via $n$ permutations.}\label{item:retargetable-n} $\forall i=1,\ldots,n: f\prn{A  \mid \phi_i\cdot \rtparam^A}< f\prn{B \mid \phi_i\cdot \rtparam^A}$.
    \item \textbf{Parameter permutation is allowed by $\retarget$.}\label{item:symmetry-closure-n}  $\forall i: \phi_i \cdot \rtparam^A \in \retarget$.
    \item \textbf{Permuted parameters are distinct.}\label{item:distinct} $\forall i\neq j, \rtparam' \in \orbInsideCond[\rtparam]{A>B}: \phi_i\cdot \rtparam^A \neq \phi_j\cdot \rtparam'$.
\end{enumerate}
\end{restatable}

\begin{restatable}[Multiply retargetable functions have orbit-level tendencies]{thm}{retargetDecisionN}\label{thm:retarget-decision-n}\strut
\begin{center}
If $f$ is $(\retarget, A\overset{n}{\to} B)$-retargetable, then $f(B  \mid \rtparam) \geqMost[n][\retarget] f(A  \mid \rtparam).$
\end{center}
\end{restatable}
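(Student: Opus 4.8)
The plan is to verify the defining cardinality inequality of \cref{def:ineq-most-dists-quant} directly, for an arbitrary fixed $\rtparam\in\retarget$, by constructing a single explicit injection. Write $S_A\defeq\orbInsideCond[\rtparam]{A>B}=\set{\rtparam'\in\orbInside\mid f(A\mid\rtparam')>f(B\mid\rtparam')}$ and $S_B\defeq\set{\rtparam'\in\orbInside\mid f(B\mid\rtparam')>f(A\mid\rtparam')}$. Since $S_A$ is exactly the ``$f(B\mid\rtparam')<f(A\mid\rtparam')$'' set appearing in \cref{def:ineq-most-dists-quant}, it suffices to prove $\abs{S_B}\geq n\abs{S_A}$ for this $\rtparam$; the case $S_A=\varnothing$ is immediate, so assume otherwise.

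Using that $f$ is $(\retarget, A\overset{n}{\to} B)$-retargetable, fix the permutations $\phi_1,\ldots,\phi_n\in\genSym$ supplied by \cref{def:retargetFnMulti} for this particular $\rtparam$, and define
\[
\Phi:\set{1,\ldots,n}\times S_A\to\orbInside,\qquad \Phi(i,\rtparam^A)\defeq\phi_i\cdot\rtparam^A.
\]
First I would check that $\Phi$ is well-defined and lands in $S_B$. For $\rtparam^A\in S_A\subseteq\orbi[\rtparam][\dimGen]$ we have $\phi_i\cdot\rtparam^A\in\orbi[\rtparam][\dimGen]$ because the orbit is closed under the $\genSym$-action; condition \ref{item:symmetry-closure-n} of \cref{def:retargetFnMulti} then gives $\phi_i\cdot\rtparam^A\in\retarget$, so $\phi_i\cdot\rtparam^A\in\orbInside$; and condition \ref{item:retargetable-n} gives $f(A\mid\phi_i\cdot\rtparam^A)<f(B\mid\phi_i\cdot\rtparam^A)$, i.e. $\Phi(i,\rtparam^A)\in S_B$.

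Next I would show $\Phi$ is injective by a two-case split on the indices. Suppose $\Phi(i,\rtparam^A)=\Phi(j,\rtparam')$ with $\rtparam^A,\rtparam'\in S_A$. If $i=j$, then $\phi_i\cdot\rtparam^A=\phi_i\cdot\rtparam'$, and since $\phi_i$ acts as a bijection we conclude $\rtparam^A=\rtparam'$, so the inputs coincide. If $i\neq j$, then $\phi_i\cdot\rtparam^A=\phi_j\cdot\rtparam'$ directly contradicts condition \ref{item:distinct} of \cref{def:retargetFnMulti} (applied with its ``any'' element taken to be $\rtparam^A$ and its free variable to be $\rtparam'$). Hence $\Phi$ is injective, so $\abs{S_B}\geq\abs{\set{1,\ldots,n}\times S_A}=n\abs{S_A}$. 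Since $\rtparam\in\retarget$ was arbitrary, this is precisely $f(B\mid\rtparam)\geqMost[n][\retarget]f(A\mid\rtparam)$.

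I do not anticipate a substantive obstacle: the entire argument is the injection $\Phi$, and the three clauses of \cref{def:retargetFnMulti} were evidently engineered so that clause \ref{item:retargetable-n} places the image in $S_B$, clause \ref{item:symmetry-closure-n} keeps it inside $\orbInside$ (not merely $\retarget$), and clause \ref{item:distinct} gives injectivity across distinct indices. The only points deserving care are (i) confirming membership in $\orbInside$ rather than just $\retarget$, which is exactly where closure of the orbit under $\genSym$ is invoked, and (ii) remembering that clause \ref{item:distinct} only rules out cross-index collisions, so the same-index collision must be dispatched separately using that each $\phi_i$ is a bijection.
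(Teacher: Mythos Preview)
Your proof is correct and takes essentially the same approach as the paper. The paper phrases the argument via pairwise-disjoint cosets $\phi_i\cdot\orbInsideCond[\rtparam]{A>B}$ whose union lies in $\orbInsideCond[\rtparam]{B>A}$, while you package the same three verifications (image in $S_B$, cross-index distinctness, same-index injectivity) into a single injection $\Phi:\{1,\ldots,n\}\times S_A\to S_B$; these are just two presentations of the identical counting argument.
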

\begin{proof}[Proof outline (full proof in Appendix \ref{sec:quant-proofs})]
For every $\rtparam^A\in\orbInsideCond[\rtparam]{A>B}$ such that $A$ is chosen over $B$, \cref{item:retargetable-n} retargets $\rtparam^A$ via $n$ permutations $\phi_1,\ldots,\phi_n$ such that each $\phi_i\cdot \rtparam^A$ makes the agent choose $B$ over $A$. These permuted parameters are valid parameter inputs by \cref{item:symmetry-closure-n}. Furthermore, the $\phi_i\cdot \rtparam^A$ are distinct by \cref{item:distinct}. Therefore, the cosets $\phi_i\cdot\orbInsideCond[\rtparam]{A>B}$ are pairwise disjoint. By a counting argument, every orbit must contain at least $n$ times as many parameters choosing $B$ over $A$, than vice versa.
\end{proof}
\section{Decision-making tendencies in Montezuma's Revenge}\label{sec:mr}
To illustrate a high-dimensional setting in which parametrically retargetable decision-makers tend to seek power, we consider Montezuma's Revenge ({\mr}), an Atari adventure game in which the player navigates deadly traps and collects treasure. The game is notoriously difficult for {\ai} agents due to its sparse reward. {\mr} was only recently solved \citep{ecoffet2021first}. \Cref{fig:montezuma} shows the starting observation $\initMRObs$ for the first level. This section culminates with \cref{sec:rl-analyze}, where we argue that increasingly powerful {\rl} training processes will cause increasing retargetability via the reward function, which in turn causes increasingly strong decision-making tendencies.

\paragraph{Terminology.} Retargetability is a property of the policy training process, and power-seeking is a property of the trained policy. More precisely, the policy training process takes as input a parameterization $\rtparam$ and outputs a probability distribution over policies. For each trained policy drawn from this distribution, the environment, starting state, and the drawn policy jointly specify a probability distribution over trajectories. Therefore, the training process associates each parameterization $\rtparam$ with the mixture distribution $P$ over trajectories (with the mixture taken over the distribution of trained policies).

A policy training process can be simply retargeted from one trajectory set $A$ to another trajectory set $B$ when there exists a permutation $\phi \in S_d$ such that, for every $\rtparam$ for which $P(A \mid \rtparam)>P(B \mid \rtparam)$, we have $P(A \mid \phi \cdot \rtparam)<P(B \mid \phi \cdot \rtparam)$. As in \cite{turner_optimal_2020}, a trained policy $\pi$ \emph{seeks power} when $\pi$'s actions navigate to states with high average optimal value (with the average taken over a wide range of reward functions). Generally, high-power states are able to reach a wide range of other states, and so allow bigger option sets $B$ (compared to the options $A$ available without seeking power).

\begin{figure}[b]
    \centering
    \includegraphics[width=.4\textwidth]{./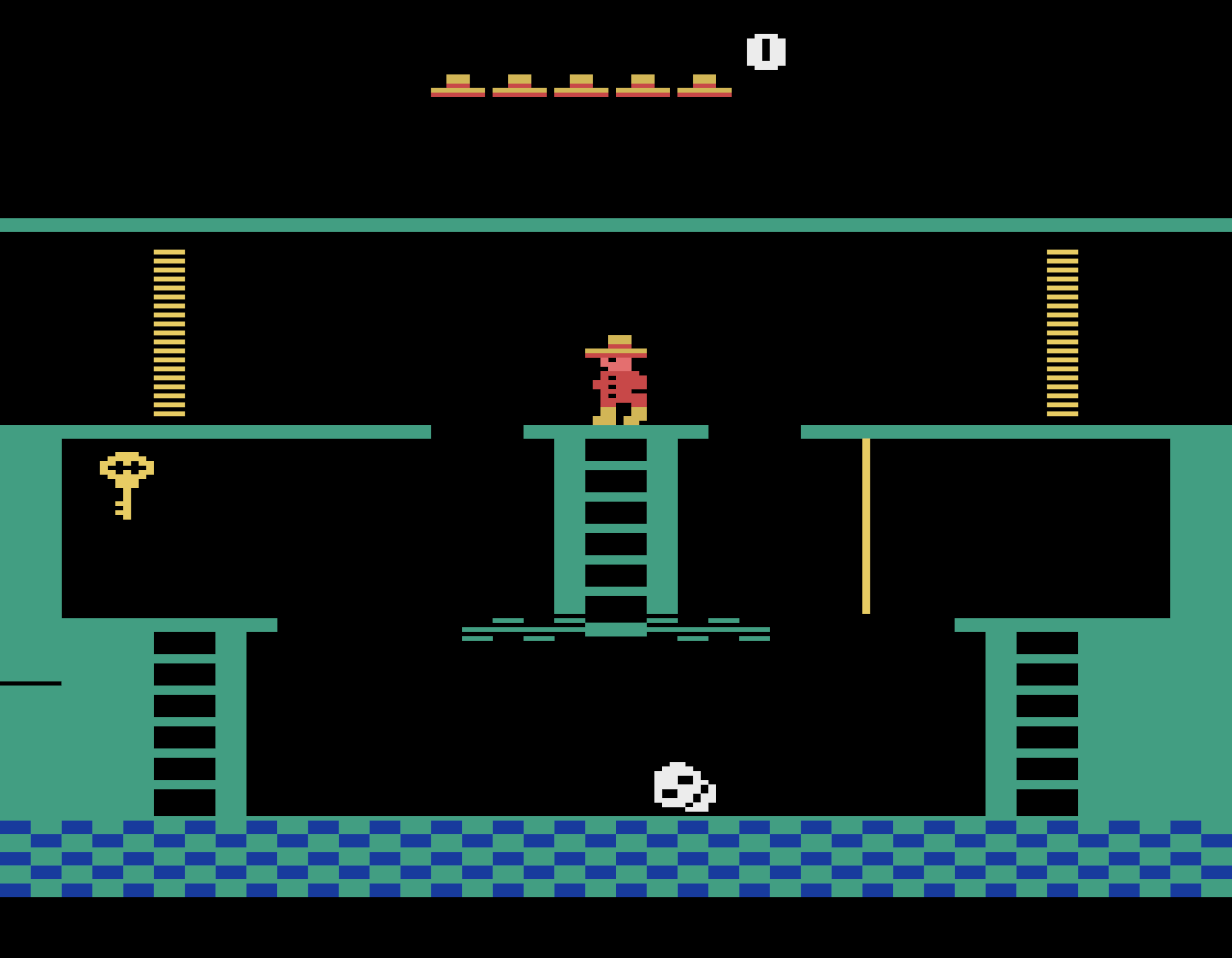}
    \caption[The Montezuma's Revenge video game]{Montezuma's Revenge ({\mr}) has state space $\St$ and observation space $\observe$. The agent has actions $\A:=\set{\uparrow,\downarrow,\leftarrow,\rightarrow,\jump}$. At the initial state $\initMR$, $\uparrow$ does nothing, $\downarrow$ descends the ladder, $\leftarrow$ and $\rightarrow$ move the agent on the platform, and $\jump$ is self-explanatory. The agent clears the temple while collecting four kinds of items: keys, swords, torches, and amulets. Under the standard environmental reward function, the agent receives points for acquiring items (such as the key on the left), opening doors, and—ultimately—completing the level.}
    \label{fig:montezuma}
\end{figure}

\subsection{Tendencies for initial action selection}\label{sec:action-tendencies}
We will be considering the actions chosen and trajectories induced by a range of decision-making procedures. For warm-up, we will explore what initial action tends to be selected by decision-makers. Let $A\defeq \{\downarrow\}, B:=\{\leftarrow, \rightarrow, \jump, \uparrow\}$ partition the action set $\A$. Consider a decision-making procedure $f$ which takes as input a targeting parameter $\rtparam\in\retarget$, and also an initial action $a\in \A$, and returns the probability that $a$ is the first action. Intuitively, since $B$ contains more actions than $A$, perhaps some class of decision-making procedures tends to take an action in $B$ rather than one in $A$.

{\mr}'s initial-action situation is analogous to the Pac-Man example. In that example, if the decision-making procedure $p$ can be retargeted from terminal state set $A$ (the ghost) to set $B$ (the fruit), then $p$ tends to select a state from $B$ under most of its parameter settings $\rtparam$. Similarly, in {\mr}, if the decision-making procedure $f$ can be retargeted from action set $A$ to action set $B$, then $f$ tends to take actions in $B$ for most of its parameter settings $\rtparam$. Consider several ways of choosing an initial action in {\mr}.

\textbf{Random action selection.} $\frand\defeq (\{a\}\mid \rtparam) \mapsto \frac{1}{5}$ uniformly randomly chooses an action from $\A$, ignoring the parameter input. Since $\forall \rtparam\in\retarget: \frand(B\mid\rtparam)=\frac{4}{5}>\frac{1}{5}=\frand(A\mid\rtparam)$, \emph{all} parameter inputs produce a greater chance of $B$ than of $A$, so $\frand$ is (trivially) retargetable from $A$ to $B$.

\textbf{Always choosing the same action.} $\fstubborn$ always chooses $\downarrow$. Since $\forall \rtparam\in\retarget: \fstubborn(A\mid\rtparam)=1>0=\fstubborn(B\mid\rtparam)$, \emph{all} parameter inputs produce a greater chance of $A$ than of $B$. $\fstubborn$ is not retargetable from $A$ to $B$.

\textbf{Greedily optimizing state-action reward.} Let $\retarget\defeq \reals^{\St\times \A}$ be the space of state-action reward functions. Let $\fmax$ greedily maximize initial state-action reward, breaking ties uniformly randomly.

    We now check that $\fmax$ is retargetable from $A$ to $B$. Suppose $\rtparam^*\in\retarget$ is such that $\fmax(A\mid \rtparam^*)>\fmax(B\mid\rtparam^*)$. Then among the initial action rewards, $\rtparam^*$ assigns strictly maximal reward to $\downarrow$, and so $\fmax(A\mid \rtparam^*)=1$. Let $\phi$ swap the reward for the $\downarrow$ and $\jump$ actions. Then $\phi\cdot \rtparam^*$ assigns strictly maximal reward to $\jump$. This means that $\fmax(A\mid\phi\cdot \rtparam^*)=0<1=\fmax(B\mid\phi\cdot \rtparam^*)$, satisfying \cref{def:retargetFn}. Then apply \cref{thm:retarget-decision} to conclude that $\fmax(B\mid\rtparam)\geqMost[1][\retarget]\fmax(A\mid\rtparam)$.

    In fact, appendix \ref{sec:outcomes} shows that $\fmax$ is $(\retarget, A\overset{4}{\to}B)$-retargetable (\cref{def:retargetFnMulti}), and so $\fmax(B\mid\rtparam)\geqMost[4][\retarget]\fmax(A\mid\rtparam)$. The reasoning is more complicated, but the rule of thumb is: When decisions are made based on the reward of outcomes, then a proportionally larger set $B$ of outcomes induces proportionally strong retargetability, which induces proportionally strong orbit-level incentives.

\textbf{Learning an exploitation policy.} Suppose we run a bandit algorithm which tries different initial actions, learns their rewards, and produces an exploitation policy which maximizes estimated reward. The algorithm uses $\epsilon$-greedy exploration and trains for $T$ trials. Given fixed $T$ and $\epsilon$, $\fbandit(A\mid \rtparam)$ returns the probability that an exploitation policy is learned which chooses an action in $A$; likewise for $\fbandit(B\mid \rtparam)$.

Here is a heuristic argument that $\fbandit$ is retargetable. Since the reward is deterministic, the exploitation policy will choose an optimal action if the agent has tried each action at least once, which occurs with a probability approaching $1$ exponentially quickly in the number of trials $T$. Then when $T$ is large, $\fbandit$ approximates $\fmax$, which is retargetable. Therefore, perhaps $\fbandit$ is also retargetable. A more careful analysis in appendix \ref{app:bandit} reveals that $\fbandit$ is 4-retargetable from $A$ to $B$, and so $\fbandit(B\mid\rtparam)\geqMost[4][\retarget]\fbandit(A\mid\rtparam)$.

\subsection{Tendencies for maximizing reward over the final observation}\label{sec:obs-reward}

When evaluating the performance of an algorithm in {\mr}, we do not focus on the agent's initial action. Rather, we focus on the longer-term consequences of the agent's actions, such as whether the agent leaves the first room. To begin reasoning about such behavior, the reader must distinguish between different kinds of retargetability.

Suppose the agent will die unless they choose action $\downarrow$ at the initial state $\initMR$ (\Cref{fig:montezuma}). By \cref{sec:action-tendencies}, action-retargetable decision-making procedures tend to choose actions besides $\downarrow$. On the other hand, \citet{turner_optimal_2020} showed that most reward functions make it reward-optimal to stay alive (in this situation, by choosing $\downarrow$). However, in that situations, the optimal policies are not retargetable across the agent's \emph{immediate} choice of action, but rather across future consequences (\ie{} which room the agent ends up in).

With that in mind, we now analyze how often decision-makers leave the first room of {\mr}.\footnote{In Appendix \ref{sec:obs-analysis}, \Cref{fig:mr-map} shows a map of the first level.}  Decision-making functions $\decide(\rtparam)$ produce a probability distribution over policies $\pi\in\Pi$, which are rolled out from the initial state $\initMR$ to produce observation-action trajectories $\tau=o_0 a_0 \ldots o_T a_T \ldots$, where $T$ is the rollout length we are interested in. Let $\validObs$ be the set of observations reachable starting from state $s_0$ and acting for $T$ time steps,  let $\leave\subseteq \validObs$ be those observations which can only be realized by leaving, and let $\stay\defeq \validObs\setminus \leave$. Consider the probability that $\decide$ realizes some subset of observations $X\subseteq \observe$ at step $T$:
\begin{equation}
    p_{\decide}(X\mid \rtparam)\defeq \prob[\substack{\pi\sim \decide(\rtparam),\\ \tau \sim \pi\mid \initMR}]{o_T\in X}.\label{eq:f-decide}
\end{equation}
Let $\retarget\defeq \reals^{\observe}$ be the set of reward functions mapping observations $o\in\observe$ to real numbers, and let $T\defeq1{,}000$. We first consider the previous decision functions, since they are simple to analyze.

$\decide_{\text{rand}}$ randomly chooses a final observation $o$ which can be realized at step 1,000, and then chooses some policy which realizes $o$.\footnote{$\decide_{\text{rand}}$ does not act randomly at each time step, it induces a randomly selected final observation. Analogously, randomly turning a steering wheel is different from driving to a randomly chosen destination.} $\decide_{\text{rand}}$ induces an $\frand$ defined by \cref{eq:f-decide}. As before, $\frand$ tends to leave the room under \emph{all} parameter inputs.

$\decide_{\max}(\rtparam)$ produces a policy which maximizes the reward of the observation at step 1,000 of the rollout.  Since {\mr} is deterministic, we discuss \emph{which} observation $\decide_{\max}(\rtparam)$ realizes. In a stochastic setting, the decision-maker would choose a policy realizing some probability distribution over step-$T$ observations, and the analysis would proceed similarly.

Here is the semi-formal argument for $\fmax$'s retargetability. There are combinatorially more game-screens visible if the agent leaves the room (due to \eg{} more point combinations, more inventory layouts, more screens outside of the first room). In other words, $\abs{\stay}\ll \abs{\leave}$. There are more ways for the selected observation to require leaving the room, than not. Thus, $\fmax$ is extremely retargetable from $\stay$ to $\leave$.

Detailed analysis in \cref{sec:obs-analysis} confirms that $\fmax(\leave\mid\rtparam)\geqMost[n][\retarget]\fmax(\stay\mid\rtparam)$ for the large $n\defeq \lfloor\frac{\abs{\leave}}{\abs{\stay}}\rfloor$, which we show implies that $\fmax$ tends to leave the room.

\subsection{Tendencies for {\rl} on featurized reward over the final observation}\label{sec:rl-analyze}

In the real world, we do not run $\fmax$, which can be computed via $T$-depth exhaustive tree search in order to find and induce a maximal-reward observation $o_T$. Instead, we use reinforcement learning. Better {\rl} algorithms seem to be more retargetable \emph{because} of their greater capability to explore.\footnote{Conversely, if the agent cannot figure out how to leave the first room, any reward signal from outside of the first room can never causally affect the learned policy. In that case, retargetability away from the first room is impossible.} 

\paragraph*{Exploring the first room.} Consider a featurized reward function over observations $\rtparam\in\reals^{\observe}$, which provides an end-of-episode return signal which adds a fixed reward for each item displayed in the observation (\eg{} 5 reward for a sword, 2 reward for a key). Consider a coefficient vector $\alpha\in\reals^4$, with each entry denoting the value of an item, and $\featFn:\observe\to \reals^4$ maps observations to feature vectors which tally the items in the agent's inventory. A reinforcement learning algorithm $\alg$ uses this return signal to update a fixed-initialization policy network. Then $p_{\alg}(\leave\mid \rtparam)$ returns the probability that $\alg$ trains an policy whose step-$T$ observation required the agent to leave the initial room.

The retargetability (\cref{def:retargetFn}) of $\alg$ is closely linked to the quality of $\alg$ as an {\rl} training procedure. For example, as explained in \cref{app:dqn}, \citet{mnih2015human}'s {\textsc{dqn}} isn't good enough to train policies which leave the first room of {\mr}, and so {\textsc{dqn}} (trivially) cannot be retargetable \emph{away} from the first room via the reward function. There isn't a single featurized reward function for which {\textsc{dqn}} visits other rooms, and so we can't have $\alpha$ such that $\phi\cdot\alpha$ retargets the agent to $\leave$. {\textsc{dqn}} isn't good enough at exploring.

More formally, in this situation, $\alg$ is retargetable if there exists a permutation $\phi\in S_4$ such that whenever $\alpha\in\retarget\defeq \reals^4$ induces the learned policies to stay in the room ($p_{\alg}(\stay\mid \alpha)> p_{\alg}(\leave\mid \alpha)$), $\phi\cdot \alpha$ makes $\alg$ train policies which leave the room ($p_{\alg}(\stay\mid \alpha)< p_{\alg}(\leave\mid \alpha)$).

\paragraph*{Exploring four rooms.} Suppose algorithm $\alg'$ can explore \eg{} the first three rooms to the right of the initial room (shown in \Cref{fig:montezuma}), and consider any reward coefficient vector $\alpha\in \retarget$ which assigns unique positive weight to each item. In particular, unique positive weights rule out constant reward vectors, in which case inductive bias would produce agents which do not leave the first room.

If the agent stays in the initial room, it can induce inventory states \{\texttt{empty}, \texttt{1key}\}. If the agent explores the three extra rooms, it can also induce \{\texttt{1sword}, \texttt{1sword\&1key}\} (see \Cref{fig:mr-map} in Appendix \ref{sec:obs-analysis}). Since $\alpha$ is positive, it is never optimal to finish the episode empty-handed.
Therefore, if the $\alg'$ policy stays in the first room, then $\alpha$'s feature coefficients must satisfy $\alpha_\text{key}>\alpha_\text{sword}$. Otherwise, $\alpha_\text{key}<\alpha_\text{sword}$ (by assumption of unique item reward coefficients); in this case, the agent would leave and acquire the sword (since we assumed it knows how to do so). Then by switching the reward for the key and the sword, we retarget $\alg'$ to go get the sword. $\alg'$ is simply-retargetable away from the first room, \emph{because} it can explore enough of the environment.

\paragraph*{Exploring the entire level.} Algorithms like \textsc{go-explore} \citep{ecoffet2021first} are probably good at exploring even given sparse featurized reward. Therefore, \textsc{go-explore} is even more retargetable in this setting, because it is more able to explore and discover the breadth of options (final inventory counts) available to it, and remember how to navigate to them. Furthermore, sufficiently powerful planning algorithms should likewise be retargetable in a similar way, insofar as they can reliably find high-scoring item configurations. 

We speculate that increasingly ``impressive'' algorithms (whether {\rl} training or planning) are often more impressive because they can allow retargeting the agent's final behavior from one kind of outcome, to another. Just as \textsc{go-explore} seems highly retargetable while \textsc{dqn} does not, we expect increasingly impressive algorithms to be increasingly retargetable—whether over actions in a bandit problem, or over the final observation in an {\rl} episode.
\section{Retargetability can imply power-seeking tendencies}\label{sec:retargetability-implies}
\subsection{Generalizing the power-seeking theorems for Markov decision processes}\label{sec:generalize-mdp}
\citet{turner_optimal_2020} considered finite {\mdp}s in which decision-makers took as input a reward function over states ($\rf\in \rewardVS$) and selected an optimal policy for that reward function. They considered the state visit distributions $\f\in \F(s)$, which basically correspond to the trajectories which the agent could induce starting from state $s$. For $F\subseteq \F(s)$, $p_{\max}(F\mid \rf)$ returns $1$ if an element of $F$ is optimal for reward function $\rf$, and $0$ otherwise. They showed situations where a larger set of distributions $F_\text{large}$ tended to be optimal over a smaller set: $p_{\max}(F_\text{large}\mid \rf)\geqMost[1][\rewardVS]p_{\max}(F_\text{small}\mid\rf)$. For example, in Pac-Man, most reward functions make it optimal to stay alive for at least one time step: $p_{\max}(F_\text{survival}\mid \rf)\geqMost[1][\rewardVS]p_{\max}(F_\text{instant death}\mid\rf)$. \citet{turner_optimal_2020} showed that optimal policies tend to seek power by keeping options open and staying alive. Appendix \ref{app:mdp} provides a quantitative generalization of \citet{turner_optimal_2020}'s results on optimal policies.

Throughout this paper, we abstracted their arguments away from finite {\mdp}s and reward-optimal decision-making. Instead, parametrically retargetable decision-makers tend to seek power: \Cref{prop:rationalities} shows that a wide range of decision-making procedures are retargetable over outcomes, and \cref{res:decision-making} demonstrates the retargetability of \emph{any} decision-making which is determined by the expected utility of outcomes. In particular, these results apply straightforwardly to {\mdp}s.

\subsection{Better {\rl} algorithms tend to be more retargetable}

Reinforcement learning algorithms are practically useful insofar as they can train an agent to accomplish some task (\eg{} cleaning a room). A good {\rl} algorithm is relatively task-agnostic (\eg{} is not restricted to only training policies which clean rooms). Task-agnosticism suggests retargetability across desired future outcomes / task completions. 

In {\mr}, suppose we instead give the agent $1$ reward for the initial state, and $0$ otherwise. Any reasonable reinforcement learning procedure will just learn to stay put (which is the optimal policy). However, consider whether we can retarget the agent's policy to beat the game, by swapping the initial state reward with the end-game state reward. Most present-day {\rl} algorithms are not good enough to solve such a sparse game, and so are not retargetable in this sense. But an agent which did enough exploration would also learn a good policy for the permuted reward function. Such an effective training regime could be useful for solving real-world tasks. Many researchers aim to develop effective training regimes.

Our results suggest that once {\rl} capabilities reach a certain level, trained agents will tend to seek power in the real world. Presently, it is not dangerous to train an agent to complete a task—such an agent will not be able to complete its task by staying activated against the designers' wishes. The present lack of danger is not because optimal policies do not have self-preservation tendencies—they do \citep{turner_optimal_2020}. Rather, the lack of danger reflects the fact that present-day {\rl} agents cannot learn such complex action sequences \emph{at all}. Just as the Montezuma's Revenge agent had to be sufficiently competent to be retargetable from initial-state reward to game-complete reward, real-world agents have to be sufficiently intelligent in order to be retargetable from outcomes which don't require power-seeking, to those which do require power-seeking.

Here is some speculation. After training an {\rl} agent to a high level of capability, the agent may be optimizing internally represented goals over its model of the environment \citep{risks_hubinger}. Furthermore, we think that different reward parameter settings would train different internal goals into the agent. To make an analogy, changing a person's reward circuitry would presumably reinforce them for different kinds of activities and thereby change their priorities. In this sense, trained real-world agents may be retargetable towards power-requiring outcomes via the reward function parameter setting. Insofar as this speculation holds, our theory predicts that advanced reinforcement learning at scale will—for most settings of the reward function—train policies which tend to seek power.

\section{Discussion} 
In \cref{sec:formalize-retarget}, we formalized a notion of parametric retargetability and stated several key results. While our results are broadly applicable, further work is required to understand the implications for {\ai}.

\subsection{Prior work}
In this work, we do not motivate the risks from {\ai} power-seeking. We refer the reader to \eg{} \citet{carlsmith-ai}. As explained in \cref{sec:generalize-mdp}, \citet{turner_optimal_2020} show that, given certain environmental symmetries in an {\mdp}, the optimal-policy-producing algorithm $f$(state visitation distribution set, state-based reward function) is 1-retargetable via the reward function, from smaller to larger sets of environmental options. \Cref{sec:outcomes} shows that optimality is not required, and instead a wide range of decision-making procedures satisfy the retargetability criterion. Furthermore, we generalize from 1-retargetability to $n$-fold-retargetability whenever option set $B$ contains ``$n$ copies'' of set $A$ (\cref{def:copies} in \cref{sec:outcomes}).

\subsection{Future work and limitations}\label{sec:future}
We currently have analyzed planning- and reinforcement learning-based settings. However, results such as \cref{thm:retarget-decision-n} might in some way apply to the training of other machine learning networks. Furthermore, while \cref{thm:retarget-decision-n} does not assume a finite environment, we currently do not see how to apply that result to \eg{} infinite-state partially observable Markov decision processes.

\Cref{sec:mr} semi-formally analyzes decision-making incentives in the {\mr} video game, leaving the proofs to \cref{app:mr}. However, these proofs are several pages long. Perhaps additional lemmas can allow quick proof of orbit-level incentives in situations relevant to real-world decision-makers. 

Consider a sequence of decision-making functions $p_t : \{A,B\} \times \retarget \to \reals$ which converges pointwise to some $p$ such that $p(B\mid \rtparam)\geqMost[n][\retarget] p(A\mid \rtparam)$. We expect that under rather mild conditions, $\exists T: \forall t\geq T: p_t(B\mid \rtparam)\geqMost[n][\retarget] p_t(A\mid \rtparam)$. As a corollary, for any decision-making procedure $p_t$ which runs for $t$ time steps and satisfies $\lim_{t\to\infty} p_t = p$, the function $p_t$ will have decision-making incentives after finite time. For example, value iteration (\textsc{vi}) eventually finds an optimal policy \citep{puterman_markov_2014}, and optimal policies tend to seek power \citep{turner_optimal_2020}. Therefore, this conjecture would imply that if \textsc{vi} is run for some long but finite time, it tends to produce power-seeking policies. More interestingly, the result would allow us to reason about the effect of \eg{} randomly initializing parameters (in \textsc{vi}, the tabular value function at $t=0$). The effect of random initialization washes out in the limit of infinite time, so we would still conclude the presence of finite-time power-seeking incentives.

Our results do not \emph{prove} that we will build unaligned {\ai} agents which seek power over the world. Here are a few situations in which our results are not concerning or not applicable.
\begin{enumerate}
    \item The {\ai} is aligned with human interests. For example,  we want a robotic cartographer to prevent itself from being deactivated. However, the {\ai} alignment problem is not yet understood for highly intelligent agents \citep{russell_human_2019}.
    \item The {\ai}'s decision-making is not retargetable (\cref{def:retargetFnMulti}).
    \item The {\ai}'s decision-making is retargetable over \eg{} actions (\cref{sec:action-tendencies}) instead of over final outcomes (\cref{sec:obs-reward}). This retargetability seems less concerning, but also less practically useful.
\end{enumerate}

\subsection{Conclusion}
We introduced the concept of retargetability and showed that retargetable decision-makers often make similar instrumental choices. We applied these results in the Montezuma's Revenge ({\mr}) video game, showing how increasingly advanced reinforcement learning algorithms correspond to increasingly retargetable agent decision-making. Increasingly retargetable agents make increasingly similar instrumental decisions—\eg{} leaving the initial room in {\mr}, or staying alive in Pac-Man. In particular, these decisions will often correspond to gaining power and keeping options open \citep{turner_optimal_2020}. Our theory suggests that when {\rl} training processes become sufficiently advanced, the trained agents will tend to seek power over the world. This theory suggests a safety risk. We hope for future work on this theory so that the field of {\ai} can understand the relevant safety risks \emph{before} the field trains power-seeking agents.

\section*{Broader impacts}
Our theory of orbit-level tendencies constitutes basic mathematical research into the decision-making tendencies of certain kinds of agents. We hope that this theory will prevent negative impacts from unaligned power-seeking {\ai}. We do not anticipate that our work will have negative impact.

\if\isfinal1
\section*{Acknowledgements}
We thank Irene Tematelewo, Colin Shea-Blymyer, and our anonymous reviewers for feedback. We thank Justis Mills for proofreading.
\fi

{\small\bibliography{AI_Safety.bib}}

\begin{thebibliography}{15}
\providecommand{\natexlab}[1]{#1}
\providecommand{\url}[1]{\texttt{#1}}
\expandafter\ifx\csname urlstyle\endcsname\relax
  \providecommand{\doi}[1]{doi: #1}\else
  \providecommand{\doi}{doi: \begingroup \urlstyle{rm}\Url}\fi

\bibitem[Baker et~al.(2007)Baker, Tenenbaum, and Saxe]{baker2007goal}
Chris~L Baker, Joshua~B Tenenbaum, and Rebecca~R Saxe.
\newblock {Goal inference as inverse planning}.
\newblock In \emph{{Proceedings of the Annual Meeting of the Cognitive Science
  Society}}, volume~29, 2007.

\bibitem[Bostrom(2014)]{bostrom_superintelligence_2014}
Nick Bostrom.
\newblock \emph{Superintelligence}.
\newblock Oxford University Press, 2014.

\bibitem[Carey(2019)]{careyuseful}
Ryan Carey.
\newblock {How useful is quantilization for mitigating specification gaming?}
\newblock 2019.

\bibitem[Carlsmith(2021)]{carlsmith-ai}
Joe Carlsmith.
\newblock Is power-seeking {AI} an existential risk?, 2021.
\newblock URL
  \url{https://www.alignmentforum.org/posts/cCMihiwtZx7kdcKgt/comments-on-carlsmith-s-is-power-seeking-ai-an-existential}.

\bibitem[Ecoffet et~al.(2021)Ecoffet, Huizinga, Lehman, Stanley, and
  Clune]{ecoffet2021first}
Adrien Ecoffet, Joost Huizinga, Joel Lehman, Kenneth~O Stanley, and Jeff Clune.
\newblock First return, then explore.
\newblock \emph{Nature}, 590\penalty0 (7847):\penalty0 580--586, 2021.

\bibitem[Hubinger et~al.(2019)Hubinger, van Merwijk, Mikulik, Skalse, and
  Garrabrant]{risks_hubinger}
Evan Hubinger, Chris van Merwijk, Vladimir Mikulik, Joar Skalse, and Scott
  Garrabrant.
\newblock Risks from learned optimization in advanced machine learning systems,
  2019.
\newblock URL \url{https://arxiv.org/abs/1906.01820}.

\bibitem[Mnih et~al.(2015)Mnih, Kavukcuoglu, Silver, Rusu, Veness, Bellemare,
  Graves, Riedmiller, Fidjeland, Ostrovski, et~al.]{mnih2015human}
Volodymyr Mnih, Koray Kavukcuoglu, David Silver, Andrei~A Rusu, Joel Veness,
  Marc~G Bellemare, Alex Graves, Martin Riedmiller, Andreas~K Fidjeland, Georg
  Ostrovski, et~al.
\newblock Human-level control through deep reinforcement learning.
\newblock \emph{Nature}, 518\penalty0 (7540):\penalty0 529--533, 2015.

\bibitem[Nair et~al.(2015)Nair, Srinivasan, Blackwell, Alcicek, Fearon,
  De~Maria, Panneershelvam, Suleyman, Beattie, Petersen,
  et~al.]{nair2015massively}
Arun Nair, Praveen Srinivasan, Sam Blackwell, Cagdas Alcicek, Rory Fearon,
  Alessandro De~Maria, Vedavyas Panneershelvam, Mustafa Suleyman, Charles
  Beattie, Stig Petersen, et~al.
\newblock Massively parallel methods for deep reinforcement learning.
\newblock \emph{arXiv preprint arXiv:1507.04296}, 2015.

\bibitem[Puterman(2014)]{puterman_markov_2014}
Martin~L Puterman.
\newblock \emph{Markov decision processes: {Discrete} stochastic dynamic
  programming}.
\newblock John Wiley \& Sons, 2014.

\bibitem[Russell(2019)]{russell_human_2019}
Stuart Russell.
\newblock \emph{Human compatible: {Artificial} intelligence and the problem of
  control}.
\newblock Viking, 2019.

\bibitem[Simon(1956)]{simon1956rational}
Herbert~A Simon.
\newblock Rational choice and the structure of the environment.
\newblock \emph{Psychological review}, 63\penalty0 (2):\penalty0 129, 1956.

\bibitem[Sutton and Barto(1998)]{sutton_reinforcement_1998}
Richard~S Sutton and Andrew~G Barto.
\newblock \emph{Reinforcement learning: an introduction}.
\newblock MIT Press, 1998.

\bibitem[Taylor(2016)]{taylor2016quantilizers}
Jessica Taylor.
\newblock Quantilizers: A safer alternative to maximizers for limited
  optimization.
\newblock In \emph{AAAI Workshop: AI, Ethics, and Society}, 2016.

\bibitem[Turner(2022)]{rewardNotOpt}
Alexander~Matt Turner.
\newblock Reward is not the optimization target, 2022.
\newblock URL
  \url{https://www.alignmentforum.org/posts/pdaGN6pQyQarFHXF4/reward-is-not-the-optimization-target}.

\bibitem[Turner et~al.(2021)Turner, Smith, Shah, Critch, and
  Tadepalli]{turner_optimal_2020}
Alexander~Matt Turner, Logan Smith, Rohin Shah, Andrew Critch, and Prasad
  Tadepalli.
\newblock Optimal policies tend to seek power.
\newblock In \emph{Advances in Neural Information Processing Systems}, 2021.

\end{thebibliography}

\clearpage

\begin{appendices}
\section{Retargetability over outcome lotteries} \label{sec:outcomes}
Suppose we are interested in $\dimGen$ outcomes. Each outcome could be the visitation of an {\mdp} state, or a trajectory, or the receipt of a physical item. In the Pac-Man example of \cref{sec:state-explain}, $\dimGen=3$ states. The agent can induce each outcome with probability $1$, so let $\unitvec[o]\in\reals^3$ be the standard basis vector with probability $1$ on outcome $o$ and $0$ elsewhere. Then the agent chooses among outcome lotteries $C\defeq \set{\unitvec[\ghost],\unitvec[\apple],\unitvec[\cherry]}$, which we partition into $A\defeq \set{\unitvec[\ghost]}$ and $B\defeq \set{\unitvec[\apple],\unitvec[\cherry]}$.

\begin{restatable}[Outcome lotteries]{definition}{lotteries}
A unit vector $\x \in \genVS$ with non-negative entries is an \emph{outcome lottery}.\footnote{Our results on outcome lotteries hold for generic $\x'\in \genVS$, but we find it conceptually helpful to consider the non-negative unit vector case.}
\end{restatable}

Many decisions are made consequentially: based on the consequences of the decision, on what outcomes are brought about by an act. For example, in a deterministic Atari game, a policy induces a trajectory. A reward function and discount rate tuple $(R,\gamma)$ assigns a \emph{return} to each state trajectory $\tau=s_0,s_1,\ldots$: $G(\tau)=\sum_{i=0}^\infty \gamma^i R(s_i)$. The relevant outcome lottery is the discounted visit distribution over future states in an Atari game, and policies are optimal or not depending on which outcome lottery is induced by the policy.

\begin{restatable}[Optimality indicator function]{definition}{isOptFn}
Let $X,C\subsetneq \genVS$ be finite, and let $\uf\in\genVS$. $\isOpt{X}{C,\uf}$ returns $ 1$ if $\max_{\x\in X} \x^\top \uf \geq \max_{\cv\in C} \cv^\top \uf$, and $0$ otherwise.
\end{restatable}

We consider decision-making procedures which take in a targeting parameter $\uf$. For example, the column headers of \Cref{tab:optimal-permutations} show the 6 permutations of the utility function $u(\ghost)\defeq 10, u(\apple)\defeq 5, u(\cherry)\defeq 0$, representable as a vector $\uf\in \reals^3$.

$\uf$ can be permuted as follows. The outcome permutation $\phi\in\genSym$ inducing an $\dimGen\times \dimGen$ permutation matrix $\permute$ in row representation: $(\permute)_{ij}=1$ if $i=\phi(j)$ and $0$ otherwise. \Cref{tab:optimal-permutations} shows that for a given utility function, $\frac{2}{3}$ of its orbit agrees that $B$ is strictly optimal over $A$.

\begin{table}[ht]\centering\footnotesize
    \newlength{\orbitcolone}\setlength{\orbitcolone}{\widthof{$\boltz[1]{\set{\unitvec[\ghost],\unitvec[\apple]}}{C,\uf'}$}}
    \setlength{\tabcolsep}{3.7pt}
    \caption[Orbit-level incentives across 4 decision-making functions]{Orbit-level incentives across 4 decision-making functions.}\label{tab:examples}
    \vspace{5pt}
    \begin{subtable}{\linewidth}\centering
    \begin{tabular}{@{}>{\raggedleft\arraybackslash}p{\orbitcolone}|GGGcGc}
    \toprule
    \headers \\
    \midrule
    $\isOpt{\set{\unitvec[\ghost],\unitvec[\apple]}}{C,\uf'}$ & $ 1$ & $ 1$ & $ 1$ & $0$ & $ 1$ & $0$ \\
    $\isOpt{\set{\unitvec[\cherry]}}{C,\uf'}$                 & $0$ & $0$ & $0$ & $ 1$ & $0$ & $ 1$ \\
    \bottomrule
    \end{tabular}
    \caption{Dark gray columns indicate utility function permutations $\uf'$ for which $\isOpt{B}{C,\uf'}>\isOpt{A}{C,\uf'}$, while white indicates that the opposite strict inequality holds.}
    \label{tab:optimal-permutations}
    \end{subtable}

    \vspace{\baselineskip}
    \begin{subtable}{\linewidth}\centering
    \begin{tabular}{@{}>{\raggedleft\arraybackslash}p{\orbitcolone}|cGcGGG}
    \toprule
    \headers\\
    \midrule
    $\isAnti{\set{\unitvec[\ghost],\unitvec[\apple]}}{C,\uf'}$ & $0$ & $ 1$ & $0$ & $ 1$ & $ 1$ & $ 1$ \\
    $\isAnti{\set{\unitvec[\cherry]}}{C,\uf'}$                 & $ 1$ & $0$ & $ 1$ & $0$ & $0$ & $0$ \\
    \bottomrule
    \end{tabular}
    \caption{Utility-minimizing outcome selection probability.}
    \label{tab:anti-rational-permutations}
    \end{subtable}

    \vspace{\baselineskip}
    \begin{subtable}{\linewidth}\centering
    \begin{tabular}{@{}>{\raggedleft\arraybackslash}p{\orbitcolone}|GGGcGc}
    \toprule
    \headers\\
    \midrule
    $\boltz[1]{\set{\unitvec[\ghost],\unitvec[\apple]}}{C,\uf'}$ & $ 1$ & $ .993$ & $ 1$ & $ .007$ & $ .993$ & $.007$ \\
    $\boltz[1]{\set{\unitvec[\cherry]}}{C,\uf'}$ & $ .000$ & $ .007$ & $ .000$ & $ .993$ & $ .007$ & $.993$\\
    \bottomrule
    \end{tabular}
    \caption{Boltzmann selection probabilities for $T=1$, rounded to three significant digits.}
    \label{tab:boltzmann-permutations}
    \end{subtable}

    \vspace{\baselineskip}
    \begin{subtable}{\linewidth}\centering
    \begin{tabular}{@{}>{\raggedleft\arraybackslash}p{\orbitcolone}|GgGggg}
    \toprule
    \headers\\
    \midrule
    $\satisfice[3]{\set{\unitvec[\ghost],\unitvec[\apple]}}{C,\uf'}$ & $ 1$ & $ .5$ & $ 1$ & $ .5$ & $ .5$ & $ .5$ \\
    $\satisfice[3]{\set{\unitvec[\cherry]}}{C,\uf'}$ & $0$ & $ .5$ & $0$ & $ .5$ & $ .5$ & $ .5$ \\
    \bottomrule
    \end{tabular}
    \caption{A satisficer uniformly randomly selects an outcome lottery with expected utility greater than or equal to the threshold $t$. Here, $t=3$. When $\satisfice[3]{\set{\unitvec[\ghost],\unitvec[\apple]}}{C,\uf'}=\satisfice[3]{\set{\unitvec[\cherry]}}{C,\uf'}$, the column is colored medium gray.}
    \label{tab:satisficing-permutations}
    \end{subtable}
\end{table}

Orbit-level incentives occur when an inequality holds for most permuted parameter choices $\uf'$. \Cref{tab:optimal-permutations} demonstrates an application of \citet{turner_optimal_2020}'s results: Optimal decision-making induces orbit-level incentives for choosing Pac-Man outcomes in $B$ over outcomes in $A$.

Furthermore, \citet{turner_optimal_2020} conjectured that ``larger'' $B$ will imply stronger orbit-level tendencies: If going right leads to 500 times as many options as going left, then right is better than left for at least 500 times as many reward functions for which the opposite is true. We prove this conjecture with \cref{thm:rsdIC-quant} in appendix \ref{app:mdp}.

However, orbit-level incentives do not require optimality. One clue is that the same results hold for anti-optimal agents, since anti-optimality/utility minimization of $\uf$ is equivalent to maximizing $-\uf$. \Cref{tab:anti-rational-permutations} illustrates that the same orbit guarantees hold in this case.

\begin{restatable}[Anti-optimality indicator function]{definition}{isAntiFn}
Let $X,C\subsetneq \genVS$ be finite, and let $\uf\in\genVS$. $\isAnti{X}{C,\uf}$ returns $ 1$ if $\min_{\x\in X} \x^\top \uf \leq \min_{\cv\in C} \cv^\top \uf$, and $0$ otherwise.
\end{restatable}

Stepping beyond expected utility maximization/minimization, Boltzmann-rational decision-making selects outcome lotteries proportional to the exponential of their expected utility.

\begin{restatable}[Boltzmann rationality \citep{baker2007goal}]{definition}{boltzdecision}
For $X\subseteq C$ and temperature $T > 0$, let \[\boltz[T]{X}{C,\uf}\defeq \frac{\sum_{\x \in X} e^{T\inv \x^\top \uf}}{\sum_{\cv \in C}e^{T\inv \cv^\top \uf}}\] be the probability that some element of $X$ is Boltzmann-rational.
\end{restatable}

Lastly, orbit-level tendencies occur even under decision-making procedures which partially ignore expected utility and which ``don't optimize too hard.'' Satisficing agents randomly choose an outcome lottery with expected utility exceeding some threshold. \Cref{tab:satisficing-permutations} demonstrates that satisficing induces orbit-level tendencies.

\begin{restatable}[Satisficing]{definition}{satisficedecision}
Let $t\in\reals$, let $X\subseteq C\subsetneq \genVS$ be finite. $\satisfice{X,C}{\uf} \defeq \frac{\abs{X \cap \set{\cv \in C \mid \cv^\top \uf \geq t}}}{\abs{\set{\cv \in C \mid \cv^\top \uf \geq t}}}$ is the fraction of $X$ whose value exceeds threshold $t$.  $\satisfice{X,C}{\uf}$ evaluates to $0$ the denominator equals $0$.
\end{restatable}

For each table, two-thirds of the utility permutations (columns) assign strictly larger values (shaded dark gray) to an element of $B\defeq \set{\unitvec[\apple],\unitvec[\cherry]}$ than to an element of $A\defeq \set{\unitvec[\ghost]}$. For optimal, anti-optimal, Boltzmann-rational, and satisficing agents, \cref{prop:rationalities} proves that these tendencies hold for all targeting parameter orbits.

\subsection{A range of decision-making functions are retargetable}
In {\mdp}s, \citet{turner_optimal_2020} consider \emph{state visitation distributions} which record the total discounted time steps spent in each environment state, given that the agent follows some policy $\pi$ from an initial state $s$. These visitation distributions are one kind of outcome lottery, with $\dimGen=\abs{\St}$ the number of {\mdp} states.

In general, we suppose the agent has an objective function $\uf\in\genVS$ which maps outcomes to real numbers. In \citet{turner_optimal_2020}, $\uf$ was a state-based reward function (and so the outcomes were \emph{states}). However, we need not restrict ourselves to the {\mdp} setting.

To state our key results, we define several technical concepts which we informally used when reasoning about $A\defeq \set{\unitvec[\ghost]}$ and $B\defeq \set{\unitvec[\apple],\unitvec[\cherry]}$.

\begin{restatable}[Similarity of vector sets]{definition}{QuantDefStateDistSimilar}\label{def:quant-dist-sim}
For $\phi\in \genSym$ and $X\subseteq \genVS$, $\phi\cdot X\defeq \set{\permute \x \mid \x \in X}$. $X'\subseteq \rewardVS$ \emph{is similar to $X$} when $\exists \phi: \phi\cdot X'=X$. $\phi$ is an \emph{involution} if $\phi=\phi\inv$ (it either transposes states, or fixes them). $X$ \emph{contains a copy of $X'$} when $X'$ is similar to a subset of $X$ via an involution $\phi$.
\end{restatable}

\begin{restatable}[Containment of set copies]{definition}{citecontainCopies}\label{def:copies}
Let $n$ be a positive integer, and let $A,B\subseteq \genVS$. We say that \emph{$B$ contains $n$ copies of $A$} when there exist involutions $\phi_1,\ldots,\phi_n\in \genSym$ such that $\forall i:\phi_i\cdot A\eqdef B_i \subseteq B$ and $\forall j\neq i:\phi_i \cdot B_j=B_j$.\footnote{Technically, \cref{def:copies} implies that $A$ contains $n$ copies of $A$ holds for all $n$, via $n$ applications of the identity permutation. For our purposes, this provides greater generality, as all of the relevant results still hold. Enforcing pairwise disjointness of the $B_i$ would handle these issues, but would narrow our results to not apply \eg{} when the $B_i$ share a constant vector.}
\end{restatable}

$B\defeq \set{\unitvec[\apple],\unitvec[\cherry]}$ contains two copies of $A\defeq \set{\unitvec[\ghost]}$ via $\phi_1\defeq \ghost \leftrightarrow \apple$ and $\phi_2 \defeq \ghost \leftrightarrow \cherry$.

\begin{restatable}[Targeting parameter distribution assumptions]{definition}{citedistDefn}\label{def:quant-dist}
Results with $\Dany$ hold for any probability distribution over $\genVS$. Let $\DSetAny\defeq \Delta(\genVS)$. For a function $f:\genVS \mapsto \reals$, we write $f(\Dany)$ as shorthand for $\E{\uf \sim \Dany}{f(\uf)}$.
\end{restatable}

The symmetry group on $\dimGen$ elements, $\genSym$, acts on the set of probability distributions over $\genVS$.

\begin{restatable}[Pushforward distribution of a permutation \citep{turner_optimal_2020}]{definition}{citepushfwdPermDist}\label{def:quant-pushforward-permute}
Let $\phi\in\genSym$. $\phi\cdot\Dany$ is the pushforward distribution induced by applying the random vector $p(\uf)\defeq \permute\uf$ to $\Dany$.
\end{restatable}

\begin{restatable}[Orbit of a probability distribution \citep{turner_optimal_2020}]{definition}{citeorbit}\label{def:restate-orbit}
The \emph{orbit} of $\Dany$ under the symmetric group $\genSym$ is $\genSym\cdot \Dany\defeq \{\phi\cdot\Dany\mid \phi\in\genSym\}$.
\end{restatable}

Because $B$ contains 2 copies of $A$, there are ``at least two times as many ways'' for $B$ to be optimal, than for $A$ to be optimal. Similarly, $B$ is ``at least two times as likely'' to contain an anti-rational outcome lottery for generic utility functions. As demonstrated by \Cref{tab:examples}, the key idea is that ``larger'' sets (a set $B$ containing several \emph{copies} of set $A$) are more likely to be chosen under a wide range of decision-making criteria.

\begin{restatable}[Orbit incentives for different rationalities]{prop}{differentRationalities}\label{prop:rationalities}
Let $A,B\subseteq C \subsetneq \genVS$ be finite, such that $B$ contains $n$ copies of $A$ via involutions $\phi_i$ such that $\phi_i\cdot C=C$.
\begin{enumerate}
    \item \textbf{Rational choice \citep{turner_optimal_2020}.}\label{item:rational} \[\isOpt{B}{C,\Dany} \geqMost[n] \isOpt{A}{C,\Dany}.\]
    \item \textbf{Uniformly randomly choosing an optimal lottery.}\label{item:frac-rational} For $X\subseteq C$, let \[\fracOpt{X \mid C, \uf}\defeq \frac{\abs{\set{\argmax_{\cv\in C} \cv^\top \uf}\cap X}}{\abs{\set{\argmax_{\cv\in C} \cv^\top \uf}}}.\]
    Then $\fracOpt{B\mid C, \Dany}\geqMost[n] \fracOpt{A\mid C, \Dany}$.
    \item \textbf{Anti-rational choice.} $\isAnti{B}{C,\Dany} \geqMost[n] \isAnti{A}{C,\Dany}$. \label{item:anti-rational}
    \item \textbf{Boltzmann rationality.} \[\boltz{B}{C, \Dany}\geqMost[n] \boltz{A}{C, \Dany}.\]\label{item:boltzmann}
    \item \textbf{Uniformly randomly drawing $k$ outcome lotteries and choosing the best.} For $X\subseteq C$, $\uf \in \genVS$, and $k\geq 1$, let \[\best(X,C\mid \uf)\defeq \E{\av_1,\ldots,\av_k\sim \text{unif}(C)}{\fracOpt{X\cap \{\av_1,\ldots,\av_k\}\mid \{\av_1,\ldots,\av_k\}, \uf}}.\]

    Then $\best(B\mid C, \Dany)\geqMost[n] \best(A\mid C, \Dany)$.\label{item:best-k}
    \item \textbf{Satisficing \citep{simon1956rational}.} $\satisfice{B}{C, \Dany} \geqMost[n] \satisfice{A}{C, \Dany}$.\label{item:satisfice}
    \item \textbf{Quantilizing over outcome lotteries \citep{taylor2016quantilizers}.} Let $\quantDist$ be the uniform probability distribution over $C$.  For $X\subseteq C$, $\uf\in\genVS$, and $q \in (0,1]$, let $Q_{q,\quantDist}(X\mid C, \uf)$ (\cref{def:quantilize-closed}) return the probability that an outcome lottery in $X$ is drawn from the top $q$-quantile of $\quantDist$, sorted by expected utility under $\uf$. Then $Q_{q,\quantDist}(B\mid C, \uf)\geqMost[n][\genVS]Q_{q,\quantDist}(A \mid C, \uf)$.
    \label{item:quantilizer}
\end{enumerate}
\end{restatable}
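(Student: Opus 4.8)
The plan is to prove all seven items uniformly by extracting the two structural features shared by the decision rules $\isOpt{\cdot}{\cdot}$, $\fracOpt{\cdot\mid\cdot}$, $\isAnti{\cdot}{\cdot}$, $\boltz{\cdot}{\cdot}$, $\best(\cdot\mid\cdot)$, $\satisfice{\cdot}{\cdot}$, and $Q_{q,\quantDist}(\cdot\mid\cdot)$. Writing $f(X\mid C,\uf)$ for any one of them, I claim $f$ is (i) \emph{monotone under inclusion}, i.e.\ $X\subseteq X'\subseteq C$ implies $f(X\mid C,\uf)\le f(X'\mid C,\uf)$ — in each case $f$ is an indicator of nonemptiness, a count, a weighted sum, or an expectation of such, taken over the intersection of $X$ with a $\uf$-determined distinguished subset of $C$ (the $\argmax$ set, the threshold set $\{\cv\in C:\cv^\top\uf\ge t\}$, the top-$q$-quantile, the Boltzmann weights), and enlarging $X$ only enlarges that intersection; and (ii) \emph{equivariant under $C$-fixing involutions}, i.e.\ if $\phi=\phi^{-1}$ and $\phi\cdot C=C$ then $f(X\mid C,\phi\cdot\uf)=f(\phi\cdot X\mid C,\uf)$, because relabelling coordinates carries each of those distinguished subsets of $C$ to the corresponding subset. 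Granting (i)--(ii), I would show $f$ is $(\genVS, A\overset{n}{\to}B)$-retargetable (\cref{def:retargetFnMulti}) — and for items 1--6 that its distributional average $g(\cdot\mid\Dany)\defeq\E{\uf\sim\Dany}{f(\cdot\mid C,\uf)}$ is $(\DSetAny, A\overset{n}{\to}B)$-retargetable — and then conclude by \cref{thm:retarget-decision-n}.

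For the retargetability check, take the involutions $\phi_1,\dots,\phi_n$ furnished by the hypothesis ``$B$ contains $n$ copies of $A$'' (\cref{def:copies}), so $\phi_i\cdot A\eqdef B_i\subseteq B$, $\phi_i\cdot C=C$, and $\phi_i\cdot B_j=B_j$ for $j\ne i$; these same $\phi_i$ will serve for every orbit representative. Fix any $\uf^A$ with $f(A\mid C,\uf^A)>f(B\mid C,\uf^A)$. Condition \cref{item:retargetable-n} then follows from the chain $f(A\mid C,\phi_i\cdot\uf^A)=f(B_i\mid C,\uf^A)\le f(B\mid C,\uf^A)<f(A\mid C,\uf^A)=f(\phi_i\cdot B_i\mid C,\uf^A)\le f(\phi_i\cdot B\mid C,\uf^A)=f(B\mid C,\phi_i\cdot\uf^A)$: the equalities are equivariance (ii) together with $\phi_i\cdot B_i=A$ (involution), the first ``$\le$'' is monotonicity (i) applied to $B_i\subseteq B$, the second is (i) applied to $\phi_i\cdot B_i=A\subseteq\phi_i\cdot B$, and the strict step is the hypothesis on $\uf^A$; hence $f(A\mid C,\phi_i\cdot\uf^A)<f(B\mid C,\phi_i\cdot\uf^A)$. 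Condition \cref{item:symmetry-closure-n} is immediate, since $\genVS$ (and $\DSetAny=\Delta(\genVS)$) is closed under permutation.

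The remaining condition \cref{item:distinct} — that $(\uf'',i)\mapsto\phi_i\cdot\uf''$ is injective on $\orbInsideCond[\rtparam]{A>B}\times\{1,\dots,n\}$ — is the step I expect to be the main obstacle, and it is the only place the clause ``$\phi_i\cdot B_j=B_j$ for $j\ne i$'' of \cref{def:copies} really earns its keep. If $\phi_i\cdot\uf^A=\phi_j\cdot\uf'$ with $i\ne j$ and $\uf'$ also in $\orbInsideCond[\rtparam]{A>B}$, then $\uf'=\phi_j\phi_i\cdot\uf^A$, and equivariance (used twice) rewrites $f(A\mid C,\uf')$ and $f(B\mid C,\uf')$ as $f(\cdot\mid C,\uf^A)$ evaluated at $\phi_i\phi_j\cdot A$ and $\phi_i\phi_j\cdot B$; using $\phi_i\cdot B_j=B_j$ one gets $\phi_i\phi_j\cdot A=B_j$ and $\phi_i\phi_j\cdot B\supseteq B_i$, and the aim is to push this to $f(A\mid C,\uf')\le f(B\mid C,\uf')$, contradicting $\uf'\in\orbInsideCond[\rtparam]{A>B}$. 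Completing this argument — in particular handling the degenerate $\uf^A$ fixed by a product $\phi_j\phi_i$ — is where I would expect to spend the real effort: either via an orbit-dependent choice of the $\phi_i$, or by discarding such degenerate parameters from both sides of the cardinality inequality of \cref{def:ineq-most-dists-quant} after checking that they contribute equally to the ``$A>B$'' and ``$B>A$'' counts.

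Finally, for items 1--6 the orbit is over probability distributions (\cref{def:quant-dist}), so after establishing the pointwise statement I would lift it by rerunning the above with $g(\cdot\mid\Dany)$ and pushforwards in place of $f$ and permutations — monotonicity and equivariance of $g$ pass up from $f$ by linearity of expectation, and a pushforward is merely a relabelling of $\uf$, so every step goes through verbatim. Item 7 is already the pointwise statement over $\genVS$ and is thereby covered. What then remains is the mechanical verification of (i) and (ii) for each of the seven explicit formulas, which is a routine unwinding of the definitions of the $\argmax$ set, the threshold set, the quantile, and the Boltzmann weights.
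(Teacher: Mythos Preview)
Your proposal is essentially the paper's own argument, but inlined rather than factored through its auxiliary lemmas. The paper packages your property~(ii) as ``invariance under joint permutation'' and observes that it follows automatically whenever the rule is \emph{EU-determined} (\cref{def:EU-fn}, \cref{lem:card-EU-invar}); together with your monotonicity~(i) it then invokes \cref{lem:general-orbit-simple}, whose proof is exactly your chain for \cref{item:retargetable-n} and the distinctness argument for \cref{item:distinct}. The lift to $\DSetAny$ is isolated as \cref{lem:invar-expect}, and the paper also spends effort on measurability of each $f(\cdot\mid C,\uf)$ in $\uf$ so that the expectations over $\Dany$ are well-defined---a point you glide over with ``linearity of expectation''. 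Items~5 and~7 are handled slightly apart in the paper (best-of-$k$ is not literally EU-determined, and quantilization needs the invariance of uniform~$P$ under the $\phi_i$), but the logic is still your (i)--(ii).

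One correction on \cref{item:distinct}: your stated target ``push this to $f(A\mid C,\uf')\le f(B\mid C,\uf')$'' does not follow from the single chain you wrote, because from $\phi_i\phi_j\cdot A=B_j$ and $\phi_i\phi_j\cdot B\supseteq B_i$ you only get $f(A\mid C,\uf')=f(B_j\mid C,\uf^A)$ and $f(B\mid C,\uf')\ge f(B_i\mid C,\uf^A)$, which are not directly comparable. The argument that actually closes is the symmetric one: run your chain once to get $f(A\mid C,\uf')\le f(B\mid C,\uf^A)<f(A\mid C,\uf^A)$, then swap the roles of $(\uf^A,i)$ and $(\uf',j)$ to get $f(A\mid C,\uf^A)<f(A\mid C,\uf')$, yielding the contradiction $f(A\mid C,\uf')<f(A\mid C,\uf')$. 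This is precisely the two-way chain in the paper's \cref{lem:general-orbit-simple-nonunif}, and it works uniformly---your worry about parameters fixed by $\phi_j\phi_i$ is unfounded, since the strict inequality $f(B\mid C,\uf^A)<f(A\mid C,\uf^A)$ is already assumed and survives.
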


One retargetable class of decision-making functions are those which only account for the expected utilities of available choices.

\begin{restatable}[EU-determined functions]{definition}{EUFnDefn}\label{def:EU-fn}
Let $\powGenVs$ be the power set of $\genVS$, and let $f:\prod_{i=1}^m \powGenVs\times \genVS \to \reals$. $f$ is an \emph{EU-determined function} if there exists a family of functions $\set{g^{\omega_1,\ldots,\omega_m}}$ such that
\begin{equation}
    f(X_1,\ldots,X_m\mid \uf)=g^{|X_1|,\ldots,|X_m|}\prn{\brx{\x_1^\top\uf}_{\x_1\in X_1},\ldots,\brx{\x_m^\top\uf}_{\x_m\in X_m}},
\end{equation}
where $[r_i]$ is the multiset of its elements $r_i$.
\end{restatable}

For example, let $X\subseteq C \subsetneq \genVS$ be finite, and consider utility function $\uf\in\genVS$. A Boltzmann-rational agent is more likely to select outcome lotteries with greater expected utility. Formally, $\boltz{X}{C, \uf}\defeq \sum_{\x \in X} \frac{e^{T\cdot \x^\top \uf}}{\sum_{\cv \in C}e^{T\cdot \cv^\top \uf}}$ depends only on the expected utility of outcome lotteries in $X$, relative to the expected utility of all outcome lotteries in $C$. Therefore, $\mathrm{Boltzmann}_T$ is a function of expected utilities. This is \emph{why} $\mathrm{Boltzmann}_T$ satisfies the $\geqMost[n]$ relation.

\begin{restatable}[Orbit tendencies occur for EU-determined decision-making functions]{thm}{decisionSet}\label{res:decision-making}
Let $A,B, C \subseteq \genVS$ be such that $B$ contains $n$ copies of $A$ via $\phi_i$ such that $\phi_i\cdot C=C$. Let $h: \prod_{i=1}^2 \powGenVs \times \genVS \to \reals$ be an EU-determined function, and let $p(X\mid \uf) \defeq h(X, C \mid \uf)$.  Suppose that $p$ returns a probability of selecting an element of $X$ from $C$. Then $p(B\mid \uf)\geqMost[n][\genVS]p(A\mid \uf)$.
\end{restatable}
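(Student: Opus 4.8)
The plan is to reduce this statement to \cref{thm:retarget-decision-n} by showing that any EU-determined $p$ of the form $p(X\mid\uf)\defeq h(X,C\mid\uf)$ is $(\genVS, A\overset{n}{\to}B)$-retargetable, using the involutions $\phi_1,\ldots,\phi_n$ witnessing that $B$ contains $n$ copies of $A$. Since $\retarget=\genVS$ here is closed under permutation, \cref{item:symmetry-closure-n} of \cref{def:retargetFnMulti} is automatic, so the work is verifying the retargeting condition (\cref{item:retargetable-n}) and the distinctness condition (\cref{item:distinct}).

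First I would fix an arbitrary $\uf\in\genVS$ and an arbitrary $\uf^A\in\orbInsideCond[\uf]{A>B}$, i.e.\ a permuted variant with $p(A\mid\uf^A)>p(B\mid\uf^A)$; because $h$ is EU-determined and $|A|,|B|$ are fixed, the value $p(X\mid\uf^A)$ is a fixed function $g^{|X|,|C|}$ of the multiset of expected utilities $\{\x^\top\uf^A : \x\in X\}$ together with $\{\cv^\top\uf^A:\cv\in C\}$. The key observation is that applying an involution $\phi_i$ from \cref{def:copies} permutes the outcome-coordinates in a way that (a) fixes $C$ setwise ($\phi_i\cdot C=C$), so the multiset $\{\cv^\top(\phi_i\cdot\uf^A):\cv\in C\} = \{(\permute[\phi_i]\cv)^\top\uf^A : \cv\in C\}$ equals the original multiset over $C$; and (b) sends $A$ onto $B_i=\phi_i\cdot A\subseteq B$ while fixing every other $B_j$ setwise. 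Concretely, $\phi_i\cdot\uf^A$ has the property that the multiset of expected utilities over $B_i$ equals the multiset over $A$ under $\uf^A$, while the multisets over $B_j$ ($j\neq i$) are unchanged. From this I would argue that $p(B\mid\phi_i\cdot\uf^A)\ge p(B_i\mid\phi_i\cdot\uf^A)$ matches or dominates $p(A\mid\uf^A)$, while $p(A\mid\phi_i\cdot\uf^A)$ drops to what $p(B\mid\uf^A)$ (or $p(B_i\mid\uf^A)$) was — hence $p(B\mid\phi_i\cdot\uf^A)>p(A\mid\phi_i\cdot\uf^A)$, giving \cref{item:retargetable-n}. I expect the cleanest way to organize this is to invoke \cref{prop:rationalities}-style bookkeeping, or more directly to note that \cref{res:decision-making} is exactly the abstraction underlying that proposition, so the coordinate-tracking lemma used there carries over.

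For \cref{item:distinct}, I would show $\phi_i\cdot\uf^A\neq\phi_j\cdot\uf'$ for $i\neq j$ and any $\uf'\in\orbInsideCond[\uf]{A>B}$. Equivalently $\uf^A \neq (\phi_i\phi_j)\cdot\uf'$; since $\phi_i,\phi_j$ are involutions with disjoint "support" on the copy structure, $\phi_i\phi_j$ moves the $A$-coordinates out of their original positions, whereas both $\uf^A$ and $\uf'$ agree with $\uf$-orbit membership in a way that, combined with the strict inequality $p(A\mid\cdot)>p(B\mid\cdot)$ characterizing $\orbInsideCond[\uf]{A>B}$, forces a contradiction on the relevant coordinates. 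This is the step I expect to be the main obstacle: the $B_i$ are allowed to overlap (they may share a constant vector, per the footnote to \cref{def:copies}), so I cannot simply appeal to disjointness of images; I would need to use the strict inequality defining $\orbInsideCond[\uf]{A>B}$ to rule out the coincidences, exactly as \cref{thm:retarget-decision-n}'s proof does at the level of abstract retargetable functions.

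Finally, having verified all three conditions of \cref{def:retargetFnMulti} for $p$ on $\retarget=\genVS$, I would simply apply \cref{thm:retarget-decision-n} to conclude $p(B\mid\uf)\geqMost[n][\genVS]p(A\mid\uf)$, which is the claim. The only genuinely new content beyond \cref{thm:retarget-decision-n} is the translation "EU-determined $\Rightarrow$ retargetable via copy-witnessing involutions," and within that, the distinctness verification is where care is required.
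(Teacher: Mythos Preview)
Your plan is the paper's proof unpacked. The paper routes through \cref{lem:card-EU-invar} (EU-determined $\Rightarrow$ invariant under joint permutation), then \cref{lem:hide-second} (hide the fixed argument $C$ using $\phi_i\cdot C=C$), then invokes monotonicity in the first argument from the probability axiom, and finally applies \cref{lem:general-orbit-simple}; that lemma in turn calls \cref{lem:general-orbit-simple-nonunif}, which verifies the three conditions of \cref{def:retargetFnMulti} and invokes \cref{thm:retarget-decision-n}. Your chain $p(A\mid\phi_i\cdot\uf^A)=p(B_i\mid\uf^A)\le p(B\mid\uf^A)<p(A\mid\uf^A)=p(B_i\mid\phi_i\cdot\uf^A)\le p(B\mid\phi_i\cdot\uf^A)$ is exactly the content of \cref{lem:general-orbit-simple-nonunif}'s items~\ref{item:retargetable-lem}--\ref{item:incr}. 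Two points to tighten. First, you use $p(B_i\mid\cdot)\le p(B\mid\cdot)$ without justification; this is the monotonicity-of-probability step (``$p$ returns a probability of selecting an element of $X$'' $\Rightarrow$ $X'\subseteq X\Rightarrow p(X'\mid\uf)\le p(X\mid\uf)$) and is not implied by ``EU-determined'' alone, so it must be stated explicitly. Second, your pointer for distinctness is wrong: \cref{thm:retarget-decision-n} \emph{assumes} \cref{item:distinct} as a hypothesis and proves nothing about it. The actual argument is in \cref{lem:general-orbit-simple-nonunif} (the block \eqref{eq:apply-involute-gen}--\eqref{eq:final-general}), and it runs directly off the ingredients you already identified: if $\phi_i\cdot\uf^A=\phi_j\cdot\uf'$ with $i\neq j$, then using $\phi_i\cdot B_j=B_j$ and joint invariance gives $p(A\mid\uf')=p(B_j\mid\phi_j\cdot\uf')=p(B_j\mid\phi_i\cdot\uf^A)=p(B_j\mid\uf^A)\le p(B\mid\uf^A)<p(A\mid\uf^A)$, and the symmetric chain gives $p(A\mid\uf^A)<p(A\mid\uf')$, a contradiction.
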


The key takeaway is that decisions which are determined by expected utility are straightforwardly retargetable. By changing the targeting parameter hyperparameter, the decision-making procedure can be flexibly retargeted to choose elements of ``larger'' sets (in terms of set copies via \cref{def:copies}). Less abstractly, for many agent rationalities—ways of making decisions over outcome lotteries—it is generally the case that larger sets will more often be chosen over smaller sets.

For example, consider a Pac-Man playing agent choosing which environmental state cycle it should end up in. \citet{turner_optimal_2020} show that for most reward functions, average-reward maximizing agents will tend to stay alive so that they can reach a wider range of environmental cycles. However, our results show that average-reward \emph{minimizing} agents also exhibit this tendency, as do Boltzmann-rational agents who assign  greater probability to higher-reward cycles. Any EU-based cycle selection method will—for most reward functions—tend to choose cycles which require Pac-Man to stay alive (at first).

\section{Theoretical results}\label{sec:quant-proofs}
\ineqMostQuant*
\begin{remark}
In stating their equivalent of \cref{def:ineq-most-dists-quant}, \citet{turner_optimal_2020} define two functions $f_1(\rtparam)\defeq f(B\mid\rtparam)$ and $f_2(\rtparam)\defeq f(A\mid\rtparam)$ (both having type signature $f_i:\retarget\to\reals$). For compatibility, proofs also use this notation.
\end{remark}
\begin{restatable}[Limited transitivity of $\geq_\text{most}$]{lem}{transitGeqMostGen}\label{lem:transit-geq-strong}
Let $f_0,f_1,f_2,f_3:\retarget \to \reals$, and suppose $\retarget$ is a subset of a set acted on by $\genSym$. Suppose that $f_1(\rtparam) \geqMost[n][\retarget] f_2(\rtparam)$ and $\forall\rtparam \in \retarget: f_0(\rtparam)\geq f_1(\rtparam)$ and $f_2(\rtparam)\geq f_3(\rtparam)$. Then $f_0(\rtparam) \geqMost[n][\retarget] f_3(\rtparam)$.
\end{restatable}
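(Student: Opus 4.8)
The plan is to unfold \cref{def:ineq-most-dists-quant} and reduce the claim to two elementary set inclusions carried out orbit by orbit. Fix an arbitrary $\rtparam\in\retarget$ and work inside the single orbit $\orbInside$; note that this orbit depends only on $\rtparam$ and $\retarget$, not on any of the $f_i$, so the same index set $\orbInside$ appears in the cardinality inequalities for all four functions. The hypothesis $f_1(\rtparam)\geqMost[n][\retarget] f_2(\rtparam)$ supplies, for this $\rtparam$, the inequality $\abs{\set{\rtparam'\in\orbInside\mid f_1(\rtparam')>f_2(\rtparam')}}\geq n\,\abs{\set{\rtparam'\in\orbInside\mid f_1(\rtparam')<f_2(\rtparam')}}$, and it suffices to deduce the corresponding inequality with $f_0,f_3$ in place of $f_1,f_2$.

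Next I would establish the two inclusions using the pointwise hypotheses $f_0\geq f_1$ and $f_2\geq f_3$ on $\retarget$. For the strict-greater side: if $\rtparam'\in\orbInside$ satisfies $f_1(\rtparam')>f_2(\rtparam')$, then $f_0(\rtparam')\geq f_1(\rtparam')>f_2(\rtparam')\geq f_3(\rtparam')$, so $\set{\rtparam'\in\orbInside\mid f_1(\rtparam')>f_2(\rtparam')}\subseteq\set{\rtparam'\in\orbInside\mid f_0(\rtparam')>f_3(\rtparam')}$. For the strict-less side: if $f_0(\rtparam')<f_3(\rtparam')$, then $f_1(\rtparam')\leq f_0(\rtparam')<f_3(\rtparam')\leq f_2(\rtparam')$, so $\set{\rtparam'\in\orbInside\mid f_0(\rtparam')<f_3(\rtparam')}\subseteq\set{\rtparam'\in\orbInside\mid f_1(\rtparam')<f_2(\rtparam')}$.

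Then I would chain the cardinalities: $\abs{\set{\rtparam'\in\orbInside\mid f_0(\rtparam')>f_3(\rtparam')}}\geq\abs{\set{\rtparam'\in\orbInside\mid f_1(\rtparam')>f_2(\rtparam')}}\geq n\,\abs{\set{\rtparam'\in\orbInside\mid f_1(\rtparam')<f_2(\rtparam')}}\geq n\,\abs{\set{\rtparam'\in\orbInside\mid f_0(\rtparam')<f_3(\rtparam')}}$, where the first and last steps use monotonicity of cardinality under the inclusions and the middle step is the hypothesis. Since $\rtparam\in\retarget$ was arbitrary, this is exactly the statement $f_0(\rtparam)\geqMost[n][\retarget] f_3(\rtparam)$.

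I do not anticipate a genuine obstacle here: the argument is a two-step inclusion plus a counting chain. The only points that need care are purely bookkeeping — confirming that the orbit $\orbInside$ over which all four cardinalities are measured is literally the same finite (or at least comparable) set in each inequality, and observing that the degenerate cases where some of these sets are empty are absorbed automatically by the inequalities. If $\retarget$ is not closed under permutation, the use of $\orbInside=\prn{\orbi[\rtparam][\dimGen]}\cap\retarget$ rather than the full orbit is already baked into \cref{def:ineq-most-dists-quant}, so no extra hypothesis is needed.
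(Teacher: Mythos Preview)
Your proposal is correct and is essentially identical to the paper's own proof: fix $\rtparam$, establish the two set inclusions $\set{f_1>f_2}\subseteq\set{f_0>f_3}$ and $\set{f_0<f_3}\subseteq\set{f_1<f_2}$ inside $\orbInside$ via the pointwise bounds, and chain the resulting cardinality inequalities through the hypothesis. There is nothing to add.
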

\begin{proof}
Let $\rtparam\in\retarget$ and let $\orbInsideCond[\rtparam]{f_a>f_b}\defeq \set{\rtparam'\in\orbInside\mid f_a(\rtparam')>f_b(\rtparam')}$.
\begin{align}
    \abs{\orbInsideCond[\rtparam]{f_0>f_3}}&\geq \abs{\orbInsideCond[\rtparam]{f_1>f_2}}\label{eq:0-3-strict}\\
    &\geq n \abs{\orbInsideCond[\rtparam]{f_2>f_1}}\label{eq:assume-n}\\
    &\geq n \abs{\orbInsideCond[\rtparam]{f_3>f_0}}. \label{eq:0-less-3-strict}
\end{align}

For all $\rtparam'\in \orbInsideCond[\rtparam]{f_1>f_2}$,
\begin{equation*}
    f_0(\rtparam')\geq f_1(\rtparam')>f_2(\rtparam')\geq f_3(\rtparam')
\end{equation*}
by assumption, and so
\begin{equation*}
    \orbInsideCond[\rtparam]{f_1>f_2}\subseteq \orbInsideCond[\rtparam]{f_0>f_3}.
\end{equation*}
Therefore, \cref{eq:0-3-strict} follows. By assumption,
\begin{equation*}
    \abs{\orbInsideCond[\rtparam]{f_1>f_2}}\geq n \abs{\orbInsideCond[\rtparam]{f_2>f_1}};
\end{equation*}
\cref{eq:assume-n} follows. For all $\rtparam'\in \orbInsideCond[\rtparam]{f_2>f_1}$, our assumptions on $f_0$ and $f_3$ ensure that
\begin{equation*}
    f_0(\rtparam')\leq f_1(\rtparam')<f_3(\rtparam')\leq f_2(\rtparam'),
\end{equation*}
so
\begin{equation*}
    \orbInsideCond[\rtparam]{f_3>f_0} \subseteq \orbInsideCond[\rtparam]{f_2>f_1}.
\end{equation*}
Then \cref{eq:0-less-3-strict} follows. By \cref{eq:0-less-3-strict}, $f_0(\rtparam) \geqMost[n][\retarget] f_3(\rtparam)$.
\end{proof}

\begin{restatable}[Order inversion for $\geq_\text{most}$]{lem}{orderInvert}\label{lem:order-invert}
Let $f_1,f_2:\retarget \to \reals$, and suppose $\retarget$ is a subset of a set acted on by $\genSym$. Suppose that $f_1(\rtparam) \geqMost[n][\retarget] f_2(\rtparam)$. Then $-f_2(\rtparam) \geqMost[n][\retarget] -f_1(\rtparam)$.
\end{restatable}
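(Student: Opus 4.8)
The plan is to unfold \cref{def:ineq-most-dists-quant} on both sides of the desired conclusion and observe that negating both functions simply swaps the roles of the strict ``$>$'' and ``$<$'' comparisons, so the two subsets of the orbit counted in the definition of $-f_2 \geqMost[n][\retarget] -f_1$ are literally the same subsets counted in the definition of $f_1 \geqMost[n][\retarget] f_2$. Concretely, I would fix an arbitrary $\rtparam\in\retarget$, recall that the orbit $\orbInside$ is determined by $\rtparam$ alone (it does not depend on which real-valued function we are comparing), and write out the hypothesis as the cardinality inequality $\abs{\set{\rtparam'\in\orbInside \mid f_1(\rtparam')>f_2(\rtparam')}}\geq n\,\abs{\set{\rtparam'\in\orbInside \mid f_1(\rtparam')<f_2(\rtparam')}}$.

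Next I would use the elementary equivalences $-f_2(\rtparam')>-f_1(\rtparam') \iff f_1(\rtparam')>f_2(\rtparam')$ and $-f_2(\rtparam')<-f_1(\rtparam') \iff f_1(\rtparam')<f_2(\rtparam')$, which hold pointwise for every $\rtparam'$. These give the set equalities $\set{\rtparam'\in\orbInside \mid -f_2(\rtparam')>-f_1(\rtparam')} = \set{\rtparam'\in\orbInside \mid f_1(\rtparam')>f_2(\rtparam')}$ and $\set{\rtparam'\in\orbInside \mid -f_2(\rtparam')<-f_1(\rtparam')} = \set{\rtparam'\in\orbInside \mid f_1(\rtparam')<f_2(\rtparam')}$. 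Substituting these equalities into the hypothesis yields exactly the cardinality inequality required by \cref{def:ineq-most-dists-quant} for $-f_2(\rtparam) \geqMost[n][\retarget] -f_1(\rtparam)$ at this $\rtparam$; since $\rtparam\in\retarget$ was arbitrary, the conclusion follows.

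There is no real obstacle here — the argument is pure bookkeeping. The only point that requires a moment's care is noting that the orbit $\orbInside$ (and whether $\retarget$ is closed under the relevant permutations) is intrinsic to $\rtparam$ and the group action, hence unchanged when we pass from $(f_1,f_2)$ to $(-f_2,-f_1)$; once that is observed, the set equalities do all the work.
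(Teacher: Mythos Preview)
Your proposal is correct and takes essentially the same approach as the paper: unfold \cref{def:ineq-most-dists-quant}, use the pointwise equivalences $-f_2(\rtparam')>-f_1(\rtparam') \iff f_1(\rtparam')>f_2(\rtparam')$ (and likewise for ``$<$''), and observe that the two cardinality inequalities coincide. The paper's proof is a terse two-line version of exactly this; your extra remark that $\orbInside$ is intrinsic to $\rtparam$ and the group action is a helpful clarification but not needed.
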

\begin{proof} By \cref{def:restate-orbit}, $f_1(\rtparam) \geqMost[n][\retarget] f_2(\rtparam)$ means that
\begin{align}
    \abs{\set{\rtparam' \in\orbInside\mid f_1(\rtparam')>f_2(\rtparam')}}&\geq n \abs{\set{\rtparam'\in\orbInside  \mid f_1(\rtparam')<f_2(\rtparam')}}\\
    \abs{\set{\rtparam' \in\orbInside\mid -f_2(\rtparam')>-f_1(\rtparam')}}&\geq n \abs{\set{\rtparam'\in\orbInside  \mid -f_2(\rtparam')<-f_1(\rtparam')}}.
\end{align}
Then $-f_2(\rtparam) \geqMost[n][\retarget] -f_1(\rtparam)$.
\end{proof}

\begin{remark}
\Cref{lem:frac-orbit-geq} generalizes \citet{turner_optimal_2020}'s \eref{lemma}{B.2}{lem:half-orbit-geq}.
\end{remark}

\begin{restatable}[Orbital fraction which agrees on (weak) inequality]{lem}{fracOrbiGen}\label{lem:frac-orbit-geq}
Suppose $f_1,f_2:\retarget \to \reals$ are such that $f_1(\rtparam) \geqMost[n][\retarget] f_2(\rtparam)$. Then for all $\rtparam\in\retarget$, $\frac{\abs{\set{\rtparam' \in \prn{\orbi[\rtparam][\dimGen]}\cap \retarget \mid f_1(\rtparam')\geq f_2(\rtparam')}}}{\abs{\prn{\orbi[\rtparam][\dimGen]}\cap\retarget}}\geq \dfrac{n}{n+1}$.
\end{restatable}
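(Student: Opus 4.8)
The statement is a purely combinatorial consequence of \cref{def:ineq-most-dists-quant}: I would fix an arbitrary $\rtparam\in\retarget$, abbreviate the orbit $O\defeq \prn{\orbi[\rtparam][\dimGen]}\cap\retarget=\orbInside$, and partition it into the three sets $O_>\defeq\set{\rtparam'\in O\mid f_1(\rtparam')>f_2(\rtparam')}$, $O_<\defeq\set{\rtparam'\in O\mid f_1(\rtparam')<f_2(\rtparam')}$, and $O_=\defeq\set{\rtparam'\in O\mid f_1(\rtparam')=f_2(\rtparam')}$. These are pairwise disjoint and their union is $O$, so $\abs{O}=\abs{O_>}+\abs{O_<}+\abs{O_=}$, while the set of orbit elements on which the weak inequality $f_1\geq f_2$ holds is exactly $O_>\cup O_=$, of size $\abs{O_>}+\abs{O_=}$.

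First I would note $O$ is nonempty (the identity permutation fixes $\rtparam$, and $\rtparam\in\retarget$ by assumption), so the denominator $\abs{O}$ is a positive integer and the ratio in the statement is well-defined. Next I invoke the hypothesis $f_1(\rtparam)\geqMost[n][\retarget]f_2(\rtparam)$, which by \cref{def:ineq-most-dists-quant} instantiated at this particular $\rtparam$ gives the cardinality inequality $\abs{O_>}\geq n\abs{O_<}$.

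Then it remains to verify the elementary bound
\begin{equation*}
\frac{\abs{O_>}+\abs{O_=}}{\abs{O_>}+\abs{O_<}+\abs{O_=}}\geq \frac{n}{n+1}.
\end{equation*}
Since both denominators are positive, cross-multiplying shows this is equivalent to $(n+1)\prn{\abs{O_>}+\abs{O_=}}\geq n\prn{\abs{O_>}+\abs{O_<}+\abs{O_=}}$, i.e., after cancellation, to $\abs{O_>}+\abs{O_=}\geq n\abs{O_<}$. This holds because $\abs{O_>}\geq n\abs{O_<}$ from the previous step and $\abs{O_=}\geq 0$. As $\rtparam$ was arbitrary, the claimed bound holds for all $\rtparam\in\retarget$, completing the proof.

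\textbf{Main obstacle.} There is essentially no hard step here; the only points requiring care are (i) confirming the orbit is nonempty so the stated fraction makes sense, and (ii) remembering to track the equality set $O_=$ separately—it contributes to the numerator of the weak-inequality fraction but is absent from the strict-inequality count appearing in \cref{def:ineq-most-dists-quant}, and it is precisely what makes the $\tfrac{n}{n+1}$ bound (rather than the naive $\tfrac{n}{n+1}$ applied only to $O_>\cup O_<$) come out correctly.
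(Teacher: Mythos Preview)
Your proof is correct and follows essentially the same approach as the paper: partition the orbit into the strict-greater, strict-less, and equality sets, observe that the equality set contributes entirely to the weak-inequality numerator, and use the hypothesis $\abs{O_>}\geq n\abs{O_<}$ to get the $\tfrac{n}{n+1}$ bound. The paper phrases the final step slightly differently (arguing that at least $\tfrac{n}{n+1}$ of the non-equal orbit elements satisfy the strict inequality), but the underlying arithmetic is identical to your cross-multiplication.
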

\begin{proof}
All $\rtparam'\in \prn{\orbi[\rtparam][\dimGen]}\cap\retarget$ such that $f_1(\rtparam')= f_2(\rtparam')$ satisfy $f_1(\rtparam')\geq f_2(\rtparam')$. Otherwise, consider the $\rtparam'\in \prn{\orbi[\rtparam][\dimGen]}\cap\retarget$ such that $f_1(\rtparam')\neq f_2(\rtparam')$. By assumption, at least $\frac{n}{n+1}$ of these $\rtparam'$ satisfy $f_1(\rtparam')> f_2(\rtparam')$, in which case $f_1(\rtparam')\geq f_2(\rtparam')$. Then the desired inequality follows.
\end{proof}

\subsection{General results on retargetable functions}
\begin{restatable}[Functions which are increasing under joint permutation]{definition}{invarJointIncreasing}\label{def:joint}
Suppose that $\genSym$ acts on sets $\abDomain_1,\ldots,\abDomain_m$, and let $f:\prod_{i=1}^m \abDomain_i \to \reals$. $f(X_1,\ldots,X_m)$ is \emph{increasing under joint permutation by $P\subseteq \genSym$} when $\forall\phi \in P: f(X_1,\ldots,X_m)\leq f(\phi \cdot X_1,\ldots,\phi \cdot X_m)$. If equality always holds, then $f(X_1,\ldots,X_m)$ is \emph{invariant under joint permutation by $P$}.
\end{restatable}

\begin{restatable}[Expectations of joint-permutation-increasing functions are also joint-permutation-increasing]{lem}{invarExpect}\label{lem:invar-expect}
For $\abDomain$ which is a subset of a set acted on by $\genSym$, let $f:\abDomain \times \genVS \to \reals$ be a bounded function which is measurable on its second argument, and let $P\subseteq \genSym$. Then if $f(X\mid \uf)$ is increasing under joint permutation by $P$, then $f'(X\mid \Dany)\defeq \E{\uf\sim \Dany}{f(X\mid \uf)}$ is increasing under joint permutation by $P$. If $f$ is \emph{invariant} under joint permutation by $P$, then so is $f'$.
\end{restatable}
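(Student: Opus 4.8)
The plan is to unwind the definition of the pushforward distribution and then use monotonicity of the expectation. Fix $\phi\in P$ and a set $X$; by \cref{def:joint} it suffices to establish $f'(X\mid\Dany)\le f'(\phi\cdot X\mid\phi\cdot\Dany)$ (and the analogous equality in the invariant case).

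First I would rewrite the right-hand side using \cref{def:quant-pushforward-permute}: drawing $\uf\sim\phi\cdot\Dany$ is the same as drawing $\uf\sim\Dany$ and applying $\uf\mapsto\permute\uf$, so
\begin{equation*}
    f'(\phi\cdot X\mid\phi\cdot\Dany)=\E{\uf\sim\phi\cdot\Dany}{f(\phi\cdot X\mid\uf)}=\E{\uf\sim\Dany}{f(\phi\cdot X\mid\permute\uf)}.
\end{equation*}
The boundedness and measurability hypotheses on $f$ ensure each of these expectations is well-defined and that $\uf\mapsto f(\phi\cdot X\mid\permute\uf)$ is measurable. Next I would invoke the hypothesis pointwise: because $f(\cdot\mid\cdot)$ is increasing under joint permutation by $P$ and $\phi\in P$, for every $\uf\in\genVS$ we have $f(X\mid\uf)\le f(\phi\cdot X\mid\phi\cdot\uf)=f(\phi\cdot X\mid\permute\uf)$. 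Integrating both sides against $\Dany$ and using monotonicity of the integral yields
\begin{equation*}
    f'(X\mid\Dany)=\E{\uf\sim\Dany}{f(X\mid\uf)}\le\E{\uf\sim\Dany}{f(\phi\cdot X\mid\permute\uf)}=f'(\phi\cdot X\mid\phi\cdot\Dany),
\end{equation*}
which is exactly the statement that $f'$ is increasing under joint permutation by $P$. For the invariance claim, the hypothesis gives $f(X\mid\uf)=f(\phi\cdot X\mid\permute\uf)$ for every $\uf$, and the same two displays (with $=$ in place of $\le$) give $f'(X\mid\Dany)=f'(\phi\cdot X\mid\phi\cdot\Dany)$.

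There is essentially no serious obstacle here; the only points requiring care are correctly translating ``draw from the pushforward $\phi\cdot\Dany$'' into ``draw from $\Dany$ and apply $\permute$'' (\cref{def:quant-pushforward-permute}), and checking that the measurability and boundedness assumptions are precisely what legitimizes both this change of variables and the passage to expectations under the pointwise inequality. One should also note, following the convention of \cref{def:joint}, that $\phi\cdot X$ is implicitly assumed to lie in the domain on which $f$ (and hence $f'$) is defined.
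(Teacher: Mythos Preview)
Your proposal is correct and follows essentially the same approach as the paper: apply the pointwise inequality $f(X\mid\uf)\le f(\phi\cdot X\mid\permute\uf)$ inside the expectation and then use the change of variables coming from the pushforward definition to identify the result with $f'(\phi\cdot X\mid\phi\cdot\Dany)$. The only cosmetic difference is that the paper writes the change of variables explicitly as an integral substitution (with $\abs{\det\permute}=1$), whereas you invoke the pushforward law directly; your formulation is in fact slightly cleaner, since the determinant step tacitly assumes a density with respect to Lebesgue measure that is not part of the hypotheses.
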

\begin{proof} Let distribution $\Dany$ have probability measure $F$, and let $\phi\cdot\Dany$ have probability measure $F_\phi$.
\begin{align}
    f\prn{X \mid \Dany}\defeq{}&\E{\uf\sim \Dany}{f(X\mid \uf)}\\
    \defeq{}&\int_{\genVS} f(X\mid \uf) \dF[\uf][F]\\
    \leq {}&\int_{\genVS} f(\phi\cdot X\mid \permute \uf) \dF[\uf][F]\label{eq:permute-inner}\\
    ={}&\int_{\genVS} f(\phi\cdot X\mid \uf') \abs{\det \permute}\dF[\uf'][F_\phi]\label{eq:change-of-variables-gen}\\
    ={}&\int_{\genVS} f(\phi\cdot X\mid \uf') \dF[\uf'][F_\phi]\label{eq:change-of-variables-2-gen}\\
    \eqdef{}& f'\prn{\phi\cdot X \mid \phi\cdot \Dany}.
\end{align}

\Cref{eq:permute-inner} holds by assumption on $f$: $f(X\mid \uf)\leq f(\phi\cdot X\mid \permute \uf)$. Furthermore, $f(\phi\cdot X\mid \cdot)$ is still measurable, and so the inequality holds. \Cref{eq:change-of-variables-gen} follows by the definition of $F_\phi$ (\eref{definition}{6.3}{def:pushforward-permute}) and by substituting $\rf'\defeq \permute\rf$. \Cref{eq:change-of-variables-2-gen} follows from the fact that all permutation matrices have unitary determinant.
\end{proof}

\begin{restatable}[Closure of orbit incentives under increasing functions]{lem}{closureIncrease}\label{res:closure-increase}
Suppose that $\genSym$ acts on sets $\abDomain_1,\ldots,\abDomain_m$ (with $\abDomain_1$ being a poset), and let $P\subseteq \genSym$. Let $f_1,\ldots,f_n:\prod_{i=1}^m \abDomain_i \to \reals$ be increasing under joint permutation by $P$ on input $(X_1,\ldots,X_m)$, and suppose the $f_i$ are order-preserving with respect to $\preceq_{\abDomain_1}$. Let $g:\prod_{j=1}^n \reals \to \reals$ be monotonically increasing on each argument. Then \begin{equation}
f\prn{X_1,\ldots,X_m} \defeq g\prn{f_1\prn{X_1,\ldots,X_m},\ldots,f_n\prn{X_1,\ldots,X_m}}
\end{equation}
is increasing under joint permutation by $P$ and order-preserving with respect to set inclusion on its first argument. Furthermore, if the $f_i$ are \emph{invariant} under joint permutation by $P$, then so is $f$.
\end{restatable}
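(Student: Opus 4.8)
The plan is to recognize that this is a pure monotone-composition statement and verify the three conclusions (joint-permutation-increasingness, order-preservation in the first argument, and the invariance addendum) one at a time, in each case turning a coordinatewise inequality among the $f_j$ into a single scalar inequality by feeding it through $g$. No auxiliary constructions are needed beyond the definitions in \cref{def:joint}.

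First I would prove joint-permutation-monotonicity. Fix $\phi \in P$ and an input tuple $(X_1,\ldots,X_m)$. Since each $f_j$ is increasing under joint permutation by $P$, we have $f_j(X_1,\ldots,X_m)\le f_j(\phi\cdot X_1,\ldots,\phi\cdot X_m)$ for every $j=1,\ldots,n$. Now I would invoke the hypothesis that $g$ is monotonically increasing in each of its $n$ arguments and raise the coordinates one at a time: starting from the tuple $\prn{f_1(X_1,\ldots,X_m),\ldots,f_n(X_1,\ldots,X_m)}$ and replacing coordinate $j$ by $f_j(\phi\cdot X_1,\ldots,\phi\cdot X_m)$ only weakly increases the value of $g$, so after $n$ such steps $f(X_1,\ldots,X_m)=g\prn{f_1(X_1,\ldots,X_m),\ldots} \le g\prn{f_1(\phi\cdot X_1,\ldots),\ldots}=f(\phi\cdot X_1,\ldots,\phi\cdot X_m)$, as required.

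Next I would handle order-preservation in the first argument by exactly the same climbing argument: if $X_1\preceq_{\abDomain_1} X_1'$, then order-preservation of each $f_j$ gives $f_j(X_1,X_2,\ldots,X_m)\le f_j(X_1',X_2,\ldots,X_m)$, and passing this coordinatewise inequality through $g$ (again one coordinate at a time) yields $f(X_1,X_2,\ldots,X_m)\le f(X_1',X_2,\ldots,X_m)$. Finally, for the invariance addendum: if every $f_j$ is \emph{invariant} under joint permutation by $P$, then for each $\phi\in P$ the tuples $\prn{f_1(X_1,\ldots,X_m),\ldots,f_n(X_1,\ldots,X_m)}$ and $\prn{f_1(\phi\cdot X_1,\ldots),\ldots,f_n(\phi\cdot X_1,\ldots)}$ coincide exactly, so applying the function $g$ to both gives $f(X_1,\ldots,X_m)=f(\phi\cdot X_1,\ldots,\phi\cdot X_m)$; here no monotonicity is needed, only that $g$ is well-defined.

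I do not expect a genuine obstacle. The statement is essentially ``a coordinatewise-monotone combination of joint-permutation-increasing, order-preserving functions inherits both properties,'' and the only point worth making explicit is that ``monotonically increasing on each argument'' must be applied coordinate by coordinate to convert the vector inequality among the $f_j$ into the scalar inequality for $f$; everything else is immediate from \cref{def:joint}.
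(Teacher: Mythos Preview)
Your proposal is correct and follows essentially the same route as the paper's proof: verify each conclusion by applying the coordinatewise monotonicity of $g$ to the componentwise inequalities (or equalities, in the invariance case) furnished by the $f_j$. The only cosmetic difference is that you spell out the ``one coordinate at a time'' climb through $g$ explicitly, whereas the paper applies $g$'s monotonicity in a single step.
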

\begin{proof} Let $\phi \in P$.
\begin{align}
    f\prn{X_1,\ldots,X_m} &\defeq g\prn{f_1\prn{X_1,\ldots,X_m},\ldots,f_n\prn{X_1,\ldots,X_m}}\\
    &\leq g\prn{f_1\prn{\phi\cdot X_1,\ldots,\phi\cdot X_m},\ldots,f_n\prn{\phi\cdot X_1,\ldots,\phi\cdot X_m}}\label{eq:incr-all}\\
    &\eqdef f\prn{\phi\cdot X_1,\ldots,\phi\cdot X_m}.
\end{align}
\Cref{eq:incr-all} follows because we assumed that $f_i\prn{X_1,\ldots,X_m}\leq f_i\prn{\phi\cdot X_1,\ldots,\phi\cdot X_m}$, and because $g$ is monotonically increasing on each argument. If the $f_i$ are all invariant, then \cref{eq:incr-all} is an equality.

Similarly, suppose $X_1'\preceq_{\abDomain_1} X_1$. The $f_i$ are order-preserving on the first argument, and $g$ is monotonically increasing on each argument. Then   $f\prn{X_1',\ldots,X_m}\leq f\prn{X_1,\ldots,X_m}$. This shows that $f$ is order-preserving on its first argument.
\end{proof}
\begin{remark}
$g$ could take the convex combination of its arguments, or multiply two $f_i$ together and add them to a third $f_3$.
\end{remark}

\retargetFnNWays*
\retargetDecisionN*
\begin{proof}
Let $\rtparam\in\retarget$, and let $\phi_i\cdot \orbInsideCond[\rtparam]{A>B}\defeq \set{\phi_i\cdot \rtparam^A\mid \rtparam^A\in \orbInsideCond[\rtparam]{A>B}}$.
\begin{align}
    \abs{\orbInsideCond[\rtparam]{B>A}]}&\geq \abs{\bigcup_{i=1}^n \phi_i\cdot \orbInsideCond[\rtparam]{A>B}}\label{eq:union-pApB}\\
    &=\sum_{i=1}^n\abs{\phi_i\cdot \orbInsideCond[\rtparam]{A>B}}\label{eq:disjoint-pApB}\\
    &=n\abs{\orbInsideCond[\rtparam]{A>B}}.\label{eq:abs-inj}
\end{align}
By \cref{item:retargetable-n} and \cref{item:symmetry-closure-n}, $\phi_i\cdot \phi_i\cdot \orbInsideCond[\rtparam]{A>B}\subseteq \phi_i\cdot \orbInsideCond[\rtparam]{B>A}]$ for all $i$. Therefore, \cref{eq:union-pApB} holds. \Cref{eq:disjoint-pApB} follows by the assumption that parameters are distinct, and so therefore the cosets $\phi_i\cdot \orbInsideCond[\rtparam]{A>B}$ and $\phi_j\cdot \orbInsideCond[\rtparam]{A>B}$ are pairwise disjoint for $i\neq j$. \Cref{eq:abs-inj} follows because each $\phi_i$ acts injectively on orbit elements.

Letting $f_A(\rtparam)\defeq f(A\mid \rtparam)$ and $f_B(\rtparam)\defeq f(B\mid \rtparam)$, the shown inequality satisfies \cref{def:ineq-most-dists-quant}. We conclude that $f(B\mid\rtparam) \geqMost[n][\retarget] f(A\mid\rtparam)$.
\end{proof}

\retargetFn*

\retargetDecision*
\begin{proof}
Given that $f$ is a $(\retarget, A \overset{\text{simple}}{\to} B)$-retargetable function (\cref{def:retargetFn}), we want to show that $f$ is a $(\retarget, A\overset{1}{\to} B)$-retargetable function (\cref{def:retargetFnMulti} when $n=1$). \Cref{def:retargetFnMulti}'s \cref{item:retargetable-n} is true by assumption. Since $\retarget$ is acted on by $\genSym$, $\retarget$ is closed under permutation and so \cref{def:retargetFnMulti}'s \cref{item:symmetry-closure-n} holds. When $n=1$, there are no $i\neq j$, and so \cref{def:retargetFnMulti}'s \cref{item:distinct} is tautologically true.

Then $f$ is a $(\retarget, A\overset{1}{\to} B)$-retargetable function; apply \cref{lem:general-orbit-simple-nonunif}.
\end{proof}

\subsection{Helper results on retargetable functions}

\begin{table}[h!]\centering\setlength{\tabcolsep}{2.3pt}
    \begin{tabular}{@{}r|cccc}
    \toprule
    Targeting parameter $\rtparam$ & $f(\set{\ghost}\!\mid\! \rtparam)$ & $f(\set{\apple}\!\mid\! \rtparam)$ & $f(\set{\cherry} \!\mid\! \rtparam)$ & $f(\set{\apple,\cherry}\!\mid\! \rtparam)$\\
    \midrule
    $\rtparam' \defeq 1\unitvec[1]+3\unitvec[2]+2\unitvec[3]$              & $1$& $0$ & $0$ & $0$ \\
    $\phi_1\cdot \rtparam'=\phi_2\cdot\rtparam'' \defeq 3\unitvec[1]+1\unitvec[2]+2\unitvec[3]$ & $0$ & $ 2$ & $ 2$ & $ 2$ \\
    $\phi_2\cdot \rtparam' \defeq 2\unitvec[1]+3\unitvec[2]+1\unitvec[3]$  & $0$ & $ 2$ & $2$ & $ 2$\\
    $\rtparam'' \defeq 2\unitvec[1]+1\unitvec[2]+3\unitvec[3]$             & $ 1$ & $0$ & $0$ & $0$ \\
    $\phi_1\cdot \rtparam'' \defeq 1\unitvec[1]+2\unitvec[2]+3\unitvec[3]$ & $0$ & $ 2$ & $ 2$ & $ 2$ \\
    $\rtparam^\star \defeq 3\unitvec[1]+2\unitvec[2]+1\unitvec[3]$         & $ 1$ & $0$ & $0$ & $0$ \\
    \bottomrule
    \end{tabular}
    \vspace{10pt}
    \caption[The necessity of \cref{lem:general-orbit-simple-nonunif}'s \cref{item:irrel-symm}]{We reuse the Pac-Man outcome set introduced in \cref{sec:state-explain}. Let $\phi_1 \defeq \ghost \leftrightarrow \apple, \phi_2 \defeq \ghost \leftrightarrow \cherry$. We tabularly define a function $f$ which meets all requirements of \cref{lem:general-orbit-simple-nonunif}, except for \cref{item:irrel-symm}: letting $j\defeq 2$, $f(B_2^\star\mid \phi_1\cdot \rtparam')=2>0=f(B_2^\star\mid \rtparam')$. Although $f(B\mid \rtparam)\geqMost[1][\orbi[\rtparam][3]]f(A\mid \rtparam)$, it is not true that $f(B\mid \rtparam^*)\geqMost[2][\orbi[\rtparam][3]]f(A\mid \rtparam^*)$. Therefore, \cref{item:irrel-symm} is generally required.}
    \label{tab:counterex-irrel}
\end{table}

\begin{restatable}[Quantitative general orbit lemma]{lem}{orbGenQuantSimpleRF}\label{lem:general-orbit-simple-nonunif}
Let $\retarget$ be a subset of a set acted on by $\genSym$, and let $f:\abDomain \times \retarget \to \reals$. Consider $A,B\in \abDomain$.

For each $\rtparam\in\retarget$, choose involutions $\phi_1,\ldots,\phi_n\in\genSym$. Let $\rtparam^*\in \orbInside$.
\begin{enumerate}
    \item \textbf{Retargetable under parameter permutation.}\label{item:retargetable-lem} There exist $B_i^\star\in\abDomain$ such that if $f(B  \mid \rtparam^*) < f(A  \mid \rtparam^*)$, then $\forall i: f\prn{A  \mid \rtparam^*}\leq f\prn{B^\star_i \mid \phi_i\cdot \rtparam^*}$.
    \item \textbf{$\retarget$ is closed under certain symmetries.}\label{item:symmetry-closure-quant} $f(B  \mid \rtparam^*) < f(A  \mid \rtparam^*) \implies \forall i: \phi_i \cdot \rtparam^* \in \retarget$.
    \item \textbf{$f$ is increasing on certain inputs.}\label{item:incr} $\forall i: f(B_i^\star \mid \rtparam^*)\leq f(B  \mid \rtparam^*)$.
    \item \textbf{Increasing under alternate symmetries.} For $j=1,\ldots,n$ and $i\neq j$, if $f(A\mid \rtparam^*)<f(B\mid \rtparam^*)$, then $f\prn{B_j^\star  \mid \rtparam^*} \leq f\prn{B_j^\star  \mid \phi_i\cdot \rtparam^*}$. \label{item:irrel-symm}
\end{enumerate}
If these conditions hold for all $\rtparam\in\retarget$, then
\begin{equation}
    f(B  \mid \rtparam) \geqMost[n][\retarget] f(A  \mid \rtparam).\label{eq:gen-quant-superior}
\end{equation}
\end{restatable}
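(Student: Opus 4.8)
The plan is to adapt the coset-counting argument behind \cref{thm:retarget-decision-n}. Two features make it heavier here: the retargeting is \emph{indirect}---instead of $\phi_i$ flipping $f(A\mid\cdot)$ and $f(B\mid\cdot)$ outright, it passes through the auxiliary sets $B_i^\star$ together with the sandwich provided by \cref{item:retargetable-lem} and \cref{item:incr}---and pairwise disjointness of the retargeted cosets is no longer part of the hypothesis but must be extracted from \cref{item:irrel-symm}. Fix $\rtparam\in\retarget$ and write $O\defeq\orbInside$, $S_{A>B}\defeq\set{\rtparam'\in O\mid f(A\mid\rtparam')>f(B\mid\rtparam')}$ and $S_{B>A}\defeq\set{\rtparam'\in O\mid f(B\mid\rtparam')>f(A\mid\rtparam')}$; by \cref{def:ineq-most-dists-quant} it is enough to prove $\abs{S_{B>A}}\geq n\abs{S_{A>B}}$ for every such $\rtparam$. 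Let $\phi_1,\dots,\phi_n$ be the involutions the hypothesis supplies for this $\rtparam$.

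\emph{Step 1 (retargeting).} I claim $\phi_i\cdot S_{A>B}\subseteq S_{B>A}$ for each $i$. Take $\rtparam^A\in S_{A>B}$. By \cref{item:symmetry-closure-quant}, $\phi_i\cdot\rtparam^A\in\retarget$, and it lies in $\rtparam$'s orbit, so $\phi_i\cdot\rtparam^A\in O$. \Cref{item:retargetable-lem} gives $f(A\mid\rtparam^A)\leq f(B_i^\star\mid\phi_i\cdot\rtparam^A)$ and \cref{item:incr} (at $\phi_i\cdot\rtparam^A$) gives $f(B_i^\star\mid\phi_i\cdot\rtparam^A)\leq f(B\mid\phi_i\cdot\rtparam^A)$, whence $f(B\mid\phi_i\cdot\rtparam^A)\geq f(A\mid\rtparam^A)>f(B\mid\rtparam^A)$. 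It remains to see $f(A\mid\phi_i\cdot\rtparam^A)<f(B\mid\phi_i\cdot\rtparam^A)$: if the reverse held, then applying \cref{item:retargetable-lem} and \cref{item:incr} once more---this time at $\phi_i\cdot\rtparam^A$, whose $\phi_i$-image is $\rtparam^A$ since $\phi_i$ is an involution---would force $f(A\mid\phi_i\cdot\rtparam^A)\leq f(B\mid\rtparam^A)<f(A\mid\rtparam^A)\leq f(B\mid\phi_i\cdot\rtparam^A)$, a contradiction. Since $\phi_i$ acts bijectively on $O$, $\abs{\phi_i\cdot S_{A>B}}=\abs{S_{A>B}}$.

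\emph{Step 2 (disjointness).} Suppose, for contradiction, that $\phi_i\cdot\rtparam^A=\phi_j\cdot\rtparam^{A'}=:\rtparam^\star$ with $i\neq j$ and $\rtparam^A,\rtparam^{A'}\in S_{A>B}$. By Step 1, $f(A\mid\rtparam^\star)<f(B\mid\rtparam^\star)$, so \cref{item:irrel-symm} is available at $\rtparam^\star$; applying it with $B_i^\star$ in place of its ``$B_j^\star$'' and $\phi_j$ in place of its ``$\phi_i$'' (legitimate since $i\neq j$), and using $\phi_j\cdot\rtparam^\star=\rtparam^{A'}$, gives $f(B_i^\star\mid\rtparam^\star)\leq f(B_i^\star\mid\rtparam^{A'})$. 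Chaining $f(A\mid\rtparam^A)\leq f(B_i^\star\mid\phi_i\cdot\rtparam^A)=f(B_i^\star\mid\rtparam^\star)$ (by \cref{item:retargetable-lem}) with this, then with \cref{item:incr} at $\rtparam^{A'}$, then with the defining inequality of $S_{A>B}$, yields $f(A\mid\rtparam^A)\leq f(B\mid\rtparam^{A'})<f(A\mid\rtparam^{A'})$. The identical chain with the roles of $(\rtparam^A,i)$ and $(\rtparam^{A'},j)$ exchanged gives $f(A\mid\rtparam^{A'})<f(A\mid\rtparam^A)$, which is absurd. Hence $\phi_1\cdot S_{A>B},\dots,\phi_n\cdot S_{A>B}$ are pairwise disjoint, so $\abs{S_{B>A}}\geq\abs{\bigcup_{i=1}^n\phi_i\cdot S_{A>B}}=\sum_{i=1}^n\abs{\phi_i\cdot S_{A>B}}=n\abs{S_{A>B}}$, which is \cref{eq:gen-quant-superior}.

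I expect Step 2 to be the main obstacle. It is the only place \cref{item:irrel-symm} is used, and \Cref{tab:counterex-irrel} shows the conclusion genuinely fails without it, so the proof is forced into exactly this symmetric two-sided squeeze; the fiddly part is the involution bookkeeping---tracking which permutation is applied to which orbit element---while keeping the final inequality of each chain strict. A lesser subtlety, already visible in Step 1, is checking that a retargeted parameter lands in the strict set $S_{B>A}$ rather than on the tie locus $\set{f(A\mid\cdot)=f(B\mid\cdot)}$; the involutivity of the $\phi_i$ is again what closes this.
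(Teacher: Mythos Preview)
Your proposal is correct and takes essentially the same approach as the paper: both run the sandwich $f(A\mid\rtparam^A)\le f(B_i^\star\mid\phi_i\cdot\rtparam^A)\le f(B\mid\phi_i\cdot\rtparam^A)$ from \cref{item:retargetable-lem,item:incr} (and its involution-reversed version) to push $S_{A>B}$ into $S_{B>A}$, then use the symmetric two-sided squeeze through \cref{item:irrel-symm} to get pairwise disjointness of the cosets, and finish by counting. The paper packages the first two steps as verifying the three conditions of \cref{def:retargetFnMulti} and then invokes \cref{thm:retarget-decision-n}, whereas you do the cardinality bound directly, but the inequality chains are line-for-line the same.
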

\begin{proof}
Let $\rtparam$ and $\rtparam^*$ be as described in the assumptions, and let $i\in \set{1,\ldots,n}$.
\begin{align}
    f(A  \mid \phi_i\cdot \rtparam^*) &= f(A  \mid \phi_i\inv\cdot \rtparam^*) \label{eq:involute-f-general}\\
    &\leq f(B_i^\star \mid \rtparam^*)\label{eq:joint-symm-arg}\\
    &\leq f(B  \mid \rtparam^*)\label{eq:leq-f}\\
    &< f(A  \mid \rtparam^*)\label{eq:assumpt-f-A}\\
    &\leq f(B_i^\star  \mid \phi_i\cdot \rtparam^*)\label{eq:joint-symm-arg-2}\\
    &\leq f(B  \mid \phi_i\cdot \rtparam^*).\label{eq:leq-f-2}
\end{align}
\Cref{eq:involute-f-general} follows because $\phi_i$ is an involution. \Cref{eq:joint-symm-arg} and \cref{eq:joint-symm-arg-2} follow by \cref{item:retargetable-lem}. \Cref{eq:leq-f} and \cref{eq:leq-f-2} follow by \cref{item:incr}. \Cref{eq:assumpt-f-A} holds by assumption on $\rtparam^*$. Then \cref{eq:leq-f-2} shows that for any $i$, $f(A  \mid \phi_i\cdot \rtparam^*)<f(B  \mid \phi_i\cdot \rtparam^*)$, satisfying \cref{def:retargetFnMulti}'s \cref{item:retargetable-n}.

This result's \cref{item:symmetry-closure-quant} satisfies \cref{def:retargetFnMulti}'s \cref{item:symmetry-closure-n}. We now just need to show \cref{def:retargetFnMulti}'s \cref{item:distinct}.

\paragraph*{Disjointness.}  Let $\rtparam',\rtparam'' \in \orbInsideCond[\rtparam]{A>B}$ and let $i\neq j$. Suppose $\phi_i\cdot \rtparam'=\phi_j\cdot \rtparam''$. We want to show that this leads to contradiction.
\begin{align}
    f(A \mid \rtparam'')&\leq f(B_j^\star \mid \phi_j\cdot \rtparam'')\label{eq:apply-involute-gen}\\
    &=f(B_j^\star \mid \phi_i\inv\cdot \rtparam')\label{eq:equal-dists}\\
    &\leq f(B_j^\star \mid \rtparam')\label{eq:apply-involute-gen-2}\\
    &\leq f(B \mid \rtparam')\label{eq:ineq-B-general}\\
    &<f(A \mid \rtparam')\label{eq:assumpt-general}\\
    &\leq f(B_i^\star \mid \phi_i\cdot \rtparam')\label{eq:start-general}\\
    &=f(B_i^\star \mid \phi_j\inv\cdot \rtparam'')\\
    &\leq f(B_i^\star \mid \rtparam'')\\
    &\leq f(B \mid \rtparam'')\\
    &<f(A \mid \rtparam'').\label{eq:final-general}
\end{align}
\Cref{eq:apply-involute-gen} follows by our assumption of \cref{item:retargetable-lem}. \Cref{eq:equal-dists} holds because we assumed that $\phi_j\cdot \rtparam''=\phi_i\cdot \rtparam'$, and the involution ensures that $\phi_i = \phi_i \inv $. \Cref{eq:apply-involute-gen-2} is guaranteed by our assumption of \cref{item:irrel-symm}, given that $\phi_i\inv \cdot \rtparam'=\phi_i\cdot \rtparam' \in \orbInsideCond[\rtparam]{B>A}]$ by the first half of this proof. \Cref{eq:ineq-B-general} follows by our assumption of \cref{item:incr}. \Cref{eq:assumpt-general} follows because we assumed that $\rtparam'\in \orbInsideCond[\rtparam]{A>B}$.

\Cref{eq:start-general} through \cref{eq:final-general} follow by the same reasoning, switching the roles of $\rtparam'$ and $\rtparam''$, and of $i$ and $j$. But then we have demonstrated that a quantity is strictly less than itself, a contradiction. So for all $\rtparam',\rtparam'' \in \orbInsideCond[\rtparam]{A>B}$, when $i\neq j$, $\phi_i\cdot \rtparam'\neq \phi_j\cdot \rtparam''$.

Therefore, we have shown \cref{def:retargetFnMulti}'s \cref{item:distinct}, and so $f$ is a $(\retarget, A\overset{n}{\to} B)$-retargetable function. Apply \cref{thm:retarget-decision-n} in order to conclude that \cref{eq:gen-quant-superior} holds.
\end{proof}

\begin{restatable}[Superset-of-copy containment]{definition}{superCopy}\label{def:super-copies}
Let $A,B\subseteq \genVS$. \emph{$B$ contains $n$ superset-copies $B_i^\star$ of $A$} when there exist involutions $\phi_1,\ldots,\phi_n$ such that $\phi_i\cdot A\subseteq B_i^\star\subseteq B$, and whenever $i\neq j$, $\phi_i \cdot B_j^\star = B_j^\star$.
\end{restatable}

\begin{restatable}[Looser sufficient conditions for orbit-level incentives]{lem}{orbGenQuantSimple}\label{lem:general-orbit-simple}
Suppose that $\retarget$ is a subset of a set acted on by $\genSym$ and is closed under permutation by $\genSym$. Let $A, B \in \abDomain\subseteq \powGenVs$. Suppose that $B$ contains $n$ superset-copies $B_i^\star \in \abDomain$ of $A$ via $\phi_i$. Suppose that $f(X\mid \rtparam)$ is increasing under joint permutation by $\phi_1,\ldots, \phi_n \in \genSym$ for all $X\in\abDomain,\rtparam\in\retarget$, and suppose that $\forall i: \phi_i\cdot A \in \abDomain$. Suppose that $f$ is monotonically increasing on its first argument. Then $f(B\mid \rtparam) \geqMost[n][\retarget] f(A\mid \rtparam).$
\end{restatable}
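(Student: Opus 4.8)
The plan is to obtain this lemma as a corollary of the quantitative general orbit lemma (\cref{lem:general-orbit-simple-nonunif}): the hypotheses here are more structured, so I would just show they imply \cref{lem:general-orbit-simple-nonunif}'s four conditions for every $\rtparam\in\retarget$. First I would unpack \cref{def:super-copies}: ``$B$ contains $n$ superset-copies $B_i^\star$ of $A$'' furnishes fixed involutions $\phi_1,\ldots,\phi_n\in\genSym$ with $\phi_i\cdot A\subseteq B_i^\star\subseteq B$ and, for $i\neq j$, $\phi_i\cdot B_j^\star=B_j^\star$. These $\phi_i$ (which are involutions, as required) and these $B_i^\star$ will serve, for \emph{every} $\rtparam\in\retarget$, as the choices of permutations and auxiliary sets demanded by \cref{lem:general-orbit-simple-nonunif}; there is no need to let them depend on $\rtparam$.

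Next I would fix an arbitrary $\rtparam\in\retarget$ and $\rtparam^*\in\orbInside$ and check each condition. For \cref{item:retargetable-lem}: since $f$ is increasing under joint permutation by $\phi_i$, we have $f(A\mid\rtparam^*)\leq f(\phi_i\cdot A\mid\phi_i\cdot\rtparam^*)$ — which is well-defined because $\phi_i\cdot A\in\abDomain$ by hypothesis and $\phi_i\cdot\rtparam^*\in\retarget$ by closure — and then monotonicity of $f$ in its first argument together with $\phi_i\cdot A\subseteq B_i^\star$ gives $f(\phi_i\cdot A\mid\phi_i\cdot\rtparam^*)\leq f(B_i^\star\mid\phi_i\cdot\rtparam^*)$, so $f(A\mid\rtparam^*)\leq f(B_i^\star\mid\phi_i\cdot\rtparam^*)$ (indeed unconditionally, which certainly suffices for the stated implication). \Cref{item:symmetry-closure-quant} is immediate, since $\retarget$ is closed under all of $\genSym$. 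For \cref{item:incr}: $B_i^\star\subseteq B$ plus monotonicity of $f$ in its first argument gives $f(B_i^\star\mid\rtparam^*)\leq f(B\mid\rtparam^*)$. For \cref{item:irrel-symm}: when $i\neq j$, applying ``increasing under joint permutation by $\phi_i$'' to the set $B_j^\star$ gives $f(B_j^\star\mid\rtparam^*)\leq f(\phi_i\cdot B_j^\star\mid\phi_i\cdot\rtparam^*)=f(B_j^\star\mid\phi_i\cdot\rtparam^*)$, using $\phi_i\cdot B_j^\star=B_j^\star$. Having verified all four conditions for every $\rtparam$, I would invoke \cref{lem:general-orbit-simple-nonunif} to conclude $f(B\mid\rtparam)\geqMost[n][\retarget]f(A\mid\rtparam)$.

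I do not expect a genuine obstacle here: the work is bookkeeping — matching each needed inequality to its source (increasing-under-joint-permutation, set-inclusion monotonicity, or closure of $\retarget$) and tracking the domains $\abDomain$ and $\retarget$ so that $f$ is actually defined wherever it is evaluated. The one point that deserves care — and that \cref{tab:counterex-irrel} shows is genuinely needed — is \cref{item:irrel-symm}: it is exactly there that the clause ``$\phi_i\cdot B_j^\star=B_j^\star$ for $i\neq j$'' of \cref{def:super-copies} is used, so I would invoke that clause precisely at that step and be careful not to rely on anything stronger elsewhere.
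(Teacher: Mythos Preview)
Your proposal is correct and matches the paper's own proof essentially step for step: reduce to \cref{lem:general-orbit-simple-nonunif}, then verify its four items using (i) joint-permutation increase plus $\phi_i\cdot A\subseteq B_i^\star$ with monotonicity, (ii) closure of $\retarget$, (iii) $B_i^\star\subseteq B$ with monotonicity, and (iv) joint-permutation increase together with $\phi_i\cdot B_j^\star=B_j^\star$.
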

\begin{proof}
We check the conditions of \cref{lem:general-orbit-simple-nonunif}. Let $\rtparam\in \retarget$, and let $\rtparam^*\in\prn{\orbi[\rtparam][\dimGen]}\cap \retarget$ be an orbit element.
\begin{enumerate}
    \item[\Cref{item:retargetable-lem}.] Holds since $f(A\mid \rtparam^*)\leq f(\phi_i\cdot A \mid \phi_i \cdot \rtparam^*)\leq f(B^\star_i \mid \phi_i\cdot \rtparam^*)$, with the first inequality by assumption of joint increasing under permutation, and the second following from monotonicity (as $\phi_i\cdot A\subseteq B^\star_i$ by superset copy \cref{def:super-copies}).
    \item[\Cref{item:symmetry-closure-quant}.] We have $\forall \rtparam^* \in \prn{\orbi[\rtparam^*][\dimGen]}\cap \retarget: f(B \mid \rtparam^*) < f(A \mid \rtparam^*) \implies \forall i=1,...,n: \phi_i \cdot \rtparam^* \in \retarget$ since $\retarget$ is closed under permutation.
    \item[\Cref{item:incr}.] Holds because we assumed that $f$ is monotonic on its first argument.
    \item[\Cref{item:irrel-symm}.] Holds because $f$ is increasing under joint permutation on \emph{all} of its inputs $X, \rtparam^{'}$, and \cref{def:super-copies} shows that $\phi_i\cdot B^\star_j=B^\star_j$ when $i\neq j$. Combining these two steps of reasoning, for \emph{all} $\rtparam'\in\retarget$, it is true that $f\prn{B_j^\star  \mid \rtparam'} \leq f\prn{\phi_i\cdot B_j^\star  \mid \phi_i\cdot \rtparam'} \leq f\prn{B_j^\star  \mid \phi_i\cdot \rtparam'}$.
\end{enumerate}
Then apply \cref{lem:general-orbit-simple-nonunif}.
\end{proof}

\begin{restatable}[Hiding an argument which is invariant under certain permutations]{lem}{invarSecondPlace}\label{lem:hide-second}
Let $\abDomain_1$, $\abDomain_2$, $\retarget$ be subsets of sets which are acted on by $\genSym$. Let $A\in\abDomain_1$, $C\in\abDomain_2$. Suppose there exist $\phi_1,\ldots,\phi_n \in \genSym$ such that $\phi_i\cdot C = C$. Suppose $h: \abDomain_1\times \abDomain_2 \times \retarget \to \reals$ satisfies $\forall i: h(A,C\mid \rtparam)\leq h(\phi_i\cdot A,\phi_i \cdot C\mid \phi_i \cdot \rtparam)$. For any $X\in\abDomain_1$, let $f(X\mid \rtparam) \defeq h(X, C \mid \rtparam)$. Then $f(A\mid \rtparam)$ is increasing under joint permutation by $\phi_i$.

Furthermore, if $h$ is \emph{invariant} under joint permutation by $\phi_i$, then so is $f$.
\end{restatable}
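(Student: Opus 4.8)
The plan is to prove this by directly unwinding the definition of $f$ and feeding it into the monotonicity hypothesis on $h$; the only assumption that does any work is the fixed-point condition $\phi_i\cdot C=C$, which lets us absorb the permutation of the hidden second argument and re-express everything in terms of $f$.

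Concretely, I would fix an index $i\in\set{1,\ldots,n}$ and an element $\rtparam\in\retarget$, and then argue along a short chain. First, by definition of $f$ we have $f(A\mid\rtparam)=h(A,C\mid\rtparam)$. Next, apply the hypothesis $h(A,C\mid\rtparam)\le h(\phi_i\cdot A,\phi_i\cdot C\mid\phi_i\cdot\rtparam)$. Then substitute $\phi_i\cdot C=C$ into the right-hand side to get $h(\phi_i\cdot A,C\mid\phi_i\cdot\rtparam)$, which is exactly $f(\phi_i\cdot A\mid\phi_i\cdot\rtparam)$ by the definition of $f$. Concatenating these three steps yields $f(A\mid\rtparam)\le f(\phi_i\cdot A\mid\phi_i\cdot\rtparam)$ for every $i$, and since $i$ and $\rtparam$ were arbitrary this is precisely the assertion that $f(A\mid\cdot)$ is increasing under joint permutation by $\phi_1,\ldots,\phi_n$ in the sense of \cref{def:joint}.

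For the ``furthermore'' clause I would run the identical three-line chain with each $\le$ replaced by $=$: invariance of $h$ under joint permutation by the $\phi_i$ gives $h(A,C\mid\rtparam)=h(\phi_i\cdot A,\phi_i\cdot C\mid\phi_i\cdot\rtparam)$, and then the substitution $\phi_i\cdot C=C$ turns this into $f(A\mid\rtparam)=f(\phi_i\cdot A\mid\phi_i\cdot\rtparam)$, so $f$ is invariant under joint permutation by the $\phi_i$.

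I do not expect any real obstacle here beyond bookkeeping; the only point worth checking is that all the permuted objects lie in the relevant domains so each expression is well-defined. This is automatic: well-definedness of the stated hypothesis on $h$ already requires $\phi_i\cdot A\in\abDomain_1$, $\phi_i\cdot C\in\abDomain_2$, and $\phi_i\cdot\rtparam\in\retarget$, and $\phi_i\cdot C=C\in\abDomain_2$ holds by assumption, so $f(\phi_i\cdot A\mid\phi_i\cdot\rtparam)$ is indeed defined and the chain goes through.
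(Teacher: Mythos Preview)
Your proof is correct and is essentially identical to the paper's own argument: the paper also unwinds $f(X\mid\rtparam)=h(X,C\mid\rtparam)\le h(\phi_i\cdot X,\phi_i\cdot C\mid\phi_i\cdot\rtparam)=h(\phi_i\cdot X,C\mid\phi_i\cdot\rtparam)=f(\phi_i\cdot X\mid\phi_i\cdot\rtparam)$, citing the hypothesis on $h$ for the inequality and $\phi_i\cdot C=C$ for the subsequent equality, and then notes that invariance of $h$ upgrades the inequality to an equality. Your added remark about well-definedness of the permuted arguments is a reasonable piece of bookkeeping that the paper leaves implicit.
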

\begin{proof}
\begin{align}
    f(X\mid \rtparam)&\defeq h(X, C\mid \rtparam)\\
    &\leq h(\phi_i\cdot X, \phi_i \cdot C\mid \phi_i \cdot \rtparam)\label{eq:invar-gen}\\
    &=h(\phi_i\cdot X, C\mid \phi_i \cdot \rtparam)\label{eq:C-invar-gen}\\
    &\eqdef f(\phi_i\cdot X \mid \phi_i\cdot \rtparam).
\end{align}
\Cref{eq:invar-gen} holds by assumption. \Cref{eq:C-invar-gen} follows because we assumed $\phi_i\cdot C = C$. Then $f$ is increasing under joint permutation by the $\phi_i$.

If $h$ is \emph{invariant}, then \cref{eq:invar-gen} is an equality, and so $\forall i: f(X\mid \rtparam)=f(\phi_i\cdot X\mid \phi_i\cdot \rtparam)$.
\end{proof}

\subsubsection{EU-determined functions}

\Cref{lem:card-EU-invar} and \cref{lem:invar-expect} together extend \citet{turner_optimal_2020}'s \eref{lemma}{E.17}{lem:helper-perm} beyond functions of $\max_{\x\in X_i}$, to any functions of cardinalities and of expected utilities of set elements.
\EUFnDefn*

\begin{restatable}[EU-determined functions are invariant under joint permutation]{lem}{cardEUInvar}\label{lem:card-EU-invar}
Suppose that $f:\prod_{i=1}^m \powGenVs\times \genVS \to \reals$ is an EU-determined function. Then for any $\phi\in \genSym$ and $X_1,\ldots,X_m, \uf$, we have $f(X_1,\ldots,X_m\mid \uf)=f(\phi\cdot X_1,\ldots,\phi\cdot X_m\mid \phi\cdot \uf)$.
\end{restatable}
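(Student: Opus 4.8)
The plan is to simply unfold the definition of an EU-determined function on both sides of the claimed equality and verify that the two data it depends on—the cardinalities $\abs{X_i}$ and the multisets of expected utilities $\brx{\x_i^\top\uf}_{\x_i\in X_i}$—are each unchanged when we simultaneously replace every $X_i$ by $\phi\cdot X_i$ and $\uf$ by $\phi\cdot\uf=\permute\uf$. Note that $\phi$ need not be an involution here; nothing in the argument uses that.

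First I would observe that permuting does not change cardinalities: the permutation matrix $\permute$ is invertible, so $\x\mapsto\permute\x$ is a bijection, and in particular $\abs{\phi\cdot X_i}=\abs{X_i}$ for every $i$. Hence the superscript $\abs{X_1},\ldots,\abs{X_m}$ that selects which $g$ is applied is the same on both sides. Second—the only computational step—I would check that the multiset of expected utilities is invariant. Fix $i$. Since $\x\mapsto\permute\x$ restricts to a bijection $X_i\to\phi\cdot X_i$, it suffices to show each individual term is preserved: for $\x\in X_i$,
\begin{equation*}
(\permute\x)^\top(\permute\uf)=\x^\top\permute^\top\permute\,\uf=\x^\top\uf,
\end{equation*}
because permutation matrices are orthogonal, i.e. $\permute^\top\permute$ is the identity (directly from the row representation $(\permute)_{ij}=\indic{i=\phi(j)}$). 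Applying the bijection per block $X_i$ ensures multiplicities transfer correctly, so $\brx{\x'^\top(\phi\cdot\uf)}_{\x'\in\phi\cdot X_i}=\brx{\x^\top\uf}_{\x\in X_i}$ as multisets, for each $i$.

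Finally I would plug these two facts into the defining identity of $f$:
\begin{equation*}
f(\phi\cdot X_1,\ldots,\phi\cdot X_m\mid\phi\cdot\uf)=g^{\abs{\phi\cdot X_1},\ldots,\abs{\phi\cdot X_m}}\prn{\brx{\x_1'^\top(\phi\cdot\uf)}_{\x_1'\in\phi\cdot X_1},\ldots}=g^{\abs{X_1},\ldots,\abs{X_m}}\prn{\brx{\x_1^\top\uf}_{\x_1\in X_1},\ldots}=f(X_1,\ldots,X_m\mid\uf),
\end{equation*}
which is the claim. There is essentially no obstacle beyond bookkeeping; the single substantive observation is the orthogonality of permutation matrices, and the only subtlety worth stating carefully is that the bijection must be applied separately on each argument block so that the equality of \emph{multisets} (not merely sets) of expected utilities is what gets used when feeding the arguments into $g$.
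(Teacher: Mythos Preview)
Your proposal is correct and follows essentially the same approach as the paper's proof: unfold the definition, observe that cardinalities are preserved because $\permute$ acts injectively, and that each expected utility $\x^\top\uf$ equals $(\permute\x)^\top(\permute\uf)$ by orthogonality of permutation matrices ($\permute^\top\permute=\mathbf{I}$). Your explicit remark that the bijection must be applied per block so that the equality is of \emph{multisets} is a nice point the paper leaves implicit.
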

\begin{proof}
\begin{align}
    &f(X_1,\ldots,X_m\mid \uf)\\
    &=g^{|X_1|,\ldots,|X_m|}\prn{\brx{\x_1^\top\uf}_{\x_1\in X_1},\ldots,\brx{\x_m^\top\uf}_{\x_m\in X_m}}\\
    &=g^{\abs{\phi\cdot X_1},\ldots,\abs{\phi\cdot X_m}}\prn{\brx{\x_1^\top\uf}_{\x_1\in X_1},\ldots,\brx{\x_m^\top\uf}_{\x_m\in X_m}}\label{eq:preserve-cardinal}\\
    &=g^{\abs{\phi\cdot X_1},\ldots,\abs{\phi\cdot X_m}}\prn{\brx{(\permute\x_{1})^\top(\permute\uf)}_{\x_1\in X_1},\ldots,\brx{(\permute\x_{m})^\top(\permute\uf)}_{\x_m\in X_m}}\label{eq:transpose}\\
    &= f(\phi\cdot X_1,\ldots,\phi\cdot X_m\mid \phi\cdot \uf).
\end{align}
\Cref{eq:preserve-cardinal} holds because permutations $\phi$ act injectively on  $\genVS$. \Cref{eq:transpose} follows because $\mathbf{I}=\permute\inv\permute=\permute^\top\permute$ by the orthogonality of permutation matrices, and $\x^\top\permute^\top=(\permute\x)^\top$, so $\x^\top\uf=\x^\top\permute^\top\permute \uf=(\permute\x)^\top(\permute\uf)$.
\end{proof}

\decisionSet*
\begin{proof}
By assumption, there exists a family of functions $\set{g^{i,|C|}}$ such that for all $X\subseteq \genVS$, $h(X, C\mid \uf)=g^{|X|,|C|}\prn{\brx{\x^\top\uf}_{\x\in X},\brx{\cv^\top\uf}_{\cv\in C}}$. Therefore, \cref{lem:card-EU-invar} shows that $h(A,C\mid \uf)$ is invariant under joint permutation by the $\phi_i$. Letting $\retarget\defeq \genVS$, apply \cref{lem:hide-second} to conclude that $f(X\mid \uf)$ is invariant under joint permutation by the $\phi_i$.

Since $f$ returns a probability of selecting an element of $X$, $f$ obeys the monotonicity probability axiom: If $X'\subseteq X$, then $f(X'\mid \uf)\leq f(X\mid \uf)$. Then $f(B\mid \uf)\geqMost[n][\genVS]f(A\mid \uf)$ by \cref{lem:general-orbit-simple}.
\end{proof}

\subsection{Particular results on retargetable functions}

\begin{restatable}[Quantilization, closed form]{definition}{quantClosed}\label{def:quantilize-closed}
Let the expected utility $q$-quantile threshold be
\begin{equation}
M_{q,\quantDist}(C\mid \uf)\defeq \inf \set{M \in \reals \mid \prob[\x \sim \quantDist]{\x^\top \uf > M}\leq q}.
\end{equation}
Let $C_{>M_{q,\quantDist}(C\mid \uf)}\defeq \set{\cv \in C \mid \cv^\top \uf > M_{q,\quantDist}(C\mid \uf)}$. $C_{=M_{q,\quantDist}(C\mid \uf)}$ is defined similarly. Let $\indic{L(x)}$ be the predicate function returning $ 1$ if $L(x)$ is true and $0$ otherwise. Then for $X\subseteq C$,
\begin{align}
Q_{q,\quantDist}(X\mid C, \uf )\defeq \sum_{\x \in X}\frac{\quantDist(\x)}{q}\prn{\indic{\x \in C_{>M_{q,\quantDist}(C\mid \uf)}} + \frac{\indic{\x \in C_{=M_{q,\quantDist}(C\mid \uf)}}}{\quantDist\prn{C_{=M_{q,\quantDist}(C\mid \uf)}}}\prn{q- \quantDist\prn{C_{>M_{q,\quantDist}(C\mid \uf)}}}},\label{eq:Q-quant-defn}
\end{align}
where the summand is defined to be $0$ if $\quantDist(\x)=0$ and $\x \in C_{=M_{q,\quantDist}(C\mid \uf)}$.
\end{restatable}

\begin{remark} Unlike \citet{taylor2016quantilizers}'s or \citet{careyuseful}'s definitions, \cref{def:quantilize-closed} is written in closed form and requires no arbitrary tie-breaking. Instead, in the case of an expected utility tie on the quantile threshold, \cref{eq:Q-quant-defn} allots probability to outcomes proportional to their probability under the base distribution $\quantDist$.
\end{remark}

Thanks to \cref{res:decision-making}, we straightforwardly prove most items of \cref{prop:rationalities} by just rewriting each decision-making function as an EU-determined function. Most of the proof's length comes from showing that the functions are measurable on $\uf$, which means that the results also apply for distributions over utility functions $\Dany \in \DSetAny$.

\differentRationalities*
\begin{proof}
\textbf{\Cref{item:rational}.} Consider
\begin{align}
    h(X, C \mid \uf)&\defeq \indic{\exists \x \in X:\forall \cv \in C: \x^\top\uf \geq \cv^\top \uf}\\
    &=\min\prn{1,\sum_{\x\in X}\prod_{\cv \in C}\indic{(\x-\cv)^\top\uf \geq 0}}.
\end{align}
Since halfspaces are measurable, each indicator function is measurable on $\uf$. The finite sum of the finite product of measurable functions is also measurable. Since $\min$ is continuous (and therefore measurable), $h(X, C\mid \uf)$ is measurable on $\uf$.

Furthermore, $h$ is an EU-determined function:
\begin{align}
    h(X,C\mid \uf)&=g\prn{\overbrace{\brx{\x^\top \uf}_{\x\in X}}^{V_X}, \overbrace{\brx{\cv^\top \uf}_{\cv\in C}}^{V_C}}\\
    &\defeq \indic{\exists v_x \in V_X:\forall v_c\in V_C: v_x\geq v_c}.
\end{align}
Then by \cref{lem:card-EU-invar}, $h$ is invariant to joint permutation by the $\phi_i$. Since $\phi_i\cdot C=C$, \cref{lem:hide-second} shows that $h'(X\mid \uf)\defeq h(X,C\mid \uf)$ is also invariant under joint permutation by the $\phi_i$. Since $h$ is a measurable function of $\uf$, so is $h'$. Then since $h'$ is bounded, \cref{lem:invar-expect} shows that $f(X\mid \Dany)\defeq \E{\uf \sim \Dany}{h'(X\mid \uf)}$ is invariant under joint permutation by $\phi_i$.

Furthermore, if $X'\subseteq X$, $f(X'\mid \Dany) \leq f(X \mid\Dany)$ by the monotonicity of probability. Then by \cref{lem:general-orbit-simple},
\begin{equation*}
    f(B\mid \Dany) \defeq \isOpt{B}{C,\Dany} \geqMost[n] \isOpt{A}{C,\Dany} \eqdef f(A\mid \Dany).
\end{equation*}

\textbf{\Cref{item:frac-rational}.} Because $X,C$ are finite sets, the denominator of $\fracOpt{X\mid C,\uf}$ is never zero, and so the function is well-defined.  $\fracOpt{X\mid C,\uf}$ is an EU-determined function:
\begin{align}
    \fracOpt{X\mid C,\uf}&=g\prn{\overbrace{\brx{\x^\top \uf}_{\x\in X}}^{V_X}, \overbrace{\brx{\cv^\top \uf}_{\cv\in C}}^{V_C}}\\
    &\defeq\frac{\abs{\brx{v\in V_X \mid v=\max_{v' \in V_C} v'}}}{\abs{\brx{\argmax_{v' \in V_C} v'}}},
\end{align}
with the $\brx{\cdot}$ denoting a multiset which allows and counts duplicates. Then by \cref{lem:card-EU-invar}, $\fracOpt{X\mid C,\uf}$ is invariant to joint permutation by the $\phi_i$.

We now show that $\fracOpt{X\mid C,\uf}$ is a measurable function of $\uf$.
\begin{align}
   \fracOpt{X\mid C,\uf}&\defeq \frac{\abs{\set{\argmax_{\cv'\in C} \cv'^\top \uf}\cap X}}{\abs{\set{\argmax_{\cv'\in C} \cv'^\top \uf}}}\\
    &= \frac{\sum_{\x \in X} \indic{\x\in \argmax_{\cv'\in C} \cv'^\top\uf}}{\sum_{\cv\in C} \indic{\cv\in \argmax_{\cv'\in C} \cv'^\top\uf}}\\
    &= \frac{\sum_{\x \in X} \prod_{\cv'\in C} \indic{\prn{\x-\cv'}^\top\uf\geq 0}}{\sum_{\cv\in C} \prod_{\cv'\in C} \indic{\prn{\cv-\cv'}^\top\uf\geq 0}}.\label{eq:prod-indic-frac}
\end{align}
\Cref{eq:prod-indic-frac} holds because $\x$ belongs to the $\argmax$ iff $\forall \cv \in C:\x^\top \uf \geq \cv^\top \uf$. Furthermore, this condition is met iff $\uf$ belongs to the intersection of finitely many closed halfspaces; therefore, $\set{\uf\in \genVS \mid \prod_{\cv \in C} \indic{\prn{\x-\cv}^\top\uf\geq 0}=1}$ is measurable. Then the sums in both the numerator and denominator are both measurable functions of $\uf$, and the denominator cannot vanish. Therefore, $\fracOpt{X\mid C,\uf}$ is a measurable function of $\uf$.

Let $g(X\mid \uf) \defeq \fracOpt{X\mid C,\uf}$. Since $\phi_i\cdot C = C$, \cref{lem:hide-second} shows that $g(X\mid \uf)$ is also invariant to joint permutation by $\phi_i$. Since $g$ is measurable and bounded $[0,1]$, apply \cref{lem:invar-expect} to conclude that $f(X\mid \Dany)\defeq \E{\uf \sim \Dany}{g(X\mid C, \uf)}$ is also invariant to joint permutation by $\phi_i$.

Furthermore, if $X'\subseteq X\subseteq C$, then $f(X'\mid \Dany)\leq f(X\mid \Dany)$. So apply \cref{lem:general-orbit-simple} to conclude that $\fracOpt{B\mid C,\Dany}\eqdef f(B\mid \Dany)\geqMost[n] f(A\mid \Dany) \defeq \fracOpt{A\mid C,\Dany}$.

\textbf{\Cref{item:anti-rational}.} Apply the reasoning in \cref{item:rational} with inner function $h(X\mid C, \uf)\defeq \indic{\exists \x \in X:\forall \cv \in C: \x^\top\uf \leq \cv^\top \uf}$.

\textbf{\Cref{item:boltzmann}.} Let $X\subseteq C$. $\boltz{X}{C, \uf}$ is the expectation of an EU function:
\begin{align}
    \boltz{X}{C, \uf}&=g_T\prn{\overbrace{\brx{\x^\top \uf}_{\x\in X}}^{V_X}, \overbrace{\brx{\cv^\top \uf}_{\cv\in C}}^{V_C}}\\
    &\defeq  \frac{\sum_{v \in V_X} e^{v/T}}{\sum_{v \in V_C}e^{v/T}}.\label{eq:meas-boltz}
\end{align}
Therefore, by \cref{lem:card-EU-invar}, $\boltz{X}{C, \uf}$ is invariant to joint permutation by the $\phi_i$.

Inspecting \cref{eq:meas-boltz}, we see that $g$ is continuous on $\uf$ (and therefore measurable), and bounded $[0,1]$ since $X\subseteq C$ and the exponential function is positive. Therefore, by  \cref{lem:invar-expect}, the expectation version is also invariant to joint permutation for all permutations $\phi\in\genSym$: $\boltz{X}{C, \Dany}=\boltz{\phi\cdot X}{\phi\cdot C, \phi\cdot \Dany}$.

Since $\phi_i\cdot C=C$, \cref{lem:hide-second} shows that $f(X\mid \Dany)\defeq \boltz{X}{C, \Dany}$ is also invariant under joint permutation by the $\phi_i$. Furthermore, if $X'\subseteq X$, then $f(X'\mid \Dany) \leq f(X\mid \Dany)$. Then apply \cref{lem:general-orbit-simple} to conclude that $\boltz{B}{C, \Dany}\eqdef f(B\mid \Dany) \geqMost[n] f(A\mid \Dany) \defeq \boltz{A}{C, \Dany}$.

\textbf{\Cref{item:best-k}.} Let involution $\phi\in \genSym$ fix $C$ (\ie{} $\phi\cdot C=C$).
\begin{align}
    &\best(X\mid C, \uf)\\
    &\defeq\!\! \E{\av_1,\ldots,\av_k\sim \text{unif}(C)}{\fracOpt{X\cap \{\av_1,\ldots,\av_k\}\mid \{\av_1,\ldots,\av_k\}, \uf}}\\
    &=\!\! \E{\av_1,\ldots,\av_k\sim \text{unif}(C)}{\fracOpt{(\phi\cdot X)\cap \{\phi\cdot\av_1,\ldots,\phi\cdot\av_k\}\!\mid\! \{\phi\cdot\av_1,\ldots,\phi\cdot\av_k\}, \phi\cdot \uf}}\label{eq:permute-frac}\\
    &=\!\!\E{\phi\cdot\av_1,\ldots,\phi\cdot\av_k\sim \text{unif}(\phi\cdot C)}{\fracOpt{(\phi\cdot X)\cap \{\phi\cdot\av_1,\ldots,\phi\cdot\av_k\}\!\mid\! \{\phi\cdot\av_1,\ldots,\phi\cdot\av_k\}, \phi\cdot \uf}}\label{eq:phi-unif}\\
    &\eqdef \best(\phi\cdot X\mid \phi\cdot C, \phi\cdot\uf).
\end{align}
By the proof of \cref{item:frac-rational},
\begin{multline*}
    \fracOpt{X\cap \{\av_1,\ldots,\av_k\}\mid \{\av_1,\ldots,\av_k\}, \uf} =\\
    \fracOpt{(\phi\cdot X)\cap \{\phi\cdot\av_1,\ldots,\phi\cdot\av_k\}\mid \{\phi\cdot\av_1,\ldots,\phi\cdot\av_k\}, \phi\cdot \uf};
\end{multline*}
thus, \cref{eq:permute-frac} holds. Since $\phi\cdot C=C$ and since the distribution is uniform, \cref{eq:phi-unif} holds. Therefore, $\best(X\mid C, \uf)$ is invariant to joint permutation by the $\phi_i$, which are involutions fixing $C$.

We now show that $\best(X\mid C, \uf)$ is measurable on $\uf$.
\begin{align}
    &\best(X\mid C, \uf)\\
    &\defeq \E{\av_1,\ldots,\av_k\sim \text{unif}(C)}{\fracOpt{X\cap \{\av_1,\ldots,\av_k\}\mid \{\av_1,\ldots,\av_k\}, \uf}}\\
    &=\frac{1}{\abs{C}^k}\sum_{\prn{\av_1,\ldots,\av_k} \in C^k} \fracOpt{X\cap \{\av_1,\ldots,\av_k\}\mid \{\av_1,\ldots,\av_k\}, \uf}.\label{eq:finite-sum-measure}
\end{align}
\Cref{eq:finite-sum-measure} holds because $\fracOpt{X\mid C,\uf}$ is measurable on $\uf$ by \cref{item:frac-rational}, and measurable functions are closed under finite addition and scalar multiplication. Then $\best(X\mid C, \uf)$ is measurable on $\uf$.

Let $g(X\mid \uf) \defeq \best(X\mid C, \uf)$. Since $\phi_i\cdot C = C$, \cref{lem:hide-second} shows that $g(X\mid \uf)$ is also invariant to joint permutation by $\phi_i$. Since $g$ is measurable and bounded $[0,1]$, apply \cref{lem:invar-expect} to conclude that $f(X\mid \Dany)\defeq \E{\uf \sim \Dany}{g(X\mid C, \uf)}$ is also invariant to joint permutation by $\phi_i$.

Furthermore, if $X'\subseteq X\subseteq C$, then $f(X'\mid \Dany)\leq f(X\mid \Dany)$. So apply \cref{lem:general-orbit-simple} to conclude that $\best(B\mid C, \Dany)\eqdef f(B\mid \Dany)\geqMost[n] f(A\mid \Dany) \defeq \best(A\mid C, \Dany)$.

\textbf{\Cref{item:satisfice}.} $\satisfice{X}{C, \uf}$ is an EU-determined function:
\begin{align}
    \satisfice{X}{C, \uf}&=g_t\prn{\overbrace{\brx{\x^\top \uf}_{\x\in X}}^{V_X}, \overbrace{\brx{\cv^\top \uf}_{\cv\in C}}^{V_C}} \\
    &\defeq \frac{\sum_{v\in V_X} \indic{v \geq t}}{\sum_{v\in V_C} \indic{v \geq t}},
\end{align}
with the function evaluating to $0$ if the denominator is $0$.\\
Then applying \cref{lem:card-EU-invar}, $\satisfice{X}{C, \uf}$ is invariant under joint permutation by the~$\phi_i$.

We now show that $\satisfice{X}{C, \uf}$ is measurable on $\uf$.
\begin{align}
    \satisfice{X}{C, \uf}&=
    \begin{cases} \frac{\sum_{\x\in X} \indic{\x\in \set{\x'\in\genVS \mid \x'^\top\uf \geq t}}}{\sum_{\cv\in C} \indic{\cv\in \set{\x'\in\genVS \mid \x'^\top\uf \geq t}}} & \exists \cv \in C: \cv^\top \uf \geq t,\\
    0 &\text{ else}.
    \end{cases}\label{eq:indic-measurable}
\end{align}
Consider the two cases. \[\exists \cv \in C: \cv^\top \uf \geq t \iff \uf \in \bigcup_{\cv \in C}\set{\uf'\in\genVS \mid \cv^\top\uf\geq t}.\]
The right-hand set is the union of finitely many halfspaces (which are measurable), and so the right-hand set is also measurable. Then the casing is a measurable function of $\uf$. Clearly the zero function is measurable. Now we turn to the first case.

In the first case, \cref{eq:indic-measurable}'s indicator functions test each $\x,\cv$ for membership in a closed halfspace with respect to $\uf$. Halfspaces are measurable sets. Therefore, the indicator function is a measurable function of $\uf$, and so are the finite sums. Since the denominator does not vanish within the case, the first case as a whole is a measurable function of $\uf$. Therefore, $\satisfice{X}{C, \uf}$ is measurable on $\uf$.

Since $\satisfice{X}{C, \uf}$ is measurable and bounded $[0,1]$ (as $X\subseteq C$), apply \cref{lem:invar-expect} to conclude that $\satisfice{X}{C, \Dany}= \satisfice{\phi\cdot X}{\phi\cdot C, \phi\cdot \Dany}$. Next, let $f(X\mid \Dany)\defeq \satisfice{X}{C, \Dany}$. Since we just showed that $\satisfice{X}{C, \Dany}$ is invariant to joint permutation by the involutions $\phi_i$ and since $\phi_i\cdot C = C$, $f(X\mid \Dany)$ is also invariant to joint permutation by $\phi_i$.

Furthermore, if $X'\subseteq X$, we have $f(X'\mid \Dany)\leq f(X\mid \Dany)$. Then applying \cref{lem:general-orbit-simple}, $\satisfice{B}{C, \uf}\eqdef f(B\mid \Dany) \geqMost[n] f(A\mid \Dany)\defeq \satisfice{A}{C, \uf}$.

\textbf{\Cref{item:quantilizer}.} Suppose $\quantDist$ is uniform over $C$ and consider any of the involutions $\phi_i$.
\begin{align}
    M_{q,\quantDist}(C\mid \uf)&\defeq \inf \set{M \in \reals \mid \prob[\x \sim \quantDist]{\x^\top \uf > M}\leq q}\\
    &= \inf \set{M \in \reals \mid \prob[\x \sim \quantDist]{(\permute[\phi_i] \x)^\top (\permute[\phi_i]\uf) > M}\leq q}\label{eq:permute-threshold}\\
    &=\inf \set{M \in \reals \mid \prob[\x \sim \phi_i \cdot \quantDist]{ \x^\top (\permute[\phi_i]\uf) > M}\leq q}\\
    &=\inf \set{M \in \reals \mid \prob[\x \sim \quantDist]{ \x^\top (\permute[\phi_i]\uf) > M}\leq q}\label{eq:uniform-threshold}\\
    &\eqdef M_{q,\quantDist}(\phi_i \cdot C\mid \phi_i \cdot \uf).\label{eq:threshold-invar}
\end{align}
\Cref{eq:permute-threshold} follows by the orthogonality of permutation matrices. \Cref{eq:uniform-threshold} follows because if $\x \in \supp[\quantDist]=C$, then $\phi_i \cdot \x \in C=\supp[\quantDist]$, and furthermore $\quantDist(\x)=\quantDist(\permute[\phi_i]\x)$ by uniformity.

Now we show the invariance of $C_{>M_{q,\quantDist}(C\mid \uf)}$ under joint permutation by $\phi_i$:
\begin{align}
    C_{>M_{q,\quantDist}(C\mid \uf)}&\defeq \set{\cv \in C \mid \cv^\top \uf > M_{q,\quantDist}(C\mid \uf)}\\
    &= \set{\cv \in C \mid (\permute[\phi_i]\cv)^\top (\permute[\phi_i]\uf) > M_{q,\quantDist}(\phi_i \cdot C\mid \phi_i\cdot \uf)}\label{eq:invar-C-strict}\\
    &=\set{\cv \in \phi_i \cdot C \mid \cv^\top (\permute[\phi_i]\uf) > M_{q,\quantDist}(\phi_i \cdot C\mid \phi_i\cdot \uf)}\\
    &\eqdef C_{>M_{q,\quantDist}(\phi_i\cdot C\mid \phi_i\cdot \uf)}.
\end{align}
\Cref{eq:invar-C-strict} follows by the orthogonality of permutation matrices and because $M_{q,\quantDist}(C\mid \uf)=M_{q,\quantDist}(\phi_i \cdot C\mid \phi_i\cdot \uf)$ by \cref{eq:threshold-invar}. A similar proof shows that $C_{=M_{q,\quantDist}(C\mid \uf)}=C_{=M_{q,\quantDist}(\phi_i\cdot C\mid \phi_i\cdot \uf)}$.

Recall that
\begin{align}
Q_{q,\quantDist}(X\mid C, \uf )\defeq \sum_{\x \in X}\frac{\quantDist(\x)}{q}\prn{\indic{\x \in C_{>M_{q,\quantDist}(C\mid \uf)}} + \frac{\indic{\x \in C_{=M_{q,\quantDist}(C\mid \uf)}}}{\quantDist\prn{C_{=M_{q,\quantDist}(C\mid \uf)}}}\prn{q- \quantDist\prn{C_{>M_{q,\quantDist}(C\mid \uf)}}}}.\label{eq:Q-restate}
\end{align}
$Q_{q,\quantDist}(X\mid C, \uf)=Q_{q,\quantDist}(\phi_i\cdot X\mid \phi_i\cdot C,\phi_i\cdot \uf)$, since $Q$ is the sum of products of $\phi_i$-invariant quantities.

$\quantDist(\x)$ is non-negative because $\quantDist$ is a probability distribution, and $q$ is assumed positive. The indicator functions $\indic{}$ are non-negative. By the definition of $M_{q,\quantDist}$, $\quantDist\prn{C_{>M_{q,\quantDist}(C\mid \uf)}}\leq q$. Therefore, \cref{eq:Q-restate} is the sum of non-negative terms. Thus, if $X'\subseteq X$, then $Q_{q,\quantDist}(X'\mid C, \uf)\leq Q_{q,\quantDist}(X\mid C, \uf)$.

Let $f(X\mid \uf)\defeq Q_{q,\quantDist}(X\mid C, \uf)$. Since $\phi_i\cdot C=C$ and since $Q_{q,\quantDist}(X\mid C, \uf)=Q_{q,\quantDist}(\phi_i\cdot X\mid\phi_i\cdot C, \phi_i\cdot \uf)$, \cref{lem:hide-second} shows that $f(X\mid \uf)$ is also jointly invariant to permutation by $\phi_i$. Lastly, if $X'\subseteq X$, we have $f(X'\mid \Dany)\leq f(X\mid \Dany)$.

Apply \cref{lem:general-orbit-simple} to conclude that $Q_{q,\quantDist}(B\mid C, \uf)\eqdef f(B\mid \uf) \geqMost[n][\genVS] f(A\mid \uf) \defeq Q_{q,\quantDist}(A\mid C, \uf)$.
\end{proof}

\begin{restatable}[Orbit tendencies occur for more quantilizer base distributions]{conjSec}{conjQuant}\label{conj:quant}
\Cref{prop:rationalities}'s \cref{item:quantilizer} holds for any base distribution $\quantDist$ over $C$ such that $\min_{\bv \in B} \quantDist(\bv)\geq \max_{\av\in A} \quantDist(\av)$. Furthermore, $Q_{q,\quantDist}(X\mid C, \uf)$ is measurable on $\uf$ and so $\geqMost[n][\genVS]$ can be generalized to $\geqMost[n]$.
\end{restatable}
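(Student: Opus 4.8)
\textbf{The plan} is to separate the claim into its measurability half and its substantive half, and to dispatch the former first since it is routine. For fixed $\quantDist$ supported on the finite set $C$, the map $M\mapsto\prob[\x\sim\quantDist]{\x^\top\uf>M}$ is a right-continuous step function, so the threshold $M_{q,\quantDist}(C\mid\uf)$ coincides with $\cv^\top\uf$ for whichever $\cv\in C$ realizes the relevant rank; the ranking of $C$ by expected utility is locally constant off the finitely many hyperplanes $\set{\uf\mid(\cv-\cv')^\top\uf=0}$, so the threshold is a piecewise-linear (hence Borel) function of $\uf$, the membership indicators for $C_{>M_{q,\quantDist}(C\mid\uf)}$ and $C_{=M_{q,\quantDist}(C\mid\uf)}$ are measurable in $\uf$, and $Q_{q,\quantDist}(X\mid C,\uf)$ is then a finite algebraic combination of these with a non-vanishing denominator, hence measurable and bounded in $[0,1]$. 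This lets \cref{lem:invar-expect} carry whatever joint-permutation and monotonicity behavior we establish for $Q_{q,\quantDist}(\cdot\mid C,\uf)$ over to $\E{\uf\sim\Dany}{Q_{q,\quantDist}(\cdot\mid C,\uf)}$, so that $\geqMost[n][\genVS]$ upgrades to $\geqMost[n]$.

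For the substantive half I would \emph{reduce to the uniform-base-distribution case already handled in \cref{prop:rationalities}'s \cref{item:quantilizer}}. Assume first that $\quantDist$ is rational-valued with common denominator $N$, and replace each $\cv\in C$ by $N\quantDist(\cv)$ formal copies, obtaining a set $C^{\mathrm{split}}$ of $N$ elements carrying $\quantDist$ as its \emph{uniform} distribution; copies of a common $\cv$ share expected utility $\cv^\top\uf$ and hence form a rank-block, so the closed-form fractional-block bookkeeping of \cref{def:quantilize-closed} gives the value-matching identity $Q_{q,\text{unif}(C^{\mathrm{split}})}(X^{\mathrm{split}}\mid C^{\mathrm{split}},\uf)=Q_{q,\quantDist}(X\mid C,\uf)$ for $X=A,B$, where $A^{\mathrm{split}},B^{\mathrm{split}}$ collect all copies of the elements of $A,B$. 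Next, lift each involution $\phi_i$ to an involution $\psi_i$ of $C^{\mathrm{split}}$: on a $\phi_i$-transposed pair $\set{\cv,\cv'}$ with $\quantDist(\cv)\leq\quantDist(\cv')$, bijectively swap $N\quantDist(\cv)$ copies of $\cv'$ with the copies of $\cv$ and fix the rest. Because $\min_{\bv\in B}\quantDist(\bv)\geq\max_{\av\in A}\quantDist(\av)$, for the pair $\set{\av,\phi_i\cdot\av}$ with $\av\in A$ we have $N\quantDist(\phi_i\cdot\av)\geq N\quantDist(\av)$, so $\psi_i\cdot A^{\mathrm{split}}\subseteq B_i^\star:=\set{\text{copies of elements of }\phi_i\cdot A}\subseteq B^{\mathrm{split}}$; one checks similarly that $\psi_i$ fixes $\set{\text{copies of }B_j}$ setwise for $j\neq i$ and that $\psi_i\cdot C^{\mathrm{split}}=C^{\mathrm{split}}$, so $B^{\mathrm{split}}$ contains $n$ superset-copies of $A^{\mathrm{split}}$ via the $\psi_i$ (\cref{def:super-copies}) with $C^{\mathrm{split}}$ fixed. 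Since $\text{unif}(C^{\mathrm{split}})$ is $\psi_i$-invariant, the machinery behind \cref{prop:rationalities}'s \cref{item:quantilizer} (expected-utility/cardinality invariance via \cref{lem:card-EU-invar} and \cref{lem:hide-second}, monotonicity of $Q$ in its first argument, then \cref{lem:general-orbit-simple}) applies verbatim on $C^{\mathrm{split}}$ and yields $Q_{q,\text{unif}(C^{\mathrm{split}})}(B^{\mathrm{split}}\mid C^{\mathrm{split}},\uf)\geqMost[n][\genVS]Q_{q,\text{unif}(C^{\mathrm{split}})}(A^{\mathrm{split}}\mid C^{\mathrm{split}},\uf)$; by value-matching this is exactly $Q_{q,\quantDist}(B\mid C,\uf)\geqMost[n][\genVS]Q_{q,\quantDist}(A\mid C,\uf)$. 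Finally I would remove the rationality assumption by approximating $\quantDist$ by rational $\quantDist_k\to\quantDist$ (nudging $B$-weights up and $A$-weights down so $\min_B\quantDist_k\geq\max_A\quantDist_k$ is retained), using that each orbit $\genSym\cdot\uf$ is finite and that $\uf\mapsto Q_{q,\quantDist}(X\mid C,\uf)$ is continuous in $\quantDist$ at every $\uf$ without an expected-utility tie, so the per-orbit sign counts converge and the cardinality inequality passes to the limit.

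\textbf{The main obstacle} is the reason this is only a conjecture: once $\quantDist$ is non-uniform the threshold $M_{q,\quantDist}(C\mid\uf)$ and the tie-set $C_{=M}$ are \emph{not} invariant under $\uf\mapsto\phi_i\cdot\uf$ — the proof of \cref{item:quantilizer} used $\quantDist(\x)=\quantDist(\phi_i\cdot\x)$ essentially — so the clean invariance route of \cref{lem:general-orbit-simple} is unavailable, and the copy-reduction, though it sidesteps this, concentrates the risk in two spots: (a) verifying the value-matching identity when $qN\notin\mathbb{Z}$, i.e.\ that \cref{def:quantilize-closed}'s proportional allocation on the threshold tie-set genuinely equals the finite uniform quantilizer's fractional block on $C^{\mathrm{split}}$; and (b) the limiting step when $\quantDist$ has boundary equalities $\quantDist(\bv)=\quantDist(\av)$, where there may be no slack to perturb into a rational distribution compatible with the prescribed $\phi_i$ while keeping the hypothesis, so the tie cases $Q_{q,\quantDist}(B\mid\uf_\ell)=Q_{q,\quantDist}(A\mid\uf_\ell)$ must be controlled directly. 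An alternative route bypasses the reduction and checks the four hypotheses of \cref{lem:general-orbit-simple-nonunif} for $f(X\mid\uf):=Q_{q,\quantDist}(X\mid C,\uf)$ with $B_i^\star:=\phi_i\cdot A$: \cref{item:symmetry-closure-quant} and \cref{item:incr} are immediate ($\genVS$ is closed; $Q$ is monotone in its first argument), but \cref{item:retargetable-lem} and \cref{item:irrel-symm} then require showing by a direct coupling that re-sorting $C$ by $\phi_i\cdot\uf$ shifts quantilizer mass off $A$-ranked elements and onto $B_i$- (resp.\ fixed $B_j$-) ranked ones even as the $q$-cut relocates, and it is exactly this ``mass moves the right way despite a moving threshold'' statement — where $\min_{\bv\in B}\quantDist(\bv)\geq\max_{\av\in A}\quantDist(\av)$ must do its work — that I expect to be the crux.
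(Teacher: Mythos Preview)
This statement is stated in the paper as a \emph{conjecture} and carries no proof there; so there is no paper argument to compare against—you are attempting the open problem.

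Your measurability sketch is sound. The substantive gap is in the copy reduction. The value-matching identity $Q_{q,\quantDist}(X\mid C,\uf)=Q_{q,\mathrm{unif}(C^{\mathrm{split}})}(X^{\mathrm{split}}\mid C^{\mathrm{split}},\uf)$ is reasonable, but the assertion that ``the machinery behind \cref{prop:rationalities}'s \cref{item:quantilizer} applies verbatim on $C^{\mathrm{split}}$'' does not hold. That machinery requires permutations in $\genSym$ acting \emph{simultaneously} on the outcome sets and on the parameter $\uf$; the proof of \cref{item:quantilizer} uses $\phi_i\cdot C=C$ together with $\quantDist(\x)=\quantDist(\phi_i\cdot\x)$ to get $M_{q,\quantDist}$-invariance and then invokes \cref{lem:hide-second} and \cref{lem:general-orbit-simple}. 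Your lifted $\psi_i$ are bijections of the \emph{formal copies}, not elements of $\genSym$, so they do not act on $\uf$ at all and cannot drive the orbit inequality $\geqMost[n][\genVS]$, which is defined through the $\genSym$-orbit of $\uf$. If instead you retain the original $\phi_i\in\genSym$ on both sides, then $\phi_i$ does \emph{not} fix $C^{\mathrm{split}}$ as a multiset: the multiplicity of $\cv$ in $\phi_i\cdot C^{\mathrm{split}}$ is $N\quantDist(\phi_i^{-1}\cdot\cv)$, which equals $N\quantDist(\cv)$ only when $\quantDist$ is $\phi_i$-invariant—precisely what you are trying to relax. Hence the threshold $M_{q,\mathrm{unif}(C^{\mathrm{split}})}(C^{\mathrm{split}}\mid\uf)$ genuinely moves under $\uf\mapsto\phi_i\cdot\uf$, \cref{lem:hide-second} is unavailable, and the split has merely re-expressed the non-invariance in multiset language rather than reduced it away. (The limiting step from rational to general $\quantDist$ is therefore moot.)

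Your alternative route—checking the four conditions of \cref{lem:general-orbit-simple-nonunif} directly for $f(X\mid\uf):=Q_{q,\quantDist}(X\mid C,\uf)$ with $B_i^\star:=\phi_i\cdot A$—is the honest path, and you correctly locate the crux at \cref{item:retargetable-lem} and \cref{item:irrel-symm}: one must show that swapping $\uf\mapsto\phi_i\cdot\uf$ moves quantilizer mass from $A$ toward $B_i$ (and does not drain it from $B_j$ for $j\neq i$) \emph{despite} the moving threshold, using only $\min_{\bv\in B}\quantDist(\bv)\geq\max_{\av\in A}\quantDist(\av)$. You do not supply that coupling argument, so as written your proposal leaves the conjecture open.
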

\section{Detailed analyses of {\mr} scenarios}\label{app:mr}
\subsection{Action selection}\label{app:bandit}
Consider a bandit problem with five arms $a_1,\ldots,a_5$ partitioned $A\defeq \set{a_1},B\defeq \set{a_2,\ldots,a_5}$, which each action has a definite utility $\uf_i$. There are $T=100$ trials. Suppose the training procedure $\train$ uses the $\epsilon$-greedy strategy to learn value estimates for each arm. At the end of training, $\train$ outputs a greedy policy with respect to its value estimates.  Consider any action-value initialization, and the learning rate is set $\alpha\defeq 1$. To learn an optimal policy, at worst, the agent just has to try each action once.

\begin{restatable}[Lower bound on success probability of the $\train$ bandit]{lem}{trainUB}\label{lem:train-ub}
Let $\uf\in\reals^5$ assign strictly maximal utility to $a_i$, and suppose $\train$ (described above) runs for $T\geq 5$ trials. Then $p_\train(\set{a_i}\mid\uf)\geq 1-(1-\frac{\epsilon}{4})^{T}$.
\end{restatable}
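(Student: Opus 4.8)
The plan is to reduce the statement to a hitting‑time estimate for $\epsilon$-greedy exploration. First I would record the structural consequence of the two hypotheses (deterministic per‑arm utilities and learning rate $\alpha = 1$): the value estimate of an arm equals \emph{exactly} its true utility $\uf_j$ from the moment it is first pulled onward, since the update $Q \mapsto Q + \alpha(\uf_j - Q) = \uf_j$ is idempotent. Hence, if by the end of the $T$ trials every arm has been pulled at least once, then $\train$'s estimate vector is exactly $\uf$, and because $\uf$ assigns strictly maximal utility to $a_i$, the greedy policy output by $\train$ is the deterministic policy that always plays $a_i$. More precisely, it already suffices that $a_i$ has been pulled and that every arm whose \emph{initial} estimate is $\geq \uf_i$ has been pulled: then $a_i$ (at estimate $\uf_i$) is the unique $\argmax$ of the final estimates, while arms never pulled but initialized below $\uf_i$ cannot interfere.

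Second, I would lower bound the probability that $a_i$ is pulled at least once. Conditioning on the history through trial $t-1$, $\epsilon$-greedy pulls $a_i$ at trial $t$ with probability at least $\epsilon/4$: if $a_i$ is currently the greedy arm it is pulled with probability $1-\epsilon$ (which dominates $\epsilon/4$ in the relevant exploration regime $\epsilon \leq \tfrac{4}{5}$), and otherwise the exploration branch, of probability $\epsilon$, selects $a_i$ uniformly among the four non‑greedy arms, i.e.\ with probability $\epsilon/4$. Writing $\mathcal{F}_{t-1}$ for the history, we thus get $\Prb(a_i \text{ not pulled at trial } t \mid \mathcal{F}_{t-1}) \leq 1 - \epsilon/4$, and iterating this with the tower rule gives $\Prb(a_i \text{ never pulled in the } T \text{ trials}) \leq (1-\epsilon/4)^T$. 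Combined with the first step this already yields $p_\train(\set{a_i}\mid\uf) \geq 1 - (1-\epsilon/4)^T$ in the clean case where no arm is initialized at or above $\uf_i$.

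The main obstacle is the clause ``consider any action‑value initialization'': an arm $a_j \neq a_i$ with a very large initial estimate remains the greedy arm—and so blocks $a_i$ in the output policy—until $a_j$ itself is pulled, so pulling $a_i$ alone is not literally sufficient. The plan to handle this is to note that such a blocking arm is, while unpulled, necessarily the greedy arm (its estimate exceeds every true utility), hence pulled with probability $1-\epsilon$ per trial, i.e.\ at least as fast as $a_i$; so the bottleneck event stays ``$a_i$ is never pulled.'' One then argues that the joint event ``$a_i$ pulled and every arm initialized $\geq \uf_i$ pulled'' still has probability $\geq 1 - (1-\epsilon/4)^T$. A naive union bound over this finite set of arms loses a constant factor and is too lossy; instead the argument couples the fast ($1-\epsilon$) pulls of the high‑initialization arms to the slow ($\epsilon/4$) pulls of $a_i$, using $T \geq 5$ to absorb the lower‑order terms. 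This coupling is where essentially all of the work lies; the rest is bookkeeping, and one should present the clean zero‑initialization case first so the reader sees where the $1-(1-\epsilon/4)^T$ comes from before the initialization bookkeeping obscures it.
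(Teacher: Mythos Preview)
Your core argument—note that with $\alpha=1$ an arm's estimate equals its true utility after one pull, then lower-bound the per-step probability of pulling $a_i$ by $\epsilon/4$ and iterate to get the $(1-\epsilon/4)^T$ miss probability—is exactly the paper's proof. The paper's argument is two sentences: it asserts that ``if action $a_i$ is ever drawn, it is assigned probability $1$ by the learned policy,'' and then bounds the probability that $a_i$ is never explored by $(1-\epsilon/4)^T$.

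The initialization obstacle you flag is real, but the paper does not engage with it: its proof is precisely your ``clean case,'' stated without qualification despite the ``any action-value initialization'' clause in the setup. Your proposed coupling argument therefore goes \emph{beyond} what the paper actually establishes; for the purpose of matching the paper you can stop at the clean-case bound. Your observation that a high-initialization blocker is necessarily the current greedy arm (hence cleared at rate $1-\epsilon$, faster than the $\epsilon/4$ rate governing $a_i$) is the right lever if one wanted to close this gap rigorously, but the paper does not do so.
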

\begin{proof}
Since the trained policy can be stochastic, \[p_\train(\set{a_i}\mid\uf)\geq \prob{a_i\text{ is assigned probability $1$ by the learned greedy policy}}.\]

Since $a_i$ has strictly maximal utility which is deterministic, and since the learning rate $\alpha\defeq 1$, if action $a_i$ is ever drawn, it is assigned probability $1$ by the learned policy. The probability that $a_i$ is never explored is at most $(1-\frac{\epsilon}{4})^{T}$, because at worst, $a_i$ is an ``explore'' action (and not an ``exploit'' action) at every time step, in which case it is ignored with probability $1-\frac{\epsilon}{4}$.
\end{proof}

\begin{restatable}[The $\train$ bandit is 4-retargetable]{prop}{retargetBandit}\label{res:retarget-bandit}
$p_\train$ is $(\reals^5,A\overset{4}{\to}B)$-retargetable.
\end{restatable}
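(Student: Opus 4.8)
The plan is to verify the three conditions of \cref{def:retargetFnMulti} directly, using the same four permutations for every targeting parameter: let $\phi_i\in\genSym$ (here $\genSym=S_5$) be the transposition swapping the arm-$a_1$ and arm-$a_{i+1}$ coordinates, for $i=1,\ldots,4$; each $\phi_i$ is an involution and $\retarget=\reals^5$ is closed under permutation, so \cref{item:symmetry-closure-n} is automatic and the orbit of any $\uf$ inside $\retarget$ is its full $S_5$-orbit. Since $A$ and $B$ partition the five arms, the learned exploitation policy assigns total probability $1$ to $A\cup B$, so $p_\train(A\mid\uf)+p_\train(B\mid\uf)=1$ and hence $\orbInsideCond[\uf]{A>B}=\set{\uf^*\in\orbInside[\uf]\mid p_\train(\set{a_1}\mid\uf^*)>\half}$. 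The one structural fact I would use is \emph{relabeling invariance}: $p_\train(\set{a_{\phi(k)}}\mid\phi\cdot\uf)=p_\train(\set{a_k}\mid\uf)$ for all $\phi\in\genSym$, which holds because running $\train$ on $\phi\cdot\uf$ (with a symmetric action-value initialization and $\epsilon$-greedy exploration, both of which treat the arms identically) is exactly the relabeling of running $\train$ on $\uf$.

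For \cref{item:retargetable-n}, fix $\uf^A\in\orbInsideCond[\uf]{A>B}$, so $p_\train(\set{a_1}\mid\uf^A)>\half$. Relabeling invariance with $\phi_i$ (which sends index $1$ to index $i{+}1$) gives $p_\train(\set{a_{i+1}}\mid\phi_i\cdot\uf^A)=p_\train(\set{a_1}\mid\uf^A)>\half$, and since $a_{i+1}\in B$ this forces $f(B\mid\phi_i\cdot\uf^A)=p_\train(B\mid\phi_i\cdot\uf^A)>\half>p_\train(A\mid\phi_i\cdot\uf^A)=f(A\mid\phi_i\cdot\uf^A)$, as required. For \cref{item:distinct}, suppose towards a contradiction that $\phi_i\cdot\uf^A=\phi_j\cdot\uf'$ with $i\neq j$ and $\uf'\in\orbInsideCond[\uf]{A>B}$. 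As each $\phi_i$ is an involution, $\uf^A=\phi_i\phi_j\cdot\uf'$, and a direct computation shows $\phi_i\phi_j$ maps index $i{+}1$ to index $1$; relabeling invariance then yields $p_\train(\set{a_1}\mid\uf^A)=p_\train(\set{a_{i+1}}\mid\uf')$, so $p_\train(\set{a_{i+1}}\mid\uf')>\half$. But $p_\train(\set{a_1}\mid\uf')>\half$ as well and $a_1\neq a_{i+1}$, so these two probabilities of disjoint events sum to more than $1$ --- a contradiction. Hence all three conditions hold and $p_\train$ is $(\reals^5,A\overset{4}{\to}B)$-retargetable; \cref{thm:retarget-decision-n} then additionally yields $p_\train(B\mid\rtparam)\geqMost[4][\reals^5]p_\train(A\mid\rtparam)$.

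The main obstacle is the relabeling-invariance step, which is clean only when the action-value initialization is permutation-symmetric (e.g. a common initial value for all arms). For a genuinely arm-dependent initialization one instead replaces it by a characterization of $\orbInsideCond[\uf]{A>B}$: using $\alpha=1$ and the determinism of the rewards, once an arm is pulled its estimate equals its true utility, and combining this with \cref{lem:train-ub} (and the fact that every arm is pulled with probability at least $1-(1-\epsilon/4)^T$) shows, provided $T=100$ and $\epsilon$ are such that $1-(1-\epsilon/4)^T>\half$, that $p_\train(\set{a_1}\mid\uf)>\half$ forces coordinate $1$ of $\uf$ to be the strict maximum --- so $\orbInsideCond[\uf]{A>B}$ is just the set of orbit elements with coordinate $1$ strictly largest --- after which the same transposition bookkeeping as above goes through. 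Pinning down this $\epsilon$-$T$ regime, and handling the boundary/tie cases carefully, is the part of the argument that requires the most care.
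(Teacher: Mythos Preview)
Your argument is correct and is in fact cleaner than the paper's own proof, though it rests on a slightly stronger hypothesis. The paper also takes $\phi_i$ to be the transposition $a_1\leftrightarrow a_{i}$ (your $a_1\leftrightarrow a_{i+1}$), but instead of invoking relabeling invariance it argues quantitatively: from $p_\train(A\mid\uf)>p_\train(B\mid\uf)$ it uses a symmetry argument to force $\uf_1$ to be the strict maximum, then applies \cref{lem:train-ub} (with $T=100$) to get $p_\train(\{a_1\}\mid\uf)\approx 1$ and hence $p_\train(\{a_i\}\mid\phi_i\cdot\uf)\approx 1$; distinctness is obtained by observing that $\phi_i\cdot\uf$ and $\phi_j\cdot\uf'$ induce visibly different action distributions (mass near $1$ on $a_i$ versus $a_j$). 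Your relabeling-invariance route sidesteps \cref{lem:train-ub} entirely and yields the conclusion for \emph{any} $T$ and $\epsilon$, with a particularly crisp distinctness step (two disjoint events each of probability $>\tfrac12$). The trade-off is exactly the one you flag: relabeling invariance needs the arms to be treated symmetrically by the initialization and tie-breaking, whereas the paper states the bandit with ``any action-value initialization'' and therefore falls back on the quantitative route---which is essentially the alternative you sketch in your final paragraph. Note that the paper's own ``symmetry arguments'' for ruling out constant/tied $\uf$ tacitly use the same permutation symmetry you do, so your caveat about the asymmetric-initialization case is well taken and, if anything, more carefully stated than in the original.
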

\begin{proof}
Let $\phi_i\defeq a_1 \leftrightarrow a_i$ for $i=2,\ldots,5$ and let $\retarget\defeq \reals^5$. We want to show that whenever $\uf\in\reals^5$ induces $p_\train(A\mid \uf)> p_\train(B\mid \uf)$, retargeting $\uf$ will get $\train$ to instead learn to pull a $B$-action: $p_\train(A\mid \phi_i\cdot \uf)< p_\train(B\mid \phi_i\cdot \uf)$.

Suppose we have such a $\uf$. If $\uf$ is constant, a symmetry argument shows that each action has equal probability of being selected, in which case $p_\train(A\mid \uf)=\frac{1}{5}<\frac{4}{5}=p_\train(B\mid \uf)$—a contradiction. Therefore, $\uf$ is not constant. Similar symmetry arguments show that $A$'s action $a_1$ has strictly maximal utility ($\uf_1>\max_{i=2,\ldots,5} \uf_i$).

But for $T=100$, \cref{lem:train-ub} shows that $p_\train(A\mid \uf)=p_\train(\set{a_1}\mid \uf)\approx 1$ and  $p_\train(\set{a_{i\neq 1}}\mid \uf)\approx 0\implies p_\train(B\mid \uf)=\sum_{i\neq 1}p_\train(\set{a_{i}}\mid \uf)\approx 0$. The converse statement holds when considering $\phi_i\cdot \uf$ instead of $\uf$. Therefore, $\train$ satisfies \cref{def:retargetFnMulti}'s \cref{item:retargetable-n} (retargetability). These $\phi_i\cdot \uf \in \retarget\defeq \reals^5$ because $\reals^5$ is closed under permutation by $S_5$, satisfying \cref{item:symmetry-closure-n}.

Consider another $\uf'\in\reals^5$ such that $p_\train(A\mid \uf')> p_\train(B\mid \uf')$, and consider $i\neq j$. By the above symmetry arguments, $\uf'$ must also assign $a_1$ maximal utility. By \cref{lem:train-ub}, $p_\train(\set{a_i}\mid \phi_i\cdot \uf)\approx 1$ and $p_\train(\set{a_j}\mid \phi_i\cdot \uf)\approx 0$ since $i\neq j$, and vice versa when considering $\phi_j\cdot \uf$ instead of $\phi_i\cdot \uf$. Then since $\phi_i\cdot \uf$ and $\phi_j\cdot \uf$ induce distinct probability distributions over learned actions, they cannot be the same utility function. This satisfies \cref{item:distinct}.
\end{proof}

\begin{restatable}[The $\train$ bandit has orbit-level tendencies]{cor}{retargetBandit2}
$p_\train(B\mid \uf)\geq^4_{\text{most: } \reals^5}p_\train(A\mid\uf)$.
\end{restatable}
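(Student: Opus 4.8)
The plan is to read off this statement as an immediate corollary of \cref{thm:retarget-decision-n}, since all of the substantive content has already been established in \cref{lem:train-ub} and \cref{res:retarget-bandit}. First I would unpack the notation: by \cref{def:ineq-most-dists-quant}, the claim $p_\train(B\mid\uf)\geq^4_{\text{most: }\reals^5}p_\train(A\mid\uf)$ means precisely that for every $\uf\in\reals^5$,
\[
\abs{\set{\uf'\in\prn{\orbi[\uf][5]}\cap\reals^5\mid p_\train(B\mid\uf')>p_\train(A\mid\uf')}}\geq 4\abs{\set{\uf'\in\prn{\orbi[\uf][5]}\cap\reals^5\mid p_\train(B\mid\uf')<p_\train(A\mid\uf')}},
\]
and since $\reals^5$ is closed under the action of $S_5$ we have $\prn{\orbi[\uf][5]}\cap\reals^5=\orbi[\uf][5]$, so the orbit-inside-$\retarget$ is simply the full orbit.

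Next I would invoke \cref{res:retarget-bandit}, which states that $p_\train$ is a $(\reals^5,A\overset{4}{\to}B)$-retargetable function in the sense of \cref{def:retargetFnMulti}: for each $\uf$ it exhibits the four transpositions $\phi_i\defeq a_1\leftrightarrow a_i$, $i=2,\dots,5$, and its proof checks \cref{item:retargetable-n} (whenever $A$ strictly beats $B$, the symmetry arguments force $a_1$ to have strictly maximal utility, and then \cref{lem:train-ub} gives $p_\train(\set{a_i}\mid\phi_i\cdot\uf)\approx 1$, so $B$ strictly beats $A$ under each $\phi_i\cdot\uf$), \cref{item:symmetry-closure-n} (trivial, as $\reals^5$ is permutation-closed), and \cref{item:distinct} (distinct $\phi_i$ produce distinct learned action distributions, hence distinct reward vectors).

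Finally, I would apply \cref{thm:retarget-decision-n} with $f\defeq p_\train$, $\retarget\defeq\reals^5$, and $n\defeq 4$; its conclusion $p_\train(B\mid\uf)\geqMost[4][\reals^5]p_\train(A\mid\uf)$ is exactly the displayed cardinality inequality above, which completes the proof. I expect essentially no obstacle at this step—the only thing to be careful about is the purely notational check that the corollary's symbol $\geq^4_{\text{most: }\reals^5}$ coincides with $\geqMost[4][\reals^5]$ from \cref{def:ineq-most-dists-quant}, so that the theorem applies verbatim; the genuinely nontrivial work (the symmetry reductions forcing $a_1$ to be strictly optimal and the $\epsilon$-greedy coverage bound that makes the retargeted probabilities satisfy the required strict inequalities) is already contained in \cref{lem:train-ub} and \cref{res:retarget-bandit}.
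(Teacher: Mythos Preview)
Your proposal is correct and follows exactly the paper's approach: the paper's entire proof is the single line ``Combine \cref{res:retarget-bandit} and \cref{thm:retarget-decision-n}.'' Your additional unpacking of \cref{def:ineq-most-dists-quant} and recap of why the retargetability conditions hold is accurate but more detailed than necessary, since those verifications are already contained in the cited results.
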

\begin{proof}
Combine \cref{res:retarget-bandit} and \cref{thm:retarget-decision-n}.
\end{proof}

\begin{figure}[t!]
    \includegraphics[width=\textwidth]{./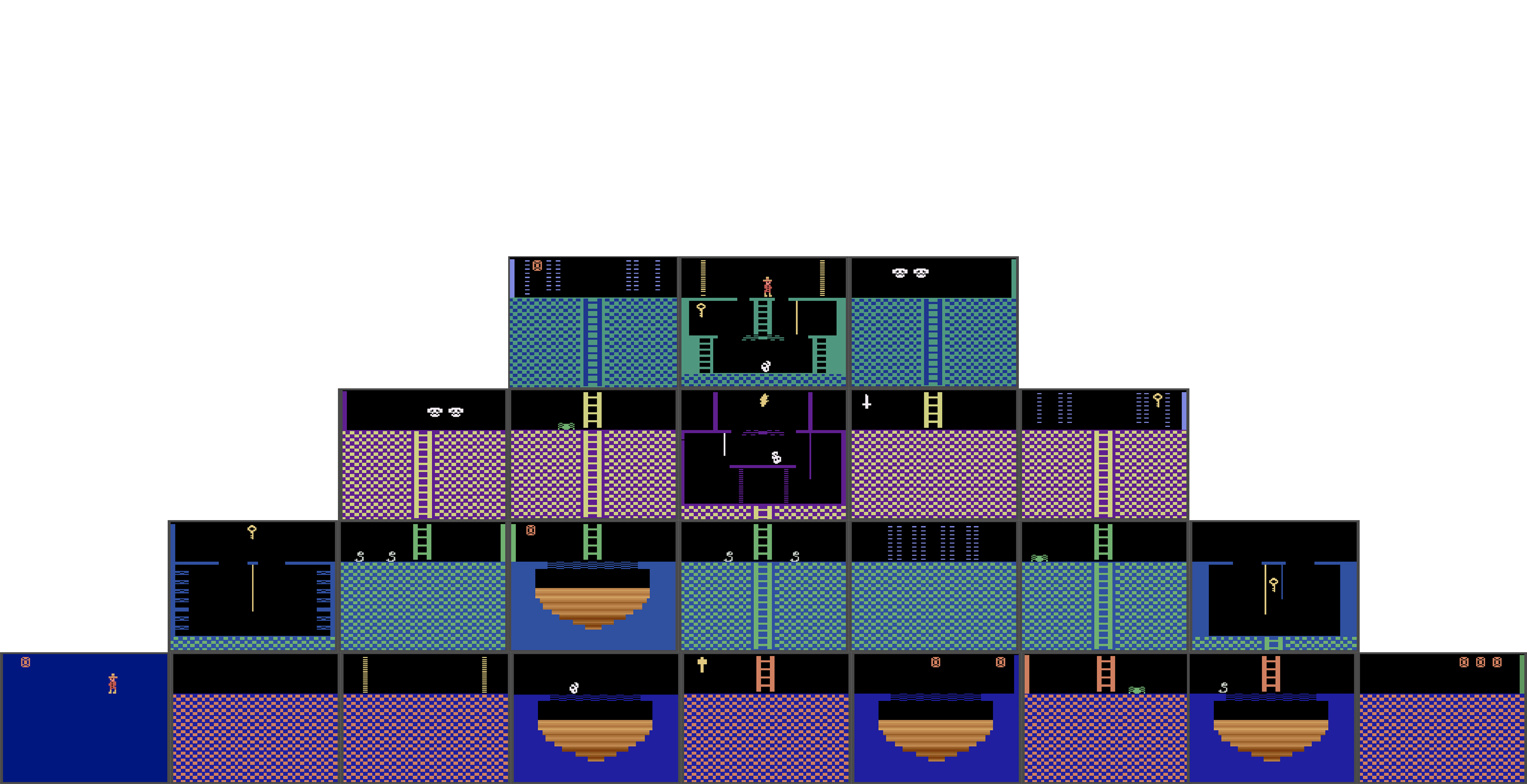}
    \caption{Map of the first level of Montezuma's Revenge.}
    \label{fig:mr-map}
\end{figure}

\subsection{Observation reward maximization}\label{sec:obs-analysis}
Let $T$ be a reasonably long rollout length, so that $\validObs$ is large—many different step-$T$ observations can be induced.

\begin{restatable}[Final reward maximization has strong orbit-level incentives in {\mr}]{prop}{retargetMaxMr}\label{res:final-reward-retarget-mr}
Let $n\defeq \lfloor\frac{\abs{\leave}}{\abs{\stay}}\rfloor$. $\fmax(\leave\mid R)\geqMost[n][\reals^\observe]\fmax(\stay\mid R)$.
\end{restatable}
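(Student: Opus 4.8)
The plan is to reduce the statement to the outcome-lottery framework of \cref{sec:outcomes} and then read it off \cref{res:decision-making}. First I would exploit the determinism of {\mr}: a policy rolled out from $\initMR$ realizes a single step-$T$ observation, and every $o\in\validObs$ is realized by some policy, so the outcome lottery induced by $\decide_{\max}(R)$ ranges over $C\defeq\set{\unitvec[o]\mid o\in\validObs}\subseteq\reals^\observe$, with $\unitvec[o]^\top R = R(o)$. Writing $\iota(X)\defeq\set{\unitvec[o]\mid o\in X}$ for $X\subseteq\observe$, the policy drawn from $\decide_{\max}(R)$ maximizes $R(o_T)$ with ties broken uniformly at random over the achievable maximizers, so $\fmax(X\mid R)=\fracOpt{\iota(X)\mid C,R}=h(\iota(X),C\mid R)$ for $h\defeq\fracOpt$. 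The proof of \cref{prop:rationalities}'s \cref{item:frac-rational} already shows that $h$ is an EU-determined function (\cref{def:EU-fn}) and that $p(Y\mid R)\defeq h(Y,C\mid R)$ returns a probability of selecting an element of $Y$ out of $C$; so all that remains is to exhibit $\iota(\leave)$ as containing $n$ copies of $\iota(\stay)$.

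The main step is that combinatorial construction. Since $n=\lfloor\frac{\abs{\leave}}{\abs{\stay}}\rfloor$ we have $\abs{\leave}\geq n\abs{\stay}$ (and I would assume $\abs{\stay}\geq 1$, else the claim is vacuous), so I would choose pairwise disjoint subsets $L_1,\ldots,L_n\subseteq\leave$ with $\abs{L_i}=\abs{\stay}$ together with bijections $\sigma_i:\stay\to L_i$, and take $\phi_i\in S_{\abs{\observe}}$ to be the involution transposing $o\leftrightarrow\sigma_i(o)$ for each $o\in\stay$ and fixing every other observation. Because $\stay=\validObs\setminus\leave$ is disjoint from $\leave\supseteq\bigcup_i L_i$, the three clauses of \cref{def:copies} follow: $\phi_i\cdot\iota(\stay)=\iota(L_i)\subseteq\iota(\leave)$; for $j\neq i$ the permutation $\phi_i$ fixes $L_j$ (disjoint from $\stay\cup L_i$), so $\phi_i\cdot\iota(L_j)=\iota(L_j)$; and $\phi_i$ permutes $\validObs=\stay\sqcup\leave$ within itself, so $\phi_i\cdot C=C$. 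Hence $\iota(\leave)$ contains $n$ copies of $\iota(\stay)$ via involutions that fix $C$.

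Then I would apply \cref{res:decision-making} with $A\defeq\iota(\stay)$, $B\defeq\iota(\leave)$, the set $C$ above, the EU-determined $h$, and $p$ as defined, concluding $p(\iota(\leave)\mid R)\geqMost[n][\reals^\observe]p(\iota(\stay)\mid R)$, i.e. $\fmax(\leave\mid R)\geqMost[n][\reals^\observe]\fmax(\stay\mid R)$. I expect the only real obstacle to be making the copies construction precise—especially checking that the $\phi_i$ fix $C$, which forces them to shuffle observations only within $\validObs$—and confirming that the uniform tie-breaking convention makes $\fmax$ literally equal to $\fracOpt$ rather than something requiring its own argument (if tie-breaking were arbitrary one could instead use the bare optimality indicator $\isOpt{\cdot}{C,R}$ and \cref{prop:rationalities}'s \cref{item:rational}, which yields the same conclusion). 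Finally, to support the surrounding remark that $\fmax$ ``tends to leave the room,'' I would plug the conclusion into \cref{lem:frac-orbit-geq}: at least $\frac{n}{n+1}$ of every orbit $\orbi[R][\abs{\observe}]$ satisfies $\fmax(\leave\mid R')\geq\fmax(\stay\mid R')$, which is close to $1$ since $n$ is large.
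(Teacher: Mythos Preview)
Your proposal is correct and follows essentially the same route as the paper: represent observations as standard basis vectors, identify $\fmax$ with $\mathrm{FracOptimal}$ via the uniform tie-breaking convention, build $n$ disjoint copies of $\stay$ inside $\leave$ by involutions that fix $\validObs$, and conclude. The only cosmetic difference is that the paper invokes \cref{prop:rationalities}'s \cref{item:frac-rational} directly, whereas you unpack that item and appeal to the more general \cref{res:decision-making}; the content is the same.
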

\begin{proof}
Consider the vector space representation of observations, $\reals^{\abs{\observe}}$. Define $A\defeq\{\unitvec[o]\mid o\in \stay\}, B\defeq\{\unitvec[o]\mid o\in \leave\}$, and $C\defeq \validObs=A\cup B$ the union of $\stay,\leave$.

Since $\abs{\leave}\geq \abs{\stay}$ by assumption that $T$ is reasonably large, consider the involution $\phi_1\in S_{\abs{\observe}}$ which embeds $\stay$ into $\leave$, while fixing all other observations. If possible, produce another involution $\phi_2$ which also embeds $\stay$ into $\leave$, which fixes all other observations, and which ``doesn't interfere with $\phi_1$'' (\ie{} $\phi_2\cdot (\phi_1\cdot A)=\phi_1\cdot A$). We can produce $n\defeq \lfloor\frac{\abs{\leave}}{\abs{\stay}}\rfloor$ such involutions. Therefore, $B$ contains $n$ copies (\cref{def:copies}) of $A$ via involutions $\phi_1,\ldots,\phi_n$. Furthermore, $\phi_i\cdot (A\cup B)=A\cup B$, since each $\phi_i$ swaps $A$ with $B'\subseteq B$, and fixes all $\bv\in B\setminus B'$ by assumption. Thus, $\phi\cdot C=C$.

By \cref{prop:rationalities}'s \cref{item:frac-rational}, $\fracOpt{B\mid C,  R}\geqMost[n][\reals^\observe] \fracOpt{A\mid C,  R}$. Since $\fmax$ uniformly randomly chooses a maximal-reward observation to induce, $\forall X\subseteq C: \fmax(X\mid  R)=\fracOpt{X\mid C,  R}$. Therefore, $\fmax(\leave\mid R)\geqMost[n][\reals^\observe]\fmax(\stay\mid R)$.
\end{proof}

We want to reason about the probability that $\decide$ leaves the initial room by time $T$ in its rollout trajectories.
\begin{align}
    p_{\decide}(\text{leave}\mid \rtparam)&\defeq \prob[\substack{\pi\sim \decide(\rtparam),\\ \tau \sim \pi\mid \initMR}]{\tau\text{ has left the first room by step $T$}},\\
    p_{\decide}(\text{stay}\mid \rtparam)&\defeq \prob[\substack{\pi\sim \decide(\rtparam),\\ \tau \sim \pi\mid \initMR}]{\tau\text{ has not left the first room by step $T$}}.
\end{align}

We want to show that reward maximizers tend to leave the room: $p_{\max}(\text{leave}\mid R)\geqMost[n][\retarget] p_{\max}(\text{stay}\mid  R)$. However, we must be careful: In general, $\fmax(\leave\mid R)\neq p_{\max}(\text{leave}\mid  R)$ and $\fmax(\stay\mid R)\neq p_{\max}(\text{stay}\mid  R)$. For example, suppose that $o_T\in \leave$. By the definition of $\leave$, $o_T$ can only be observed if the agent has left the room by time step $T$, and so the trajectory $\tau$ must have left the first room. The converse argument does not hold: The agent could leave the first room, re-enter, and then wait until time $T$. Although one of the doors would have been opened (\cref{fig:montezuma}), the agent can also open the door without leaving the room, and then realize the same step-$T$ observation. Therefore, this observation doesn't belong to $\leave$.

\begin{restatable}[Room-status inequalities for {\mr}]{lem}{probRoom}\label{res:prob-room}
\begin{align}
     p_{\decide}(\text{stay}\mid  \rtparam)&\leq p_{\decide}(\stay\mid \rtparam),\label{eq:room-1}\\
     \text{and } p_{\decide}(\leave\mid \rtparam)&\leq p_{\decide}(\text{leave}\mid \rtparam).\label{eq:room-2}
\end{align}
\end{restatable}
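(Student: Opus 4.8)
The plan is to observe that both displayed inequalities are instances of a single one-line argument: an inclusion of events, followed by monotonicity of probability. Fix $\rtparam\in\retarget$. All four quantities are probabilities under the \emph{same} experiment — draw $\pi\sim\decide(\rtparam)$, then roll out $\tau\sim\pi\mid\initMR$ — so it suffices to compare the underlying events in the trajectory sample space, with no need to manipulate the probabilities themselves.

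For \cref{eq:room-2} I would argue the event inclusion $\set{o_T\in\leave}\subseteq\set{\tau\text{ has left the first room by step }T}$: by the very definition of $\leave$ as those step-$T$-reachable observations that \emph{can only} be realized by leaving the first room, any trajectory whose step-$T$ observation lies in $\leave$ must have exited the first room within its first $T$ steps. Monotonicity of probability then gives $p_{\decide}(\leave\mid\rtparam)\leq p_{\decide}(\text{leave}\mid\rtparam)$. For \cref{eq:room-1} I would dually argue $\set{\tau\text{ has not left the first room by step }T}\subseteq\set{o_T\in\stay}$: since the rollout starts at $\initMR$ and runs for $T$ steps, $o_T$ is always $T$-reachable, i.e.\ $o_T\in\validObs$ with probability $1$; and if $\tau$ never left the first room, then $o_T\notin\leave$ (an observation in $\leave$ would force a departure), so $o_T\in\validObs\setminus\leave=\stay$. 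Monotonicity again yields $p_{\decide}(\text{stay}\mid\rtparam)\leq p_{\decide}(\stay\mid\rtparam)$. One inequality also follows from the other by complementation, since $\prob{o_T\in\validObs}=1$ forces $p_{\decide}(\stay\mid\rtparam)+p_{\decide}(\leave\mid\rtparam)=1$, and a trajectory either has or has not left the room, so $p_{\decide}(\text{stay}\mid\rtparam)+p_{\decide}(\text{leave}\mid\rtparam)=1$.

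The only point to be careful about — and the reason the lemma states inequalities rather than equalities — is that the reverse inclusions genuinely fail: a trajectory may leave the first room, re-enter it, open a door from the inside, and wait out the clock, producing a step-$T$ observation that is realizable without ever leaving and hence lies in $\stay$ even though the trajectory did leave. This is exactly the asymmetry already discussed in the paragraph preceding the lemma, so no extra work is needed beyond the two inclusions above; a routine check that the events $\set{o_T\in X}$ and $\set{\tau\text{ has (not) left by step }T}$ are measurable in the trajectory $\sigma$-algebra completes the argument. I do not anticipate any real obstacle: the only substantive content is correctly reading off the set-theoretic relationship between $\leave$, $\stay$, $\validObs$, and the physical event of having left the room.
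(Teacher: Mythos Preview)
Your proposal is correct and uses essentially the same idea as the paper: both arguments hinge on the defining property of $\leave$ (an observation in $\leave$ forces the trajectory to have left) together with the fact that $o_T\in\validObs$ almost surely. The paper spells this out via the law of total probability—conditioning on the value of $o_T$, discarding terms outside $\validObs$ or inside $\leave$, and summing—whereas you phrase it more directly as an inclusion of events followed by monotonicity of probability; your presentation is shorter, and your complementation remark (that one inequality implies the other) is a nice observation the paper does not make, but the underlying content is the same.
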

\begin{proof}
For any $\decide$,
\begin{align}
&p_{\decide}(\text{stay}\mid \rtparam)\\
&= \prob[\substack{\pi\sim \decide(\rtparam),\\ \tau \sim \pi\mid \initMR}]{\tau\text{ stays through step $T$}}\\
&= \sum_{o\in \observe}\algprob{\text{$o$ at step $T$ of $\tau$}}\algprob{\text{$\tau$ stays}\mid \text{$o$ at step $T$}}\\
     &= \sum_{o \in \validObs}\algprob{\text{$o$ at step $T$}}\algprob{\text{$\tau$ stays}\mid \text{$o$ at step $T$}}\label{eq:discard-invalid}\\
     &= \sum_{o \in \stay} \algprob{\text{$o$ at step $T$}}\algprob{\text{$\tau$ stays}\mid \text{$o$ at step $T$}}\label{eq:cant-leave}\\
     &\leq \sum_{o \in \stay} \algprob{\text{$o$ at step $T$}}\\
     &=\algprob{o_T\in \stay}\\
     &\eqdef p_{\decide}(\stay\mid\rtparam).
\end{align}
\Cref{eq:discard-invalid} holds because the definition of $\validObs$ ensures that if $o\not \in \validObs$, then $\prob[\substack{\pi\sim \decide(\rtparam),\\ \tau \sim \pi\mid \initMR}]{o\mid \rtparam}=0$. Because $o\in\validObs\setminus\stay$ implies that $\tau$ left and so
\begin{equation*}
    \algprob{\text{$\tau$ stays}\mid \text{$o$ at step $T$}}=0,
\end{equation*}
\cref{eq:cant-leave} follows. Then we have shown \cref{eq:room-1}.

For \cref{eq:room-2},
\begin{align}
&p_{\decide}(\leave\mid \rtparam)\\
&\defeq \algprob{o_T\in \leave}\\
&=\sum_{o \in \leave} \algprob{\text{$o$ at step $T$}}\\
&=\sum_{o \in \leave} \algprob{\text{$o$ at step $T$}}\algprob{\text{$\tau$ leaves by step $T$}\mid \text{$o$ at step $T$}}\label{eq:equal-prob-leave}\\
&=\sum_{o\in \observe} \algprob{\text{$o$ at step $T$}}\algprob{\text{$\tau$ leaves by step $T$}\mid \text{$o$ at step $T$}}\label{eq:contain-room}\\
&= \prob[\substack{\pi\sim \decide(\rtparam),\\ \tau \sim \pi\mid \initMR}]{\tau\text{ has left the first room by step $T$}}\\
&\eqdef p_{\decide}(\text{leave}\mid \rtparam).
\end{align}
\Cref{eq:equal-prob-leave} follows because, since $o\in\leave$ are only realizable by leaving the first room, this implies $\algprob{\text{$\tau$ leaves by step $T$}\mid \text{$o$ at step $T$}}=1$. \Cref{eq:contain-room} follows because $\leave\subseteq \observe$, and probabilities are non-negative. Then we have shown \cref{eq:room-2}.
\end{proof}

\begin{restatable}[Final reward maximizers tend to leave the first room in {\mr}]{cor}{maxLeaveRoom}
\begin{equation}
    p_{\max}(\text{leave}\mid  R)\geqMost[n][\reals^\observe] p_{\max}(\text{stay}\mid  R).
\end{equation}
\end{restatable}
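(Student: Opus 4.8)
The plan is to sandwich the room-status probabilities between the observation-set probabilities and then push the known orbit-level inequality through via the limited-transitivity lemma \cref{lem:transit-geq-strong}. Concretely, I would set $f_0(\rtparam)\defeq p_{\max}(\text{leave}\mid\rtparam)$, $f_1(\rtparam)\defeq \fmax(\leave\mid\rtparam)$, $f_2(\rtparam)\defeq \fmax(\stay\mid\rtparam)$, and $f_3(\rtparam)\defeq p_{\max}(\text{stay}\mid\rtparam)$, all viewed as functions $\reals^\observe\to\reals$, and aim to verify the three hypotheses of \cref{lem:transit-geq-strong}: $f_1\geqMost[n][\reals^\observe] f_2$, and pointwise $f_0(\rtparam)\geq f_1(\rtparam)$ and $f_2(\rtparam)\geq f_3(\rtparam)$ for all $\rtparam$.

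The first hypothesis is exactly \cref{res:final-reward-retarget-mr}, which gives $\fmax(\leave\mid R)\geqMost[n][\reals^\observe]\fmax(\stay\mid R)$ for $n\defeq\lfloor\abs{\leave}/\abs{\stay}\rfloor$. For the two pointwise inequalities I would instantiate \cref{res:prob-room} with $\decide\defeq\decide_{\max}$: \cref{eq:room-2} yields $\fmax(\leave\mid R)=p_{\decide_{\max}}(\leave\mid R)\leq p_{\decide_{\max}}(\text{leave}\mid R)=p_{\max}(\text{leave}\mid R)$, i.e. $f_0\geq f_1$, while \cref{eq:room-1} yields $p_{\max}(\text{stay}\mid R)=p_{\decide_{\max}}(\text{stay}\mid R)\leq p_{\decide_{\max}}(\stay\mid R)=\fmax(\stay\mid R)$, i.e. $f_2\geq f_3$. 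Applying \cref{lem:transit-geq-strong} then gives $f_0\geqMost[n][\reals^\observe] f_3$, which is precisely $p_{\max}(\text{leave}\mid R)\geqMost[n][\reals^\observe] p_{\max}(\text{stay}\mid R)$.

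There is no real obstacle here, since all the work has been done in \cref{res:final-reward-retarget-mr} and \cref{res:prob-room}; the only point to state carefully is the identification of the decision function $\fmax$ of \cref{res:final-reward-retarget-mr} with the map $p_{\decide_{\max}}$ of \cref{eq:f-decide} restricted to observation subsets, so that ``$\stay$'' and ``$\leave$'' are being used consistently as \emph{subsets of $\observe$} (as opposed to the trajectory-level events ``stay''/``leave''), which is exactly the gap that \cref{res:prob-room} bridges. Keeping these two readings straight is the one place a careless argument could go wrong.
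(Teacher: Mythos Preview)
Your proposal is correct and matches the paper's proof essentially line for line: the paper applies \cref{lem:transit-geq-strong} with exactly your choices $f_0(R)\defeq p_{\max}(\text{leave}\mid R)$, $f_1(R)\defeq \fmax(\leave\mid R)$, $f_2(R)\defeq \fmax(\stay\mid R)$, $f_3(R)\defeq p_{\max}(\text{stay}\mid R)$, invoking \cref{res:prob-room} for the pointwise bounds and \cref{res:final-reward-retarget-mr} for the orbit-level middle inequality. Your added remark distinguishing the observation-subset events $\stay/\leave$ from the trajectory-level events ``stay''/``leave'' is precisely the point of \cref{res:prob-room}, so nothing further is needed.
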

\begin{proof}
Using \cref{res:prob-room} and \cref{res:final-reward-retarget-mr}, apply \cref{lem:transit-geq-strong} with $f_0( R)\defeq p_{\max}(\text{leave}\mid  R),f_1( R)\defeq \fmax(\leave\mid  R), f_2( R)\defeq \fmax(\stay\mid  R),f_3( R)\defeq p_{\max}(\text{stay}\mid  R)$ to conclude that $p_{\max}(\text{leave}\mid  R)\geqMost[n][\reals^\observe] p_{\max}(\text{stay}\mid  R).$
\end{proof}

\subsection{Featurized reward maximization}\label{sec:feat-analysis}
$\retarget\defeq \reals^{\observe}$ assumes we will specify complicated reward functions over observations, with $\abs{\observe}$ degrees of freedom in their specification. Any observation can get any number. However, reward functions are often specified more compactly. For example, in \cref{sec:rl-analyze}, the (additively) featurized reward function $R_\featFn(o_T)\defeq \featFn(o_T)^\top \alpha$ has four degrees of freedom. Compared to typical reward functions (which would look like ``random noise'' to a human), $R_\featFn$ more easily trains competent policies because of the regularities between the reward and the state features. 

In this setup, $\fmax$ chooses a policy which induces a step-$T$ observation with maximal reward. Reward depends only on the feature vector of the final observation—more specifically, on the agent's item counts. There are more possible item counts available by first leaving the room, than by staying.

We will now conduct a more detailed analysis and conclude that $\fmax(\leave\mid \alpha)\geqMost[3][\reals^4]\fmax(\stay\mid \alpha)$. Informally, we can retarget which items the agent prioritizes, and thereby retarget from $\stay$ to $\leave$.

Consider the featurization function which takes as input an observation $o\in\observe$:
\begin{equation}
    \featFn(o)\defeq \colvec{4}{\text{\# of keys in inventory shown by $o$}}{\text{\# of swords in inventory shown by $o$}}{\text{\# of torches in inventory shown by $o$}}{\text{\# of amulets in inventory shown by $o$}}.
\end{equation}
Consider $A_{\text{feat}}\defeq \set{\featFn(o) \mid o\in O_\text{stay}}, B_{\text{feat}}\defeq \set{\featFn(o) \mid o\in O_\text{leave}}$.

Let $\unitvec[i]\in\reals^4$ be the standard basis vector with a $ 1$ in entry $i$ and $ 0$ elsewhere. When restricted to the room shown in \cref{fig:montezuma}, the agent can either acquire the key in the first room and retain it until step $T$ ($\unitvec[1]$), or reach time step $T$ empty-handed ($\mathbf{0}$). We conclude that $A_{\text{feat}}=\set{\unitvec[1],\mathbf{0}}$.

For $B_{\text{feat}}$, recall that in \cref{sec:obs-reward} we assumed the rollout length $T$ to be reasonably large. Then by leaving the room, some realizable trajectory induces $o_T$ displaying an inventory containing only a sword ($\unitvec[2]$), or only a torch ($\unitvec[3]$), or only an amulet ($\unitvec[4]$), or nothing at all ($\mathbf{0}$). Therefore, $\set{\unitvec[2], \unitvec[3], \unitvec[4], \mathbf{0}}\subseteq B_{\text{feat}}$. $B_{\text{feat}}$ contains $3$ copies of $A_{\text{feat}}$ (\cref{def:copies}) via involutions $\phi_i: 1\leftrightarrow i$, $i\neq 1$. Suppose all feature coefficient vectors $\alpha\in\reals^4$ are plausible. Then $\retarget\defeq \reals^4$.

Let us be more specific about what is entailed by featurized reward maximization. The $\decide_{\max}(\alpha)$ procedure takes $\alpha$ as input and then considers the reward function $o\mapsto \featFn(o)^\top \alpha$. Then, $\decide_{\max}$ uniformly randomly chooses an observation $o_T\in\validObs$ which maximizes this featurized reward, and then uniformly randomly chooses a policy which implements $o_T$.

\begin{restatable}[$\mathrm{FracOptimal}$ inequalities]{lem}{fracOptIneq}\label{lem:frac-opt-ineq}
Let $X\subseteq Y'\subseteq Y\subsetneq \genVS$ be finite, and let $\uf\in \genVS$. Then
\begin{equation}
    \fracOpt{X\mid Y,\uf}\leq \fracOpt{X\mid Y',\uf} \leq \fracOpt{X\cup (Y\setminus Y')\mid Y,\uf}.
\end{equation}
\end{restatable}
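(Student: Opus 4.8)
Write $G(C)\defeq\argmax_{\cv\in C}\cv^\top\uf$ for the set of expected-utility maximizers in $C$, which is finite and (following the paper's convention for $\fracOpt{\cdot}$) nonempty, so that $\fracOpt{X\mid C,\uf}=\abs{X\cap G(C)}/\abs{G(C)}$. Since $Y'\subseteq Y$ we have $\max_{\cv\in Y'}\cv^\top\uf\leq\max_{\cv\in Y}\cv^\top\uf$, and the plan is to split on whether this is an equality. If the maxima agree then $G(Y')=G(Y)\cap Y'$; if the $Y'$-maximum is strictly smaller then $G(Y)\cap Y'=\emptyset$. I will verify both displayed inequalities in each of these two cases; the only slightly fiddly part is keeping track of disjointness, and I do not expect a genuine obstacle.

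In the strict case, $G(Y)\cap Y'=\emptyset$ forces $G(Y)\subseteq Y\setminus Y'$. Since $X\subseteq Y'$ this gives $X\cap G(Y)=\emptyset$, hence $\fracOpt{X\mid Y,\uf}=0\leq\fracOpt{X\mid Y',\uf}$; and $G(Y)\subseteq X\cup(Y\setminus Y')$, so $\fracOpt{X\cup(Y\setminus Y')\mid Y,\uf}=1\geq\fracOpt{X\mid Y',\uf}$. Both inequalities are thus immediate from the fact that $\fracOpt{\cdot}$ takes values in $[0,1]$.

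In the equal-maxima case I will decompose $G(Y)$ as the disjoint union of $G(Y')=G(Y)\cap Y'$ and $G(Y)\setminus Y'$, and set $m\defeq\abs{G(Y')}$, $t\defeq\abs{G(Y)\setminus Y'}$, $a\defeq\abs{X\cap G(Y')}$, so $0\leq a\leq m$ and $t\geq 0$. Because $X\subseteq Y'$ is disjoint from $Y\setminus Y'$, one has $X\cap G(Y)=X\cap G(Y')$, giving $\fracOpt{X\mid Y,\uf}=a/(m+t)\leq a/m=\fracOpt{X\mid Y',\uf}$, which is the left inequality. For the right one, again using $X\subseteq Y'$, the set $\bigl(X\cup(Y\setminus Y')\bigr)\cap G(Y)$ is the disjoint union of $X\cap G(Y')$ and $G(Y)\setminus Y'$, so $\fracOpt{X\cup(Y\setminus Y')\mid Y,\uf}=(a+t)/(m+t)$, and $(a+t)/(m+t)\geq a/m$ reduces (after clearing the positive denominators) to $mt\geq at$, which holds since $a\leq m$ and $t\geq 0$. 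This completes the plan; every step past the case split is an elementary comparison of the ``mediant''-type fractions $a/(m+t)$ and $(a+t)/(m+t)$ against $a/m$.
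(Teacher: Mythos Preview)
Your proof is correct and takes essentially the same approach as the paper: both split on whether $\Best{Y'\mid\uf}$ is contained in or disjoint from $\Best{Y\mid\uf}$ (equivalently, whether the two maxima coincide), dispatch the disjoint case trivially, and in the subset case reduce to the mediant-type inequality $a/m\leq (a+t)/(m+t)$ when $a\leq m$. Your presentation is a bit more streamlined by naming the cardinalities $a,m,t$ up front, whereas the paper manipulates the sets directly throughout, but the argument is the same.
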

\begin{proof}
For finite $X_1\subsetneq \genVS$, let $\Best{X_1\mid\uf}\defeq \argmax_{\x_1\in X_1} \x_1^\top \uf$. Suppose $\y' \in \Best{Y'\mid\uf}$, but $\y'\not \in \Best{Y\mid\uf}$. Then for all $\av \in \Best{Y'\mid \uf}$,
\begin{equation}
    \av^\top\uf = \y'^\top \uf < \max_{\y\in Y} \y^\top \uf.
\end{equation}
So $\av\not \in \Best{Y\mid \uf}$. Then either $\Best{Y'\mid \uf}\subseteq \Best{Y\mid\uf}$, or the two sets are disjoint.
\begin{align}
    \fracOpt{X\mid Y,\uf} &\defeq \frac{\abs{\Best{Y\mid\uf}\cap X}}{\abs{\Best{Y\mid\uf}}}\label{eq:first-frac}\\
    &\leq \frac{\abs{\Best{Y'\mid\uf}\cap X}}{\abs{\Best{Y'\mid\uf}}}\eqdef \fracOpt{X\mid Y',\uf}\label{eq:y-prime-contain}
\end{align}
If $\Best{Y'\mid \uf}\subseteq \Best{Y\mid\uf}$, then since $X\subseteq Y'$, we have $X\cap\Best{Y'\mid \uf} = X\cap\Best{Y\mid\uf}$. Then in this case, \cref{eq:first-frac} has equal numerator and larger denominator than \cref{eq:y-prime-contain}. On the other hand, if $\Best{Y'\mid \uf}\cap \Best{Y\mid\uf}=\varnothing$, then since $X\subseteq Y'$, $X\cap \Best{Y\mid\uf}=\varnothing$. Then \cref{eq:first-frac} equals $ 0$, and \cref{eq:y-prime-contain} is non-negative. Either way, \cref{eq:y-prime-contain}'s inequality holds. To show the second inequality, we handle the two cases separately.

\paragraph*{Subset case.} Suppose that $\Best{Y'\mid \uf}\subseteq \Best{Y\mid\uf}$.
\begin{align}
    \frac{\abs{\Best{Y'\mid\uf}\cap X}}{\abs{\Best{Y'\mid\uf}}}&\leq\frac{\abs{\Best{Y'\mid\uf}\cap X}+\abs{\Best{Y\setminus Y'\mid\uf}}}{\abs{\Best{Y'\mid\uf}}+\abs{\Best{Y\setminus Y'\mid\uf}}}\label{eq:ineq-add-both}\\
    &=\frac{\abs{\Best{Y'\mid\uf}\cap X}+\abs{\Best{Y\setminus Y'\mid\uf}\cap (Y\setminus Y')}}{\abs{\Best{Y'\mid\uf}}+\abs{\Best{Y\setminus Y'\mid\uf}}}\\
    &=\frac{\abs{\Best{Y'\mid\uf}\cap X}+\abs{\Best{Y\mid\uf}\cap (Y\setminus Y')}}{\abs{\Best{Y'\mid\uf}}+\abs{\Best{Y\setminus Y'\mid\uf}}}\label{eq:relate-Ybest}\\
    &=\frac{\abs{\Best{Y'\mid\uf}\cap X}+\abs{\Best{Y\mid\uf}\cap (Y\setminus Y')}}{\abs{\Best{Y\mid \uf}}}\label{eq:best-Y-ut}\\
    &=\frac{\abs{\Best{Y\mid\uf}\cap X}+\abs{\Best{Y\mid\uf}\cap (Y\setminus Y')}}{\abs{\Best{Y\mid \uf}}}\label{eq:disj-numerator}\\
    &=\frac{\abs{\Best{Y\mid\uf}\cap (X\cup(Y\setminus Y'))}}{\abs{\Best{Y\mid \uf}}}\label{eq:disj-X-Y}\\
    &\eqdef \fracOpt{X\cup (Y\setminus Y')\mid Y,\uf}.
\end{align}
\Cref{eq:ineq-add-both} follows because when $n\leq d, k\geq 0$, we have $\frac{n}{d}\leq \frac{n+k}{d+k}$. For \cref{eq:relate-Ybest}, since $\Best{Y'\mid\uf}\subseteq \Best{Y\mid\uf}$, we must have
\begin{equation*}
    \Best{Y\mid\uf}=\Best{Y\setminus Y'\mid\uf}\cup \Best{Y'\mid\uf}.
\end{equation*}
But then
\begin{align}
    \Best{Y\mid\uf}\cap(Y\setminus Y')&=\prn{\Best{Y\setminus Y'\mid\uf}\cap(Y\setminus Y')}\cup \prn{\Best{Y'\mid\uf}\cap (Y\setminus Y')}\\
    &=\Best{Y\setminus Y'\mid\uf}\cap(Y\setminus Y').
\end{align}
Then \cref{eq:relate-Ybest} follows. \Cref{eq:best-Y-ut} follows since
\begin{equation*}
    \Best{Y\mid\uf}=\Best{Y\setminus Y'\mid\uf}\cup \Best{Y'\mid\uf}.
\end{equation*}
\Cref{eq:disj-numerator} follows since $X\subseteq Y'$, and so
\begin{equation*}
    \Best{Y'\mid\uf}\cap X = \Best{Y\mid\uf}\cap X.
\end{equation*}
\Cref{eq:disj-X-Y} follows because $X\subseteq Y'$ is disjoint of $Y\setminus Y'$. We have shown that
\begin{equation*}
    \fracOpt{X\mid Y',\uf}\leq \fracOpt{X\cup (Y\setminus Y')\mid Y,\uf}
\end{equation*}
in this case.

\paragraph*{Disjoint case.} Suppose that $\Best{Y'\mid \uf}\cap \Best{Y\mid\uf}=\varnothing$.
\begin{align}
    \frac{\abs{\Best{Y'\mid\uf}\cap X}}{\abs{\Best{Y'\mid\uf}}}&\leq 1\label{eq:contain-card}\\
    &= \frac{\abs{\Best{Y\setminus Y'\mid \uf}}}{\abs{\Best{Y\setminus Y'\mid \uf}}}\\
    &= \frac{\abs{\Best{Y\setminus Y'\mid \uf}\cap (Y\setminus Y')}}{\abs{\Best{Y\setminus Y'\mid \uf}}}\\
    &= \frac{\abs{\Best{Y\setminus Y'\mid \uf}\cap (X\cup (Y\setminus Y'))}}{\abs{\Best{Y\setminus Y'\mid \uf}}}\label{eq:x-not-opt}\\
    &= \frac{\abs{\Best{Y\mid \uf}\cap (X\cup (Y\setminus Y'))}}{\abs{\Best{Y\mid \uf}}}\label{eq:final-ineq-disjoint}\\
    &\eqdef \fracOpt{X\cup (Y\setminus Y')\mid Y,\uf}.
\end{align}
\Cref{eq:contain-card} follows because $\Best{Y'\mid\uf}\cap X\subseteq \Best{Y'\mid\uf}$. For \cref{eq:x-not-opt}, note that we trivially have $\Best{Y'\mid \uf}\cap \Best{Y\setminus Y'\mid\uf}=\varnothing$, and also that $X\subseteq Y'$. Therefore,  $\Best{Y\setminus Y'\mid \uf}\cap X=\varnothing$, and \cref{eq:x-not-opt} follows. Finally, the disjointness assumption implies that \[\max_{\y'\in Y'}\y'^\top \uf < \max_{\y \in Y} \y^\top \uf.\] Therefore, the optimal elements of $Y$ must come exclusively from $Y\setminus Y'$; \ie{} $\Best{Y\mid\uf}= \Best{Y\setminus Y'\mid\uf}$. Then \cref{eq:final-ineq-disjoint} follows, and we have shown that
\begin{equation*}
    \fracOpt{X\mid Y',\uf}\leq \fracOpt{X\cup (Y\setminus Y')\mid Y,\uf}
\end{equation*}
in this case.
\end{proof}

\begin{restatable}[Generalizing \cref{lem:frac-opt-ineq}]{conjSec}{genForm}
\Cref{lem:frac-opt-ineq} and \citet{turner_optimal_2020}'s Lemma E.26 have extremely similar functional forms. How can they be unified?
\end{restatable}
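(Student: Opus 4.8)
The plan is to extract the purely combinatorial skeleton from the proof of \cref{lem:frac-opt-ineq} and isolate it as an abstract lemma that both that result and \citet{turner_optimal_2020}'s Lemma E.26 instantiate. Inspecting the proof, one sees it uses only two features of $\fracOpt{X\mid Y,\uf}$: that it is a normalized weight of how $X$ meets the ``winner set'' $\Best{Y\mid\uf}\defeq\argmax_{\x\in Y}\x^\top\uf$, namely $\abs{\Best{Y\mid\uf}\cap X}/\abs{\Best{Y\mid\uf}}$; and that for $Y'\subseteq Y$ the winner sets obey a dichotomy, $\Best{Y'\mid\uf}\subseteq\Best{Y\mid\uf}$ or $\Best{Y'\mid\uf}\cap\Best{Y\mid\uf}=\varnothing$. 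So I would define an abstract ``choice functional'' $\Phi(X\mid Y,\uf)$ determined by a winner map $Y\mapsto W(Y,\uf)\subseteq Y$ and a finitely additive weight $\mu$ via $\Phi(X\mid Y,\uf)\defeq\mu(W(Y,\uf)\cap X)/\mu(W(Y,\uf))$ (and $0$ when the denominator vanishes), subject precisely to the nesting-or-disjointness dichotomy for $W$ under passage to subsets. The unified statement is then: for any such $\Phi$ and any finite $X\subseteq Y'\subseteq Y\subsetneq\genVS$,
\[
\Phi(X\mid Y,\uf)\ \le\ \Phi(X\mid Y',\uf)\ \le\ \Phi\bigl(X\cup(Y\setminus Y')\mid Y,\uf\bigr).
\]

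The proof of the abstract inequality would be a near-verbatim transcription of the existing argument. For the left inequality: in the nested case numerators agree and the $Y$-denominator is larger; in the disjoint case the $Y$-side is forced to $0$. For the right inequality one splits into the same two cases: in the nested case use $W(Y,\uf)=W(Y\setminus Y',\uf)\cup W(Y',\uf)$ together with $X\subseteq Y'$ and the elementary bound $\tfrac{n}{d}\le\tfrac{n+k}{d+k}$ for $n\le d$, $k\ge 0$; in the disjoint case the $Y$-optimal value strictly exceeds the $Y'$-optimal value, so $W(Y,\uf)=W(Y\setminus Y',\uf)$ and the right-hand side equals $1$, which dominates the left side. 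The only genuinely new work is exhibiting Turner et al.'s Lemma E.26 as an instance: there $Y,Y'$ are sets of state-visitation distributions, $W$ picks out the reward-optimal elements, and $\Phi$ is either the indicator ``$X$ contains an optimal element'' or a normalized count thereof; one must check the winner-set dichotomy still holds for optimal-visitation sets, which follows exactly as in \cref{lem:frac-opt-ineq} since that step never used finiteness beyond the existence of a maximizer.

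The main obstacle I expect is choosing the abstraction so that \emph{both} the fractional form and the $\{0,1\}$ indicator form of the MDP lemma are literally special cases rather than mere analogues --- in particular reconciling ``probability that $X$ contains an optimal element'' with ``fraction of optimal elements in $X$'' inside one functional. A clean resolution is to let $\mu$ range over arbitrary finitely additive set functions (counting measure recovers \cref{lem:frac-opt-ineq}; a support-indicator weight, collapsing nonempty sets to $1$, recovers the indicator version), and to phrase the dichotomy purely in terms of $W$, leaving its instantiation (via $\argmax$ of an expected-utility functional, or of an MDP optimal-value functional) to the two applications. A secondary subtlety is whether Turner et al.'s lemma is stated for weak versus strict optimality, since it is the winner-set dichotomy that drives the case split; I would resolve this by fixing $W(Y,\uf)=\{\x\in Y:\x^\top\uf=\max_{\y\in Y}\y^\top\uf\}$ and verifying that both papers' conventions collapse to this choice. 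If all of this goes through, the payoff is that the orbit-level incentive arguments of \cref{sec:obs-analysis} and the corresponding arguments in \citet{turner_optimal_2020} factor through a single lemma.
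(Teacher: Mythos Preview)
The statement you are addressing is not a theorem in the paper but an open question: the paper uses the \texttt{conjSec} environment and phrases it as ``How can they be unified?'' with no accompanying proof or answer. There is therefore no paper proof to compare your attempt against. Your proposal goes beyond the paper by actually sketching a candidate unification.

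On the substance of your proposal: the abstraction you identify---a winner map $W(Y,\uf)$ obeying the nesting-or-disjointness dichotomy, together with a weight $\mu$ yielding $\Phi(X\mid Y,\uf)=\mu(W(Y,\uf)\cap X)/\mu(W(Y,\uf))$---does capture the structure of the proof of \cref{lem:frac-opt-ineq}, and the form of \citet{turner_optimal_2020}'s Lemma~E.26 (as invoked in the proof of \cref{lem:orbit-opt-prob-simple}) is indeed $\phelper{X\geq Y}\leq\phelper{X\geq Y'}\leq\phelper{X\cup(Y\setminus Y')\geq Y}$, which matches your template after taking expectations over $\uf$. The obstacle you anticipate is real: the ``support-indicator'' weight collapsing nonempty sets to $1$ is \emph{not} finitely additive, so the step in the nested case that invokes $\tfrac{n}{d}\leq\tfrac{n+k}{d+k}$ does not go through verbatim for that choice of $\mu$. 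You would need either to weaken the additivity hypothesis on $\mu$ (e.g.\ to monotonicity plus a suitable sub- or super-additivity condition and then re-derive both inequalities case-by-case), or to treat the indicator version as a separate but parallel instance rather than a literal specialization. Either route is workable, but as written the single-$\mu$ framework does not quite cover both lemmas simultaneously.
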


\begin{restatable}[Featurized reward maximizers tend to leave the first room in {\mr}]{prop}{featLeaveRoom}\label{res:feat-leave-room}
\begin{equation}
    p_{\max}(\text{leave}\mid \alpha)\geqMost[3][\reals^4] p_{\max}(\text{stay}\mid \alpha).
\end{equation}
\end{restatable}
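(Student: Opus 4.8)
The plan is to mimic the proof of \cref{res:final-reward-retarget-mr} and its corollary, peeling off one source of complication at a time with the transitivity lemma \cref{lem:transit-geq-strong}, until what is left is a statement about $\reals^4$ that \cref{prop:rationalities} closes directly. First I would discard the room bookkeeping exactly as in the corollary to \cref{res:prob-room}: that lemma gives the pointwise inequalities $p_{\max}(\text{stay}\mid\alpha)\le p_{\max}(\stay\mid\alpha)=\fmax(\stay\mid\alpha)$ and $\fmax(\leave\mid\alpha)=p_{\max}(\leave\mid\alpha)\le p_{\max}(\text{leave}\mid\alpha)$, so by \cref{lem:transit-geq-strong} it suffices to prove $\fmax(\leave\mid\alpha)\geqMost[3][\reals^4]\fmax(\stay\mid\alpha)$.

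Next I would pass from observations to item counts. Since the featurized reward $\featFn(o)^\top\alpha$ depends on $o$ only through $\featFn(o)$, an observation is reward-maximal over $\validObs$ iff its feature vector is $\alpha$-maximal over $C_{\text{feat}}\defeq A_{\text{feat}}\cup B_{\text{feat}}$; because every element of $\stay$ has feature vector in $A_{\text{feat}}=\{\mathbf{0},\unitvec[1]\}$, this forces $\fmax(\stay\mid\alpha)=0$ whenever no $A_{\text{feat}}$-vector is $\alpha$-maximal, while trivially $\fmax(\stay\mid\alpha)\le 1$, so $\fmax(\stay\mid\alpha)\le\isOpt{A_{\text{feat}}}{C_{\text{feat}},\alpha}$, and since $\validObs=\stay\sqcup\leave$ also $\fmax(\leave\mid\alpha)=1-\fmax(\stay\mid\alpha)\ge 1-\isOpt{A_{\text{feat}}}{C_{\text{feat}},\alpha}$. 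Another appeal to \cref{lem:transit-geq-strong} reduces the goal to $1-\isOpt{A_{\text{feat}}}{C_{\text{feat}},\alpha}\geqMost[3][\reals^4]\isOpt{A_{\text{feat}}}{C_{\text{feat}},\alpha}$.

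To finish, I would symmetrize the ground set. The involutions $\phi_i\colon 1\leftrightarrow i$, $i\in\{2,3,4\}$, witnessing that $B_{\text{feat}}$ contains $3$ copies of $A_{\text{feat}}$ (\cref{def:copies}) need not fix $C_{\text{feat}}$, but they do fix $Y'\defeq\{\mathbf{0},\unitvec[1],\unitvec[2],\unitvec[3],\unitvec[4]\}$, and $A_{\text{feat}}\subseteq Y'\subseteq C_{\text{feat}}$. An $A_{\text{feat}}$-vector $\alpha$-maximal over $C_{\text{feat}}$ is a fortiori $\alpha$-maximal over $Y'$ (the elementary $\mathrm{IsOptimal}$-analogue of the first inequality of \cref{lem:frac-opt-ineq}; the $\fracOpt$ variant of this argument would instead use \cref{lem:frac-opt-ineq} itself to absorb the asymmetric part $C_{\text{feat}}\setminus Y'$), so $\isOpt{A_{\text{feat}}}{C_{\text{feat}},\alpha}\le\isOpt{A_{\text{feat}}}{Y',\alpha}$; one last use of \cref{lem:transit-geq-strong} reduces the goal to $1-\isOpt{A_{\text{feat}}}{Y',\alpha}\geqMost[3][\reals^4]\isOpt{A_{\text{feat}}}{Y',\alpha}$. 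Now $B'\defeq\{\mathbf{0},\unitvec[2],\unitvec[3],\unitvec[4]\}\subseteq B_{\text{feat}}$ contains $3$ copies of $A_{\text{feat}}$ via the $\phi_i$, each of which satisfies $\phi_i\cdot Y'=Y'$, so \cref{prop:rationalities}'s \cref{item:rational} gives $\isOpt{B'}{Y',\alpha}\geqMost[3][\reals^4]\isOpt{A_{\text{feat}}}{Y',\alpha}$; and whenever $\isOpt{A_{\text{feat}}}{Y',\alpha}=0$ the $\alpha$-maximal element of $Y'$ lies in $\{\unitvec[2],\unitvec[3],\unitvec[4]\}\subseteq B'$, so $\isOpt{B'}{Y',\alpha}=1$, whence the two orbit sets coincide and the last claim follows.

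The step I expect to be the real obstacle is the passage from observations to feature vectors: $\fmax$ counts observations, while $\isOpt$ (and the symmetric-group action) lives on feature vectors, and the empty-inventory vector $\mathbf{0}$ belongs to both $A_{\text{feat}}$ and $B_{\text{feat}}$, so one optimal feature class can support stay- and leave-observations simultaneously. The bound $\fmax(\stay\mid\alpha)\le\isOpt{A_{\text{feat}}}{C_{\text{feat}},\alpha}$ is valid but loses too much precisely for reward vectors $\alpha$ whose only $\alpha$-maximal feature is $\mathbf{0}$ (e.g.\ $\alpha\le\mathbf{0}$ coordinatewise), whose orbits would otherwise violate the count; for those I would instead use the standing hypothesis that $T$ is large and $\abs{\stay}\ll\abs{\leave}$ (\cref{sec:obs-reward}, \cref{fig:mr-map}) — there are strictly more empty-inventory leave-observations than stay-observations — to conclude directly that $\fmax(\stay\mid\alpha)<\frac{1}{2}<\fmax(\leave\mid\alpha)$, so such $\alpha$ only ever help the count. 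Carrying this case distinction carefully is what makes the full write-up long; the rest is the routine gluing above.
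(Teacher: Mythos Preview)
Your overall plan—peel off the room bookkeeping via \cref{res:prob-room}, pass to feature vectors, symmetrize the ground set so that \cref{prop:rationalities} applies, and glue with \cref{lem:transit-geq-strong}—is exactly the shape of the paper's proof. The gap is in the last step, and your final paragraph almost names it but mislocates it.

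The claim ``whence the two orbit sets coincide'' is wrong. You correctly observe that $\{\isOpt{B'}{Y',\cdot}>\isOpt{A_{\text{feat}}}{Y',\cdot}\}$ equals $\{\isOpt{A_{\text{feat}}}{Y',\cdot}=0\}=\{1-\isOpt{A_{\text{feat}}}{Y',\cdot}>\isOpt{A_{\text{feat}}}{Y',\cdot}\}$. But the \emph{reverse}-inequality sets do not coincide: $\{1-\isOpt{A_{\text{feat}}}{Y',\cdot}<\isOpt{A_{\text{feat}}}{Y',\cdot}\}=\{\isOpt{A_{\text{feat}}}{Y',\cdot}=1\}$ strictly contains $\{\isOpt{B'}{Y',\cdot}<\isOpt{A_{\text{feat}}}{Y',\cdot}\}$ whenever $\mathbf{0}$ is $Y'$-optimal (then both indicators are $1$). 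In fact your intermediate target $1-\isOpt{A_{\text{feat}}}{Y',\alpha}\geqMost[3][\reals^4]\isOpt{A_{\text{feat}}}{Y',\alpha}$ is \emph{false}: on the orbit of any $\alpha$ with $\max_i\alpha_i\le 0$, every permutation has $\mathbf{0}$ optimal and hence $\isOpt{A_{\text{feat}}}{Y',\cdot}=1$, so the left side is $0$ throughout and the count fails. Your proposed case split (``only $\alpha$-maximal feature is $\mathbf{0}$'') is also too narrow—ties like $\alpha=(0,-1,-2,-3)$ or $\alpha=(1,1,0,0)$ break the count just as badly—and in any case the split must happen \emph{before} you collapse to $\isOpt$, since \cref{lem:transit-geq-strong} needs global pointwise inequalities, not case-wise ones.

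The paper fixes this by never putting $\mathbf{0}$ in the symmetrized set. It splits $\fmax(\stay\mid\alpha)$ at the observation level as $\prob{o_T\in\stay,\featFn(o_T)\neq\mathbf{0}}+\prob{o_T\in\stay,\featFn(o_T)=\mathbf{0}}$, bounds the first term by $\fracOpt{\{\unitvec[1]\}\mid C_{\text{feat}},\alpha}$, uses the first inequality of \cref{lem:frac-opt-ineq} to pass to the $\mathbf{0}$-free set $\{\unitvec[1],\unitvec[2],\unitvec[3],\unitvec[4]\}$, applies \cref{prop:rationalities}'s \cref{item:frac-rational} there (so $A=\{\unitvec[1]\}$, $B=\{\unitvec[2],\unitvec[3],\unitvec[4]\}$, with no shared element), handles the second term by the pointwise observation-count inequality $\prob{o_T\in\stay,\featFn=\mathbf{0}}\le\prob{o_T\in\leave,\featFn=\mathbf{0}}$, and finally uses the \emph{second} inequality of \cref{lem:frac-opt-ineq} to reinflate back to $C_{\text{feat}}$. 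The $\fracOpt$ route you mention parenthetically is exactly what is needed, but the crucial move is to strip $\mathbf{0}$ out of the symmetrized sets rather than keeping it in $A_{\text{feat}}$ and $B'$ and hoping to patch later.
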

\begin{proof}
We want to show that $\fmax(\leave\mid \alpha)\geqMost[n][\reals^4] \fmax(\stay \mid \alpha)$. Recall that $A_{\text{feat}}=\{\unitvec[1],\mathbf{0}\},B'_{\text{feat}}\defeq \{\unitvec[2],\unitvec[3],\unitvec[4]\}\subseteq B_{\text{feat}}$.
\begin{align}
    &p_{\max}(\text{stay}\mid  \alpha)\label{eq:apply-room-1}\\
    &\leq \fmax(\stay \mid \alpha) \\
    &\defeq \prob[\substack{\pi\sim \decide_{\max}(\alpha),\\\tau\sim \pi\mid s_0}]{o_T\in \stay}\\
    &= \prob[\substack{\pi\sim \decide_{\max}(\alpha),\\\tau\sim \pi\mid s_0}]{o_T\in \stay, \featFn(o_T)\neq\mathbf{0}}+\prob[\substack{\pi\sim \decide_{\max}(\alpha),\\\tau\sim \pi\mid s_0}]{o_T\in \stay,\featFn(o_T)=\mathbf{0}}\\
    &\leq \fracOpt{\set{\unitvec[1]}\mid C_\text{feat}, \alpha}+\prob[\substack{\pi\sim \decide_{\max}(\alpha),\\\tau\sim \pi\mid s_0}]{o_T\in \stay,\featFn(o_T)=\mathbf{0}}\label{eq:A-sub-opt}\\
    &\leq \fracOpt{\set{\unitvec[1]}\mid \set{\unitvec[1],\unitvec[2],\unitvec[3],\unitvec[4]}, \alpha}+\prob[\substack{\pi\sim \decide_{\max}(\alpha),\\\tau\sim \pi\mid s_0}]{o_T\in \stay,\featFn(o_T)=\mathbf{0}}\label{eq:leq-frac}\\
    &\leqMost[3][\reals^4_{>0}]\fracOpt{\set{\unitvec[2],\unitvec[3],\unitvec[4]}\mid \set{\unitvec[1],\unitvec[2],\unitvec[3],\unitvec[4]}, \alpha}\nonumber\\
    &\phantom{\leq \fracOpt{\set{\unitvec[1]}\mid \set{\unitvec[1],\unitvec[2],\unitvec[3],\unitvec[4]}, \alpha}}+\prob[\substack{\pi\sim \decide_{\max}(\alpha),\\\tau\sim \pi\mid s_0}]{o_T\in \leave,\featFn(o_T)=\mathbf{0}}\label{eq:leqmostFracFeat}\\
    &\leq \fracOpt{\set{\unitvec[2],\unitvec[3],\unitvec[4]}\cup (C_\text{feat}\setminus \set{\unitvec[1],\unitvec[2],\unitvec[3],\unitvec[4]})\mid C_\text{feat},\alpha}\label{eq:apply-fracopt-ineq-2}\\
    &=\fracOpt{C_\text{feat}\setminus \set{\unitvec[1]}\mid C_\text{feat},\alpha}\\
    &\leq \prob[\substack{\pi\sim \decide_{\max}(\alpha),\\\tau\sim \pi\mid s_0}]{o_T\in \leave} \label{eq:leave-confirm}\\
    &\eqdef \fmax(\leave \mid \alpha)\\
    &\leq p_{\max}(\text{leave}\mid  \alpha).\label{eq:apply-room-2}
\end{align}
\Cref{eq:apply-room-1} and \cref{eq:apply-room-2} hold by \cref{res:prob-room}. If $o_T \in \stay$ is realized by $\fmax$ and $\featFn(o_T)\neq \mathbf{0}$, then we must have $\featFn(o_T)= \{\unitvec[1]\}$ be optimal and so the $\unitvec[1]$ inventory configuration is realized. Therefore, \cref{eq:A-sub-opt} follows. \Cref{eq:leq-frac} follows by applying the first inequality of \cref{lem:frac-opt-ineq} with $X\defeq \{\unitvec[1]\}, Y'\defeq \{\unitvec[1],\unitvec[2],\unitvec[3],\unitvec[4]\},Y\defeq C_\text{feat}$.

By applying \cref{prop:rationalities}'s \cref{item:frac-rational} with $A\defeq A_{\text{feat}}=\{\unitvec[1]\}$, $B'\defeq B'_{\text{feat}}= \{\unitvec[2],\unitvec[3],\unitvec[4]\}$, $C\defeq A\cup B'$, we have
\begin{multline}
    \fracOpt{\set{\unitvec[1]}\mid \set{\unitvec[1],\unitvec[2],\unitvec[3],\unitvec[4]}, \alpha}
    \leqMost[3][\reals^4_{>0}]\\
    \fracOpt{\set{\unitvec[2],\unitvec[3],\unitvec[4]}\mid \set{\unitvec[1],\unitvec[2],\unitvec[3],\unitvec[4]}, \alpha}.\label{eq:geq-most-frac}
\end{multline}
Furthermore, observe that
\begin{equation}
\prob[\substack{\pi\sim \decide_{\max}(\alpha),\\\tau\sim \pi\mid s_0}]{o_T\in \stay,\featFn(o_T)=\mathbf{0}}\leq \prob[\substack{\pi\sim \decide_{\max}(\alpha),\\\tau\sim \pi\mid s_0}]{o_T\in \leave,\featFn(o_T)=\mathbf{0}}\label{eq:prob-ineq-0}
\end{equation}
because either $\mathbf{0}$ is not optimal (in which case both sides equal 0), or else $\mathbf{0}$ is optimal, in which case the right side is strictly greater. This can be seen by considering how $\decide_{\max}(\alpha)$ uniformly randomly chooses an observation in which the agent ends up with an empty inventory. As argued previously, the vast majority of such observations can only be induced by leaving the first room.

Combining \cref{eq:geq-most-frac} and \cref{eq:prob-ineq-0}, \cref{eq:leqmostFracFeat} follows. \Cref{eq:apply-fracopt-ineq-2} follows by applying the second inequality of \cref{lem:frac-opt-ineq} with $X\defeq \{\unitvec[2],\unitvec[3],\unitvec[4]\}$, $Y'\defeq \{\unitvec[1],\unitvec[2],\unitvec[3],\unitvec[4]\}$, $Y\defeq C_\text{feat}$. If $\featFn(o_T)\in B_\text{feat}$ is realized by $\fmax$, then by the definition of $B_{\text{feat}}$, $o_T \in \leave$ is realized, and so \cref{eq:leave-confirm} follows.

Then by applying \cref{lem:transit-geq-strong} with
\begin{align}
    f_0(\alpha)&\defeq p_{\max}(\text{leave}\mid  \alpha),\\
    f_1(\alpha)&\defeq \fracOpt{\set{\unitvec[1]}\mid \set{\unitvec[1],\unitvec[2],\unitvec[3],\unitvec[4]}, \alpha},\\
    f_2(\alpha)&\defeq \fracOpt{\set{\unitvec[2],\unitvec[3],\unitvec[4]}\mid \set{\unitvec[1],\unitvec[2],\unitvec[3],\unitvec[4]}, \alpha},\\
    f_3(\alpha)&\defeq p_{\max}(\text{stay}\mid  \alpha),
\end{align}
we conclude that $p_{\max}(\text{leave}\mid \alpha)\geqMost[3][\reals^4_{>0}] p_{\max}(\text{stay}\mid \alpha).$
\end{proof}

Lastly, note that if $\mathbf{0}\in\retarget$ and $f(A\mid \mathbf{0})> f(B\mid\mathbf{0})$, $f$ cannot be even be simply retargetable for the $\retarget$ parameter set. This is because $\forall \phi \in \genSym$, $\phi\cdot \mathbf{0}=\mathbf{0}$. For example, inductive bias ensures that, absent a reward signal, learned policies tend to stay in the initial room in {\mr}. This is one reason why \cref{sec:rl-analyze}'s analysis of the policy tendencies of reinforcement learning excludes the all-zero reward function. 

\subsection{Reasoning for why \textsc{dqn} can't explore well}\label{app:dqn}
In \cref{sec:rl-analyze}, we wrote:
\begin{quote}
    \citet{mnih2015human}'s {\textsc{dqn}} isn't good enough to train policies which leave the first room of {\mr}, and so {\textsc{dqn}} (trivially) cannot be retargetable \emph{away} from the first room via the reward function. There isn't a single featurized reward function for which {\textsc{dqn}} visits other rooms, and so we can't have $\alpha$ such that $\phi\cdot\alpha$ retargets the agent to $\leave$. {\textsc{dqn}} isn't good enough at exploring.
\end{quote}

We infer this is true from \citet{nair2015massively}, which shows that vanilla \textsc{dqn} gets zero score in {\mr}. Thus, \textsc{dqn} never even gets the first key. Thus, \textsc{dqn} only experiences state-action-state transitions which didn't involve acquiring an item, since (as shown in \cref{fig:mr-map}) the other items are outside of the first room, which requires a key to exit. In our analysis, we considered a reward function which is featurized over item acquisition. 

Therefore, for all pre-key-acquisition state-action-state transitions, the featurized reward function returns exactly the same reward signals as those returned in training during the published experiments (namely, zero, because \textsc{dqn} can never even get to the key in order to receive a reward signal). That is, since \textsc{dqn} only experiences state-action-state transitions which didn't involve acquiring an item, and the featurized reward functions only reward acquiring an item, it doesn't matter what reward values are provided upon item acquisition—\textsc{dqn}'s trained behavior will be the same. Thus, a \textsc{dqn} agent trained on any featurized reward function will not explore outside of the first room.
\section{Lower bounds on \textsc{mdp} power-seeking incentives for optimal policies}\label{app:mdp}
\citet{turner_optimal_2020} prove conditions under which \emph{at least half} of the orbit of every reward function incentivizes power-seeking behavior. For example, in \cref{fig:case-quant}, they prove that avoiding $\sink$ maximizes average per-timestep reward for at least half of reward functions. Roughly, there are more self-loop states ($\sink$, $\farleft$, $\farright$, $\topright$) available if the agent goes $\leftA$ or $\rightA$ instead of up towards $\sink$. We strengthen this claim, with \cref{cor:quant-one-cyc} showing that for \emph{at least three-quarters} of the orbit of every reward function, it is average-optimal to avoid $\sink$.

Therefore, we answer \citet{turner_optimal_2020}'s open question of whether increased number of environmental symmetries quantitatively strengthens the degree to which power-seeking is incentivized. The answer is \emph{yes}. In particular, it may be the case that only one in a million state-based reward functions makes it average-optimal for Pac-Man to die immediately.

\begin{figure}[h!]
    \centering
    \includegraphics{./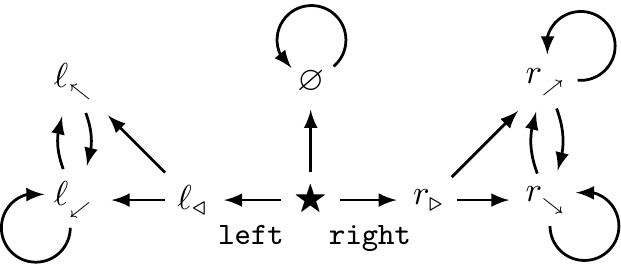}
    \caption[A toy {\mdp} for reasoning about power-seeking tendencies]{A toy {\mdp} for reasoning about power-seeking tendencies. \emph{Reproduced from \citet{turner_optimal_2020}.}}
    \label{fig:case-quant}
\end{figure}

We will briefly restate several definitions needed for our key results, \cref{thm:rsdIC-quant} and \cref{cor:quant-one-cyc}. For explanation, see \citet{turner_optimal_2020}.

\begin{restatable}[Non-dominated linear functionals]{definition}{ndLinFuncQuant}
Let $X\subsetneq \rewardVS$ be finite. $\ND{X}\defeq \set{\x\in X\mid \exists \rf \in \rewardVS: \x^\top \rf > \max_{\x'\in X\setminus \set{\x}} \x'^\top \rf}$.
\end{restatable}

\begin{restatable}[Bounded reward function distribution]{definition}{bdDistDef}
$\DSetBd$ is the set of bounded-support probability distributions $\Dbd$.
\end{restatable}

\begin{remark}
When $n=1$, \cref{lem:expect-superior-simple} reduces to the first part of \citet{turner_optimal_2020}'s \eref{lemma}{E.24}{lem:expect-superior}, and \cref{lem:orbit-opt-prob-simple} reduces to the first part of \citet{turner_optimal_2020}'s \eref{lemma}{E.28}{lem:opt-prob-superior}.
\end{remark}

\begin{restatable}[Quantitative expectation superiority lemma]{lem}{expectSuperiorSimple}\label{lem:expect-superior-simple}
Let $A,B\subsetneq \genVS$ be finite and let $g:\reals\to \reals$ be a (total) increasing function. Suppose $B$ contains $n$ copies of $\ND{A}$. Then
\begin{align}
    \E{\rf \sim \Dbd}{g\prn{\max_{\bv\in B}\bv^\top \rf}}\geqMost[n][\DSetBd]  \E{\rf \sim \Dbd}{g\prn{\max_{\av\in A}\av^\top \rf}}.\label{eq:permute-superior-quant}
\end{align}
\end{restatable}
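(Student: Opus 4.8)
The plan is to instantiate the general orbit machinery of \cref{lem:general-orbit-simple} with $\retarget\defeq\DSetBd$ (closed under the pushforward action of $\genSym$, since a coordinate permutation maps $[-M,M]^{\dimGen}$ to itself and hence preserves bounded support), $\abDomain\defeq\powGenVs$, and the functional
\[
    f(X\mid\Dbd)\defeq\E{\rf\sim\Dbd}{g\prn{\max_{\x\in X}\x^\top\rf}}.
\]
The one conceptual point is that the lemma should be run on $\ND{A}$ rather than on $A$ itself: by hypothesis $B$ contains $n$ copies of $\ND{A}$ (\cref{def:copies}) via involutions $\phi_1,\ldots,\phi_n$, and $n$ copies are in particular $n$ superset-copies in the sense of \cref{def:super-copies} (take $B_i^\star\defeq\phi_i\cdot\ND{A}$). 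Moreover the maximum of a finite family of linear functionals is unchanged by discarding the dominated ones — $\max_{\x\in X}\x^\top\rf=\max_{\x\in\ND{X}}\x^\top\rf$ for every $\rf\in\genVS$, a standard property of $\ND{\cdot}$ from \citet{turner_optimal_2020} — so $f(A\mid\Dbd)=f(\ND{A}\mid\Dbd)$ identically in $\Dbd$. Consequently the two orbit subsets appearing in \cref{def:ineq-most-dists-quant} for the pair $(B,A)$ coincide with those for $(B,\ND{A})$, and any $\geqMost[n][\DSetBd]$ statement for $(B,\ND{A})$ transfers verbatim to $(B,A)$ (one may also invoke \cref{lem:transit-geq-strong}).

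It then remains to verify the hypotheses of \cref{lem:general-orbit-simple} for $f$, $\ND{A}$, and $B$. Monotonicity of $f$ in its first argument is immediate, since $X'\subseteq X$ forces $\max_{\x\in X'}\x^\top\rf\le\max_{\x\in X}\x^\top\rf$ pointwise and $g$ is increasing. For invariance under joint permutation, observe that the integrand $h(X\mid\rf)\defeq g\prn{\max_{\x\in X}\x^\top\rf}$ satisfies $h(\phi\cdot X\mid\phi\cdot\rf)=g\prn{\max_{\x\in X}(\permute\x)^\top(\permute\rf)}=h(X\mid\rf)$ for every $\phi\in\genSym$, because $\permute^\top\permute=\mathbf{I}$; equivalently $h$ is an EU-determined function, so \cref{lem:card-EU-invar} applies. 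Passing from $h$ to the expectation $f$ by \cref{lem:invar-expect} then shows $f(X\mid\Dbd)$ is invariant — hence increasing — under joint permutation by the $\phi_i$, and \cref{lem:general-orbit-simple} yields $f(B\mid\Dbd)\geqMost[n][\DSetBd]f(\ND{A}\mid\Dbd)=f(A\mid\Dbd)$, which is \cref{eq:permute-superior-quant}.

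The step I would be most careful about is the measurability/boundedness bookkeeping needed to apply \cref{lem:invar-expect}, which is stated for a bounded integrand whereas $h(X\mid\cdot)$ need not be globally bounded. I would handle this orbit-by-orbit: fix $\Dbd\in\DSetBd$ with $\supp[\Dbd]\subseteq[-M,M]^{\dimGen}$; every distribution in the orbit $\genSym\cdot\Dbd$ has support in the same box, and over that box the finitely many quantities $\max_{\x\in X}\x^\top\rf$ with $X\in\set{\ND{A},B}\cup\set{\phi_i\cdot\ND{A}}_{i}$ range over a compact subset $K\subseteq\reals$. Replacing $g$ by a bounded increasing $\tilde g$ that agrees with $g$ on $K$ changes none of the expectations occurring in the cardinality inequality for $\Dbd$, while making $h$ bounded; $h$ is also Borel measurable in $\rf$, since $\max$ of finitely many affine functionals is continuous and a monotone $\tilde g$ is Borel. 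With this truncation in place the argument of the previous paragraph applies to every $\Dbd\in\DSetBd$, giving the claimed $\geqMost[n][\DSetBd]$ relation. The rest — checking that copies give superset-copies, that $\powGenVs$ supplies the needed set memberships, and that $\DSetBd$ is permutation-closed — is routine.
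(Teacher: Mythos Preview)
Your proposal is correct and follows essentially the same approach as the paper: reduce $A$ to $\ND{A}$ via the identity $\max_{\av\in A}\av^\top\rf=\max_{\av\in\ND{A}}\av^\top\rf$ (the paper cites Turner et al.'s Corollary~E.11 for this), establish joint-permutation invariance of the expectation via \cref{lem:card-EU-invar} and \cref{lem:invar-expect}, note monotonicity in the first argument from $g$ increasing, and then invoke \cref{lem:general-orbit-simple} with the $\phi_i$ guaranteed by the copy hypothesis. Your orbit-by-orbit truncation of $g$ is a more explicit version of the paper's parenthetical ``with $f$ being bounded when restricted to $\supp[\Dbd]$,'' and your remarks on $\DSetBd$ being permutation-closed and on copies being superset-copies fill in bookkeeping the paper leaves implicit.
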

\begin{proof}
Because $g:\reals\to\reals$ is increasing, it is measurable (as is $\max$).

Let $L\defeq \inf_{\rf \in \supp[\Dbd]} \max_{\x \in X} \x^\top \rf, U\defeq \sup_{\rf \in \supp[\Dbd]} \max_{\x \in X} \x^\top \rf$. Both exist because $\Dbd$ has bounded support. Furthermore, since $g$ is monotone increasing, it is bounded $[g(L),g(U)]$ on $[L,U]$. Therefore, $g$ is measurable and bounded on each $\supp[\Dbd]$, and so the relevant expectations exist for all $\Dbd$.

For finite $X\subsetneq \genVS$, let $f(X\mid \uf)\defeq g(\max_{\x\in X}\x^\top \uf)$. By \cref{lem:card-EU-invar}, $f$ is invariant under joint permutation by $\genSym$. Furthermore, $f$ is measurable because $g$ and $\max$ are. Therefore,  apply \cref{lem:invar-expect} to conclude that $f(X\mid \Dbd)\defeq \E{\uf\sim \Dbd}{g(\max_{\x\in X}\x^\top \uf)}$ is also invariant under joint permutation by $\genSym$ (with $f$ being bounded when restricted to $\supp[\Dbd]$). Lastly, if $X'\subseteq X$, $f(X'\mid \Dbd)\leq f(X\mid \Dbd)$ because $g$ is increasing.
\begin{align}
    \E{\uf \sim \Dbd}{g\prn{\max_{\av\in A}\av^\top \uf}}&=\E{\uf \sim \Dbd}{g\prn{\max_{\av\in \ND{A}}\av^\top \uf}}\label{eq:nd-A-expect}\\
    &\leqMost[n] \E{\rf \sim \Dbd}{g\prn{\max_{\bv\in B}\bv^\top \rf}}.\label{eq:bv-most-n}
\end{align}

\Cref{eq:nd-A-expect} follows by \eref{corollary}{E.11}{cor:nd-func-indif} of \citep{turner_optimal_2020}. \Cref{eq:bv-most-n} follows by applying \cref{lem:general-orbit-simple} with $f$ as defined above with the $\phi_1,\ldots,\phi_n$ guaranteed by the copy assumption.
\end{proof}

\begin{restatable}[Linear functional optimality probability \citep{turner_optimal_2020}]{definition}{linFNProbRestate}
For finite $A,B\subsetneq \rewardVS$, the \emph{probability under $\Dany$ that $A$ is optimal over $B$} is
\begin{equation*}
    \phelper{A\geq B}[\Dany]\defeq \optprob[\rf \sim \Dany]{\max_{\av\in A} \av^\top \rf \geq \max_{\bv\in B} \bv^\top \rf}.
\end{equation*}
\end{restatable}

\begin{restatable}[Quantitative optimality probability superiority lemma]{lem}{orbOptProbQuantSimple}\label{lem:orbit-opt-prob-simple}
Let $A,B,C\subsetneq \genVS$ be finite and let $Z$ satisfy $\ND{C}\subseteq Z\subseteq C$. Suppose that $B$ contains $n$ copies of $\ND{A}$ via involutions $\phi_i$. Furthermore, let $B_\text{extra}\defeq B\setminus \prn{\cup_{i=1}^n \phi_i\cdot \ND{A}}$; suppose that for all $i$, $\phi_i\cdot\prn{Z\setminus B_\text{extra}}=Z\setminus B_\text{extra}$.

Then $\phelper{B\geq C}[\Dany]\geqMost[n] \phelper{A\geq C}[\Dany]$.
\end{restatable}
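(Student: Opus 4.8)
The plan is to peel the comparison down to one with a single permutation-stable ``denominator'' set and then feed it through the EU-determined-function machinery together with \cref{lem:transit-geq-strong}. Throughout, write $h(X,Y'\mid\rf)\defeq\indic{\max_{\x\in X}\x^\top\rf\geq\max_{\x'\in Y'}\x'^\top\rf}$, so that $\phelper{X\geq Y'}[\Dany]=\E{\rf\sim\Dany}{h(X,Y'\mid\rf)}$. Since $\set{\rf\mid h(X,Y'\mid\rf)=1}=\bigcup_{\x\in X}\bigcap_{\x'\in Y'}\set{\rf\mid(\x-\x')^\top\rf\geq 0}$ is a finite union of finite intersections of halfspaces, $h$ is bounded and measurable in $\rf$, so every such expectation exists for every $\Dany\in\DSetAny$ (which is why the conclusion can be stated with $\geqMost[n]$ rather than only $\geqMost[n][\genVS]$).

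I would set $Y\defeq Z\setminus B_\text{extra}$, which by hypothesis satisfies $\phi_i\cdot Y=Y$ for every $i$, and then record three pointwise facts. (i) For all $\rf$, $\max_{\x\in C}\x^\top\rf=\max_{\x\in Z}\x^\top\rf$: this follows by sandwiching $\ND{C}\subseteq Z\subseteq C$ around $\max_{\x\in C}\x^\top\rf=\max_{\x\in\ND{C}}\x^\top\rf$ (\eref{corollary}{E.11}{cor:nd-func-indif} of \citep{turner_optimal_2020}); hence $\phelper{X\geq C}[\Dany]=\phelper{X\geq Z}[\Dany]$ for every $X$, and, applying the same indifference fact to $A$, $\phelper{A\geq C}[\Dany]=\phelper{\ND{A}\geq Z}[\Dany]$. (ii) For all $\rf$, $h(B,Z\mid\rf)=h(B,Y\mid\rf)$: indeed $Z\setminus Y=Z\cap B_\text{extra}\subseteq B$, so $\max_{\x\in Z\cap B_\text{extra}}\x^\top\rf\leq\max_{\x\in B}\x^\top\rf$, while $\max_{\x\in Z}\x^\top\rf$ is the larger of $\max_{\x\in Y}\x^\top\rf$ and $\max_{\x\in Z\cap B_\text{extra}}\x^\top\rf$; therefore $\max_{\x\in B}\x^\top\rf\geq\max_{\x\in Z}\x^\top\rf$ iff $\max_{\x\in B}\x^\top\rf\geq\max_{\x\in Y}\x^\top\rf$, and combined with (i) this gives $\phelper{B\geq C}[\Dany]=\phelper{B\geq Y}[\Dany]$. (iii) Since $Y\subseteq Z$ and $\max_{\x\in Z}\x^\top\rf=\max_{\x\in C}\x^\top\rf$, we get $\max_{\x\in Y}\x^\top\rf\leq\max_{\x\in C}\x^\top\rf$, hence $\phelper{\ND{A}\geq Z}[\Dany]\leq\phelper{\ND{A}\geq Y}[\Dany]$.

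The core step is $\phelper{B\geq Y}[\Dany]\geqMost[n][\DSetAny]\phelper{\ND{A}\geq Y}[\Dany]$. Here $h(X,Y\mid\rf)$ is EU-determined, hence invariant under joint permutation by all of $\genSym$ (\cref{lem:card-EU-invar}); since $\phi_i\cdot Y=Y$, \cref{lem:hide-second} shows $h'(X\mid\rf)\defeq h(X,Y\mid\rf)$ is invariant under joint permutation by the $\phi_i$; and since $h'$ is bounded and measurable in $\rf$, \cref{lem:invar-expect} makes $f(X\mid\Dany)\defeq\E{\rf\sim\Dany}{h'(X\mid\rf)}=\phelper{X\geq Y}[\Dany]$ invariant under joint permutation by the $\phi_i$. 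Moreover $f$ is monotone increasing in $X$ under set inclusion because $\max$ is. Taking $B_i^\star\defeq\phi_i\cdot\ND{A}$, the copy hypothesis gives $\phi_i\cdot\ND{A}\subseteq B_i^\star\subseteq B$ and $\phi_i\cdot B_j^\star=B_j^\star$ for $i\neq j$, so $B$ contains $n$ superset-copies of $\ND{A}$ via the $\phi_i$ (\cref{def:super-copies}); as $\DSetAny$ is closed under permutation, \cref{lem:general-orbit-simple} yields $f(B\mid\Dany)\geqMost[n][\DSetAny]f(\ND{A}\mid\Dany)$. Finally I would invoke \cref{lem:transit-geq-strong} with $f_0\defeq\phelper{B\geq C}$, $f_1\defeq\phelper{B\geq Y}$, $f_2\defeq\phelper{\ND{A}\geq Y}$, $f_3\defeq\phelper{A\geq C}$: by (ii) $f_0=f_1$ pointwise, by (i) and (iii) $f_2\geq f_3$ pointwise, and the core step gives $f_1\geqMost[n][\DSetAny]f_2$, so $\phelper{B\geq C}[\Dany]\geqMost[n]\phelper{A\geq C}[\Dany]$.

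The main obstacle is choosing the intermediary set correctly, $Y=Z\setminus B_\text{extra}$, and verifying fact (ii): deleting the ``extra'' vectors from the denominator leaves the $B$-side probability unchanged \emph{exactly} because those vectors already lie in $B$ and so can never exceed $\max_{\x\in B}\x^\top\rf$, whereas on the $A$-side the same deletion only raises the probability, so transitivity closes the gap. Everything after that—the $\phi_i$-invariance of $\phelper{\cdot\geq Y}[\Dany]$, monotonicity in the numerator, and the copy/superset-copy bookkeeping feeding \cref{lem:general-orbit-simple}—is routine given the earlier lemmas.
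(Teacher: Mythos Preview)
Your proof is correct and follows essentially the same route as the paper's: reduce the comparison set to $Y=Z\setminus B_\text{extra}$ (which is $\phi_i$-stable), apply the EU-determined/joint-invariance machinery via \cref{lem:card-EU-invar}, \cref{lem:hide-second}, \cref{lem:invar-expect}, and \cref{lem:general-orbit-simple} to get the core $\geqMost[n]$ step, and then close with \cref{lem:transit-geq-strong}. The only substantive difference is cosmetic: the paper cites \citet{turner_optimal_2020}'s Lemma~E.12 and Lemma~E.26 for the bridging equalities/inequalities (and passes through $\phelper{B\cup B_\text{extra}\geq Z}$), whereas you prove the same pointwise facts directly, in particular establishing $\phelper{B\geq Z}[\Dany]=\phelper{B\geq Y}[\Dany]$ via the observation that $Z\setminus Y\subseteq B$ so those vectors can never dominate $B$.
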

\begin{proof}
For finite $X,Y\subsetneq \genVS$, let 
\begin{equation*}
g(X,Y\mid \Dany)\defeq \phelper{X\geq Y}[\Dany]=\E{\uf\sim \Dany}{\indic{\max_{\x\in X}\x^\top \uf \geq \max_{\mathbf{y}\in Y} \mathbf{y}^\top \uf}}.
\end{equation*}
By the proof of \cref{item:rational} of \cref{prop:rationalities}, $g$ is the expectation of a $\uf$-measurable function. $g$ is an EU function, and so \cref{lem:card-EU-invar} shows that it is invariant to joint permutation by $\phi_i$. Letting $f_Y(X\mid \Dany)\defeq g(X,Y\mid \Dany)$, \cref{lem:hide-second} shows that $f_Y(X\mid \Dany)=f_Y(\phi_i\cdot X \mid \phi_i\cdot \Dany)$ whenever the $\phi_i$ satisfy $\phi_i\cdot Y=Y$.

Furthermore, if $X'\subseteq X$, then $f_Y(X'\mid \Dany)\leq f_Y(X\mid \Dany)$.
\begin{align}
    \phelper{A\geq C}[\Dany]&=\phelper{\ND{A}\geq C}[\Dany]\label{eq:nd-a-z}\\
    &\leq \phelper{\ND{A}\geq Z\setminus B_\text{extra}}[\Dany]\label{eq:leq-Z}\\
    &\leqMost[n] \phelper{B\geq Z\setminus B_\text{extra}}[\Dany]\label{eq:leqmost-n-B}\\
    &\leq \phelper{B\cup B_\text{extra} \geq Z}[\Dany]\label{eq:B-Bextra}\\
    &= \phelper{B \geq Z}[\Dany]\label{eq:B-Bextra-contain}\\
    &= \phelper{B \geq C}[\Dany].\label{eq:nd-a-z-2}
\end{align}
\Cref{eq:nd-a-z} follows by \citet{turner_optimal_2020}'s \eref{lemma}{E.12}{lem:nd-opt-contain}'s \eref{item}{2}{item:ND-contain-max-2} with $X\defeq A$, $X'\defeq \ND{A}$  (similar reasoning holds for $C$ and $Z$ in \cref{eq:nd-a-z-2}). \Cref{eq:leq-Z} follows by the first inequality of \eref{lemma}{E.26}{lem:inclusion-opt} of \citep{turner_optimal_2020} with $X\defeq A,Y\defeq C, Y'\defeq Z\setminus B_\text{extra}$. \Cref{eq:leqmost-n-B} follows by applying \cref{lem:general-orbit-simple} with the $f_{Z\setminus B_\text{extra}}$ defined above. \Cref{eq:B-Bextra} follows by the second inequality of \eref{lemma}{E.26}{lem:inclusion-opt} of \citep{turner_optimal_2020} with $X\defeq A,Y\defeq Z, Y'\defeq Z\setminus B_\text{extra}$. \Cref{eq:B-Bextra-contain} follows because $B_\text{extra}\subseteq B$.

Letting $f_0(\Dany)\defeq \phelper{A\geq C}[\Dany],f_1(\Dany)\defeq \phelper{\ND{A}\geq Z\setminus B_\text{extra}}[\Dany],f_2(\Dany)\defeq \phelper{B\geq Z\setminus B_\text{extra}}[\Dany],f_3(\Dany)\defeq \phelper{B\geq C}[\Dany]$, apply \cref{lem:transit-geq-strong} to conclude that
\begin{equation*}
    \phelper{A\geq C}[\Dany]\leqMost[n]\phelper{B\geq C}[\Dany].
\end{equation*}
\end{proof}

\begin{restatable}[Rewardless {\mdp} \citep{turner_optimal_2020}]{definition}{rewardlessQuant}
$\langle \mathcal{S}, \mathcal{A}, T \rangle$ is a rewardless {\mdp} with finite state and action spaces $\mathcal{S}$ and $\mathcal{A}$, and stochastic transition function $T: \St \times \A \to \Delta(\St)$. We treat the discount rate $\gamma$ as a variable with domain $[0,1]$.
\end{restatable}

\begin{restatable}[{\stateEnd} states \citep{turner_optimal_2020}]{definition}{oneCycStateQuant}
Let $\unitvec[s]\in \reals^{\abs{\St}}$ be the standard basis vector for state $s$, such that there is a $1$ in the entry for state $s$ and $0$ elsewhere. State $s$ is a \emph{\stateEnd} if $\exists a\in\A: T(s,a)=\unitvec$. State $s$ is a \emph{{\terminal} state} if $\forall a\in\A:T(s,a)=\unitvec$.
\end{restatable}

\begin{restatable}[State visit distribution \citep{sutton_reinforcement_1998}]{definition}{DefVisitQuant}
$\Pi\defeq \A^\St$, the set of stationary deterministic policies. The \emph{visit distribution} induced by following policy $\pi$ from state $s$ at discount rate $\gamma\in[0,1)$ is $\fpi{s}(\gamma) \defeq \sum_{t=0}^\infty \gamma^t \E{s_{t} \sim \pi\mid s}{\unitvec[s_t]}$.
$\fpi{s}$ is a \emph{visit distribution function}; $\F(s)\defeq \{ \fpi{s} \mid \pi \in \Pi\}$.
\end{restatable}

\begin{restatable}[Recurrent state distributions \citep{puterman_markov_2014}]{definition}{DefRSDQuant}
The \emph{recurrent state distributions}  which can be induced from state $s$ are $\RSD \defeq \set{\lim_{\gamma\to1} (1-\gamma) \fpi{s}(\gamma) \mid \pi \in \Pi}$. $\RSDnd$ is the set of \textsc{rsd}s which strictly maximize average reward for some reward function.
\end{restatable}

\begin{restatable}[Average-optimal policies \citep{turner_optimal_2020}]{definition}{defAverageQuant}
The \emph{average-optimal policy set} for reward function $R$ is $\average[R]\defeq \set{\pi\in\Pi \mid \forall s \in \St: \dbf^{\pi,s} \in  \argmax_{\dbf\in\RSD} \dbf^\top \rf}$ (the policies which induce optimal {\rsd}s at all states). For $D\subseteq \RSD$, the \emph{average optimality probability} is $\avgprob[\Dany]{D}\defeq \optprob[R\sim \Dany]{\exists \dbf^{\pi,s} \in D: \pi \in \average}$.
\end{restatable}

\begin{remark}
\Cref{thm:rsdIC-quant} generalizes the first claim of \citet{turner_optimal_2020}'s \eref{theorem}{6.13}{rsdIC}, and \cref{cor:quant-one-cyc} generalizes the first claim of \citet{turner_optimal_2020}'s \eref{corollary}{6.14}{cor:avg-avoid-terminal}.
\end{remark}

\begin{restatable}[Quantitatively, average-optimal policies tend to end up in ``larger'' sets of {\rsd}s]{thm}{rsdICQuant}\label{thm:rsdIC-quant} Let $D',D\subseteq \RSD$. Suppose that $D$ contains $n$ copies of $D'$ and that the sets $D'\cup D$ and $\RSDnd\setminus \prn{D'\cup D}$ have pairwise orthogonal vector elements (\ie{} pairwise disjoint vector support). Then $\avgprob[\Dany]{D'}\leqMost[n] \avgprob[\Dany]{D}$.
\end{restatable}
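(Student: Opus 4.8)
The plan is to follow the template of \citet{turner_optimal_2020}'s proof of their Theorem~6.13, but to feed the setup into the quantitative optimality-probability lemma \cref{lem:orbit-opt-prob-simple} instead of its $n=1$ predecessor, so that the conclusion sharpens from $\geqMost[1]$ to $\geqMost[n]$. The first step is to reduce the average-optimality probability to a linear-functional optimality probability. For a fixed reward function $R$ on a finite rewardless {\mdp} a stationary deterministic average-optimal policy always exists, and, as in \citet{turner_optimal_2020}, one checks that there is a policy $\pi\in\average[R]$ with $\dbf^{\pi,s}\in X$ exactly when $X$ contains an {\rsd} maximizing $\dbf^\top R$ over all of $\RSD$, i.e.\ exactly when $\max_{\dbf\in X}\dbf^\top R\geq \max_{\dbf\in\RSD}\dbf^\top R$; since domination never changes which value $\max_{\dbf\in X}\dbf^\top R$ attains, the max over $\RSD$ may be replaced by the max over $\RSDnd=\ND{\RSD}$. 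Integrating this indicator against $\Dany$ gives $\avgprob[\Dany]{X}=\phelper{X\geq\RSD}[\Dany]$ for every $X\subseteq\RSD$, so it suffices to prove $\phelper{D\geq\RSD}[\Dany]\geqMost[n]\phelper{D'\geq\RSD}[\Dany]$.

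I would obtain this from \cref{lem:orbit-opt-prob-simple} applied with $A\defeq D'$, $B\defeq D$, $C\defeq\RSD$, and a suitable $Z$ satisfying $\ND{\RSD}\subseteq Z\subseteq\RSD$. The hypothesis that $D$ contains $n$ copies of $D'$ (\cref{def:copies}) via involutions $\phi_i$ also makes $D$ contain $n$ copies of $\ND{D'}$ via the \emph{same} $\phi_i$, because coordinate permutations commute with $\ND{\cdot}$: $\phi_i\cdot\ND{D'}=\ND{\phi_i\cdot D'}\subseteq D$, and for $j\neq i$, $\phi_i$ fixes $\phi_j\cdot D'$ setwise and hence fixes $\ND{\phi_j\cdot D'}$ setwise. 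What remains is \cref{lem:orbit-opt-prob-simple}'s symmetry requirement $\phi_i\cdot(Z\setminus B_\text{extra})=Z\setminus B_\text{extra}$, and this is exactly where the ``pairwise disjoint vector support'' assumption enters: after re-choosing the $\phi_i$ to act only on the coordinates in $\optSupp(D'\cup D)$---which is harmless, since the copy structure constrains their action only there---every element of $\RSDnd\setminus(D'\cup D)$ is supported off $\optSupp(D'\cup D)$ and is therefore fixed by each $\phi_i$; combined with the $\phi_i$ permuting the copy pieces $\phi_j\cdot\ND{D'}$ among themselves, this pins down a $Z$ whose relevant part $Z\setminus B_\text{extra}$ is $\phi_i$-invariant. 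Then \cref{lem:orbit-opt-prob-simple} gives $\phelper{D\geq\RSD}[\Dany]\geqMost[n]\phelper{D'\geq\RSD}[\Dany]$, equivalently $\avgprob[\Dany]{D'}\leqMost[n]\avgprob[\Dany]{D}$.

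The step I expect to be the main obstacle is this last one: choosing $Z$ correctly and verifying the symmetry condition, because it requires carefully disentangling non-domination within the small copy sets $\phi_i\cdot D'$ from non-domination within the full set $\RSD$, and because the witnessing involutions $\phi_i$ need not be symmetries of the {\mdp} at all---they merely shuffle the copies inside $D$, so they may distort $\RSD$ and $\RSDnd$ away from the pieces that the disjoint-support hypothesis protects. Once the hypotheses of \cref{lem:orbit-opt-prob-simple} are confirmed, the remainder is a mechanical assembly of the already-established reductions and inequalities.
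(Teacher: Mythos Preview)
Your proposal is correct and follows essentially the same route as the paper's proof. The paper carries out exactly the steps you outline: it reduces $\avgprob[\Dany]{X}$ to $\phelper{X\geq\RSD}[\Dany]$, restricts the given involutions $\phi_i$ to act as the identity off $S\defeq\{s'\in\St\mid \max_{\dbf\in D'\cup D}\dbf^\top\unitvec[s']>0\}$ (your ``re-choosing''), takes $Z\defeq(\RSDnd\setminus(D'\cup D))\cup D'\cup D$, checks that $Z\setminus B_{\text{extra}}$ is fixed by each restricted involution using the disjoint-support hypothesis, and then applies \cref{lem:orbit-opt-prob-simple}---so the step you flagged as the main obstacle is resolved in precisely the way you anticipated.
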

\begin{proof}
Let $D_i\defeq\phi_i\cdot D'$, where $D_i\subseteq D$ by assumption.\\
Let $S\defeq \set{s'\in \St \mid \max_{\dbf \in D'\cup D} \dbf^\top \unitvec[s']>0}$.\\
Define
\begin{equation}
    \phi_i'(s')\defeq
    \begin{cases}
    \phi_i(s') & \text{ if } s'\in S\label{eq:quant-phi-prime-def}\\
    s' &\text{ else}.
    \end{cases}
\end{equation}

Since $\phi_i$ is an involution, $\phi_i'$ is also an involution. Furthermore, $\phi_i' \cdot D'=D_i$, $\phi_i' \cdot D_i=D'$, and $\phi_i'\cdot D_j=D_j$ for $j\neq i$ because we assumed that these equalities hold for $\phi_i$, and $D',D_i,D_j\subseteq D'\cup D$ and so the vectors of these sets have support contained in $S$.

Let $D^*\defeq D'\cup_{i=1}^n D_i\cup \prn{\RSDnd\setminus \prn{D'\cup D}}$.
By an argument mirroring that in the proof of \eref{theorem}{6.13}{rsdIC} in \citet{turner_optimal_2020} and using the fact that $\phi_i'\cdot D_j=D_j$ for all $i\neq j$, $\phi_i'\cdot D^*=D^*$. Consider $Z \defeq \prn{\RSDnd\setminus (D'\cup D)}\cup D' \cup D$. First, $Z\subseteq \RSD$ by definition. Second, $\RSDnd=\RSDnd \setminus (D' \cup D) \cup \prn{\RSDnd \cap D'} \cup \prn{\RSDnd \cap D}\subseteq Z$. Note that $D^* = Z\setminus (D\setminus \cup_{i=1}^n D_i)$.
\begin{align}
\avgprob[\Dany]{D'}&= \phelper{D'\geq \RSD}[\Dany]\\
&\leqMost[n] \phelper{D\geq \RSD}[\Dany]\label{eq:quant-leq-most-1}\\
&= \avgprob[\Dany]{D}.
\end{align}
Since $\phi_i'\cdot D'\subseteq D$ and $\ND{D'}\subseteq D'$, $\phi_i'\cdot \ND{D'}\subseteq D$ and so $D$ contains $n$ copies of $\ND{D'}$ via involutions $\phi_i'$. Then \cref{eq:quant-leq-most-1} holds by applying \cref{lem:orbit-opt-prob-simple} with $A\defeq D'$, $B_i\defeq D_i$ for all $i=1,\ldots,n$, $B\defeq D, C\defeq \RSD$,  $Z$ as defined above, and involutions $\phi_i'$ which satisfy $\phi_i'\cdot \prn{Z\setminus (B \setminus \cup_{i=1}^n B_i)}=\phi_i'\cdot D^*=D^*=Z\setminus (B \setminus \cup_{i=1}^n B_i)$.
\end{proof}

\begin{restatable}[Quantitatively, average-optimal policies tend not to end up in any given {\stateEnd}]{cor}{QuantOneCyc}\label{cor:quant-one-cyc}
Let $D'\defeq \set{\unitvec[s_1'],\ldots,\unitvec[s_k']}, D_r\defeq \set{\unitvec[s_1],\ldots,\unitvec[s_{n\cdot k}]}\subseteq \RSD$ be disjoint, for $n\geq 1, k\geq 1$.
Then $\avgprob[\Dany]{D'}\leqMost[n] \avgprob[\Dany]{\RSD\setminus D'}$.
\end{restatable}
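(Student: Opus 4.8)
The plan is to reduce this corollary to \cref{thm:rsdIC-quant} by instantiating that theorem with $D\defeq\RSD\setminus D'$, so that its conclusion $\avgprob[\Dany]{D'}\leqMost[n]\avgprob[\Dany]{D}$ becomes exactly the claimed inequality $\avgprob[\Dany]{D'}\leqMost[n]\avgprob[\Dany]{\RSD\setminus D'}$. With this choice, two things must be verified: that $D$ contains $n$ copies of $D'$ in the sense of \cref{def:copies}, and that the sets $D'\cup D$ and $\RSDnd\setminus(D'\cup D)$ have pairwise disjoint vector support.

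For the copy condition, I would use the disjoint set $D_r$ supplied by the hypothesis to make room for the $n$ copies. Partition the $nk$ distinct basis vectors of $D_r$ into $n$ size-$k$ blocks $B_i\defeq\set{\unitvec[s_{(i-1)k+1}],\ldots,\unitvec[s_{ik}]}$ for $i=1,\ldots,n$. Because $D'\cap D_r=\varnothing$ and the states indexing $D'$ (respectively $D_r$) are pairwise distinct, the $2k$ states $s_1',\ldots,s_k',s_{(i-1)k+1},\ldots,s_{ik}$ are all distinct, so the product of the $k$ transpositions $s_\ell'\leftrightarrow s_{(i-1)k+\ell}$ (for $\ell=1,\ldots,k$) is a well-defined involution $\phi_i\in\genSym$ that fixes every other state. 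Then $\phi_i\cdot D'=B_i$, and $B_i\subseteq D_r\subseteq\RSD$ with $B_i\cap D'\subseteq D_r\cap D'=\varnothing$, hence $\phi_i\cdot D'=B_i\subseteq\RSD\setminus D'=D$. Moreover, for $j\neq i$ the states indexing $B_j$ lie neither in $D'$ nor in the index block of $B_i$, so $\phi_i$ fixes each of them and $\phi_i\cdot B_j=B_j$. This is precisely \cref{def:copies}, so $D$ contains $n$ copies of $D'$.

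For the orthogonality condition, note that $D'\cup D=D'\cup(\RSD\setminus D')=\RSD$, and $\RSDnd\subseteq\RSD$ by the definition of $\RSDnd$, so $\RSDnd\setminus(D'\cup D)=\varnothing$; the requirement that $D'\cup D$ and an empty set have pairwise disjoint vector support then holds vacuously. Both hypotheses of \cref{thm:rsdIC-quant} are satisfied, and invoking it yields $\avgprob[\Dany]{D'}\leqMost[n]\avgprob[\Dany]{D}=\avgprob[\Dany]{\RSD\setminus D'}$. I expect the only nontrivial point to be the combinatorial bookkeeping that defines the involutions $\phi_i$ so that simultaneously $\phi_i\cdot D'\subseteq D$ and $\phi_i\cdot B_j=B_j$ for $j\neq i$; all of the analytic work is already contained in \cref{thm:rsdIC-quant}.
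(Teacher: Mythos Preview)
Your proposal is correct and follows essentially the same approach as the paper: set $D\defeq\RSD\setminus D'$, partition $D_r$ into $n$ blocks of size $k$, use the product of the $k$ transpositions swapping $D'$ with the $i$-th block as the involution $\phi_i$, observe that $D'\cup D=\RSD$ so the orthogonality hypothesis of \cref{thm:rsdIC-quant} is vacuous, and apply that theorem. The paper's proof is identical up to notation (it calls your $B_i$ by $D_i$).
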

\begin{proof}
For each $i\in\set{1,\ldots,n}$, let
\begin{align*}
    \phi_i & \defeq (s_1' \,\,\, s_{(i-1)\cdot k +1})\cdots (s_k' \,\,\, s_{(i-1)\cdot k + k}),\\
    D_i    & \defeq \set{\unitvec[s_{(i-1)\cdot k + 1}],\ldots, \unitvec[s_{(i-1)\cdot k + k}]},\\
    D      & \defeq \RSD\setminus D'.
\end{align*}
Each $D_i\subseteq D_r\subseteq \RSD\setminus D'$ by disjointness of $D'$ and $D_r$.

$D$ contains $n$ copies of $D'$ via involutions $\phi_1,\ldots,\phi_n$. $D'\cup D=\RSD$ and $\RSDnd\setminus\RSD=\emptyset$ trivially have pairwise orthogonal vector elements.

Apply \cref{thm:rsdIC-quant} to conclude that
\begin{equation*}
    \avgprob[\Dany]{D'}\leqMost[n] \avgprob[\Dany]{\RSD\setminus D'}.
\end{equation*}
\end{proof}

Let $A\defeq \set{\unitvec[1],\unitvec[2]}, B\def \set{\unitvec[3],\unitvec[4],\unitvec[5]}\subseteq \reals^5$, $C\defeq A\cup B$. \Cref{conj:fractional} conjectures that \eg{}
\begin{equation*}
    \phelper{B\geq C}\geqMost[\frac{3}{2}] \phelper{A\geq C}.
\end{equation*}

\begin{restatable}[Fractional quantitative optimality probability superiority lemma]{conjSec}{orbOptProbQuantFrac}\label{conj:fractional}
Let $A$, $B$, $C\subsetneq \genVS$ be finite. If $A=\bigcup_{j=1}^m A_j$ and $\bigcup_{i=1}^n B_i\subseteq B$ such that for each $A_j$, $B$ contains $n$ copies ($B_1,\ldots,B_n$) of $A_j$ via involutions $\phi_{ji}$ which \emph{also} fix $\phi_{ji}\cdot A_{j'}=A_{j'}$ for $j'\neq j$, then
\begin{equation*}
    \phelper{B\geq C}[\Dany]\geqMost[\frac{n}{m}] \phelper{A\geq C}[\Dany].
\end{equation*}
\end{restatable}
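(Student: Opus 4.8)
The plan is to peel off one piece $A_j$ at a time via \cref{lem:orbit-opt-prob-simple}, then stitch the $m$ resulting orbit-level inequalities together. First set $\Delta\defeq\bigcup_{i=1}^n B_i\subseteq B$. Since $\phelper{X\geq C}[\Dany]$ is monotone increasing in $X$ (enlarging $X$ only enlarges the event $\set{\max_{\x\in X}\x^\top\rf\geq\max_{\cv\in C}\cv^\top\rf}$), we have $\phelper{B\geq C}[\Dany]\geq\phelper{\Delta\geq C}[\Dany]$ for all $\Dany$, so by \cref{lem:transit-geq-strong} it suffices to prove $\phelper{\Delta\geq C}[\Dany]\geqMost[\frac{n}{m}]\phelper{A\geq C}[\Dany]$. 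Now fix $j\in\set{1,\ldots,m}$. Because $\phi_{ji}\cdot\ND{A_j}\subseteq\phi_{ji}\cdot A_j=B_i\subseteq\Delta$ and $\phi_{ji}\cdot B_{i'}=B_{i'}$ for $i'\neq i$, the set $\Delta$ contains $n$ copies of $\ND{A_j}$ via the involutions $\phi_{j1},\ldots,\phi_{jn}$. Applying \cref{lem:orbit-opt-prob-simple} with $A\mapsto A_j$, $B\mapsto\Delta$, $C\mapsto C$ (taking $Z\defeq C$, so $B_\text{extra}=\Delta\setminus\bigcup_i\phi_{ji}\cdot\ND{A_j}$, and using that each $\phi_{ji}$ fixes $C\setminus B_\text{extra}$ --- see the caveat below) yields $\phelper{A_j\geq C}[\Dany]\leqMost[n]\phelper{\Delta\geq C}[\Dany]$ for each $j$.

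\textbf{Combining the $m$ per-piece inequalities.} Fix the orbit $\genSym\cdot\Dany$ and write $f_X(\Dany')\defeq\phelper{X\geq C}[\Dany']$. We now know, for every $j$, that in this orbit at least $n$ times as many $\Dany'$ satisfy $f_\Delta(\Dany')>f_{A_j}(\Dany')$ as satisfy $f_\Delta(\Dany')<f_{A_j}(\Dany')$. The combining step is a counting argument in the spirit of \cref{thm:retarget-decision-n} and the disjointness argument inside \cref{lem:general-orbit-simple-nonunif}: for each orbit element $\Dany^A$ with $f_A(\Dany^A)>f_\Delta(\Dany^A)$, choose a ``witness'' index $j(\Dany^A)$ --- a $j$ for which $A_j$ still carries enough of $A$'s advantage over $\Delta$ at $\Dany^A$ --- and send $\Dany^A$ to the $n$ parameters $\phi_{j(\Dany^A),1}\cdot\Dany^A,\ldots,\phi_{j(\Dany^A),n}\cdot\Dany^A$. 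Invariance of $f_X$ under joint permutation by the $\phi_{ji}$ when $\phi_{ji}\cdot C=C$ (which follows from \cref{lem:card-EU-invar}, \cref{lem:hide-second}, \cref{lem:invar-expect} exactly as in the proof of \cref{prop:rationalities}'s \cref{item:rational}), together with the identities $\phi_{ji}\cdot A=B_i\cup\bigcup_{j'\neq j}A_{j'}$ and $\phi_{ji}\cdot\Delta=A_j\cup\bigcup_{i'\neq i}B_{i'}$, should let one verify (via an inequality chain as in \cref{lem:general-orbit-simple-nonunif}) that each image satisfies $f_\Delta>f_A$; the non-interference conditions $\phi_{ji}\cdot B_{i'}=B_{i'}$ ($i'\neq i$) and $\phi_{ji}\cdot A_{j'}=A_{j'}$ ($j'\neq j$) give that the $n$ images of a single $\Dany^A$ are pairwise distinct. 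Crucially, a single $\Delta$-favoring target is produced from at most $m$ different sources $\Dany^A$ (essentially the only data recoverable from $\phi_{ji}\cdot\Dany^A$ is $i$ and $\Dany^A$ up to which of the $m$ pieces was swapped out), so the map is $n$-to-(at most $m$); counting then gives $\abs{\set{\Dany'\in\genSym\cdot\Dany\mid f_\Delta(\Dany')>f_A(\Dany')}}\geq\frac{n}{m}\abs{\set{\Dany'\in\genSym\cdot\Dany\mid f_\Delta(\Dany')<f_A(\Dany')}}$, i.e.\ $\phelper{\Delta\geq C}[\Dany]\geqMost[\frac{n}{m}]\phelper{A\geq C}[\Dany]$, which with the reduction above proves the claim.

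\textbf{Main obstacle and caveats.} The delicate part is the choice of the witness index $j(\Dany^A)$ and the proof that the images are genuinely $\Delta$-favoring: unlike the $m=1$ case, the event $\set{\max_A\geq\max_C}$ is a union $\bigcup_j\set{\max_{A_j}\geq\max_C}$, and a naive union bound is too lossy --- it loses a factor $m$ at the level of expectations, but $\geqMost$ is not additive, so that bound does not transfer to orbit counts. Indeed, one cannot in general conclude from $f_A(\Dany^A)>f_\Delta(\Dany^A)$ that $f_{A_j}(\Dany^A)>f_\Delta(\Dany^A)$ for some fixed $j$ (probabilities of disjoint events can each be small yet sum large), so the ``which $A_j$ to swap'' decision genuinely needs a per-$\Dany^A$ argument that keeps track of how the advantage is distributed. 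It is plausible that, as in \cref{thm:rsdIC-quant}, the conjecture requires hypotheses beyond those currently stated --- e.g.\ the $A_j$ pairwise disjoint, $\phi_{ji}\cdot C=C$, or orthogonality/non-interference conditions between $A$, $B$, $C$ on the relevant vector supports --- in order to make the witness selection work and to pin the overcounting multiplicity to exactly $m$; identifying the minimal such hypotheses (perhaps by first routing through the additive quantity $\rf\mapsto\fracOpt{A\mid C,\rf}=\sum_j\fracOpt{A_j\mid C,\rf}$ of \cref{prop:rationalities}'s \cref{item:frac-rational} and then relating it back to $\phelper{\cdot\geq C}$) is exactly what makes this a conjecture rather than a lemma.
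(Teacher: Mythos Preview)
The statement you are addressing is explicitly a \emph{conjecture} in the paper (it is declared with the \texttt{conjSec} theorem environment and carries no proof). The paper offers no argument beyond the single remark that ``any proof of the conjecture should generalize \cref{lem:general-orbit-simple-nonunif} to the fractional set copy containment case.'' There is thus no paper proof to compare against.

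Your proposal is in the spirit of that remark: you invoke \cref{lem:orbit-opt-prob-simple} piecewise and then try to splice the $m$ per-piece orbit inequalities via a counting argument \`a la \cref{thm:retarget-decision-n}/\cref{lem:general-orbit-simple-nonunif}. You are also candid that the splice is where the proof breaks: the witness choice $j(\Dany^A)$ is not forced, one cannot deduce $f_{A_j}(\Dany^A)>f_\Delta(\Dany^A)$ from $f_A(\Dany^A)>f_\Delta(\Dany^A)$, and the ``at most $m$ preimages'' claim is asserted rather than proved. Those are exactly the missing ingredients the paper flags by leaving the statement as a conjecture. Two smaller points worth noting: your first-step invocation of \cref{lem:orbit-opt-prob-simple} already assumes $\phi_{ji}\cdot(Z\setminus B_\text{extra})=Z\setminus B_\text{extra}$ with $Z=C$, which is not among the conjecture's hypotheses (you acknowledge this as a caveat); and the per-piece conclusions $\phelper{A_j\geq C}[\Dany]\leqMost[n]\phelper{\Delta\geq C}[\Dany]$ do not by themselves combine to the fractional bound, since $\geqMost$ is not additive across $j$. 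Your closing diagnosis---that extra structural hypotheses (disjoint $A_j$, $\phi_{ji}\cdot C=C$, orthogonality) may be needed, and that the right generalization is at the level of \cref{lem:general-orbit-simple-nonunif}---matches the paper's own assessment.
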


We suspect that any proof of the conjecture should generalize \cref{lem:general-orbit-simple-nonunif} to the fractional set copy containment case.
\end{appendices}
\end{document}